\newtheorem{invariant}{Invariant}
\newtheorem{fact}{Fact}
\newtheorem{lemma}{Lemma}
\newtheorem{theorem}{Theorem}
\newtheorem{corollary}{Corollary}
\newtheorem{assumption}{Assumption}
\def\dis{\textbf{in\_disagr\_region}}
\def\P{\mathbb{P}}
\def\calF{\mathcal{F}}
\def\calO{\mathcal{O}}
\def\calY{\mathcal{Y}}
\def\calX{\mathcal{X}}
\def\calD{\mathcal{D}}
\def\calH{\mathcal{H}}
\def\calS{\mathcal{S}}
\def\calA{\mathcal{A}}
\def\err{\text{err}}
\def\N{\mathbb{N}}
\def\DIS{\text{DIS}}
\def\B{\text{B}}
\def\disk{\calA_k}
\def\empdisk{\calA_k'}
\def\argmin{\text{argmin}}
\def\learn{\text{CONS-LEARN}}
\title{Active Learning from Weak and Strong Labelers}
\author[1]{Chicheng Zhang\thanks{chz038@eng.ucsd.edu}}
\author[1]{Kamalika Chaudhuri\thanks{kamalika@cs.ucsd.edu}}
\affil[1]{University of California, San Diego}
\begin{document}
\maketitle

\begin{abstract}

An active learner is given a hypothesis class, a large set of unlabeled examples and the ability to interactively query labels to an oracle of a subset of these examples; the goal of the learner is to learn a hypothesis in the class that fits the data well by making as few label queries as possible.

This work addresses active learning with labels obtained from strong and weak labelers, where in addition to the standard active learning setting, we have an extra weak labeler which may occasionally provide incorrect labels. An example is learning to classify medical images where either expensive labels may be obtained from a physician (oracle or strong labeler), or cheaper but occasionally incorrect labels may be obtained from a medical resident (weak labeler). Our goal is to learn a classifier with low error on data labeled by the oracle, while using the weak labeler to reduce the number of label queries made to this labeler. We provide an active learning algorithm for this setting, establish its statistical consistency, and analyze its label complexity to characterize when it can provide label savings over using the strong labeler alone.

\end{abstract}

\section{Introduction}

An active learner is given a hypothesis class, a large set of unlabeled examples and the ability to interactively make label queries to an oracle on a subset of these examples; the goal of the learner is to learn a hypothesis in the class that fits the data well by making as few oracle queries as possible. 

As labeling examples is a tedious task for any one person, many applications of active learning involve synthesizing labels from multiple experts who may have slightly different labeling patterns. While a body of recent empirical work~\cite{Perona, Dy11, Dy12, Grauman1, Grauman2, Carbonell} has developed methods for combining labels from multiple experts, little is known on the theory of actively learning with labels from multiple annotators. For example, what kind of assumptions are needed for methods that use labels from multiple sources to work, when these methods are statistically consistent, and when they can yield benefits over plain active learning are all open questions.  

This work addresses these questions in the context of active learning from strong and weak labelers. Specifically, in addition to unlabeled data and the usual labeling oracle in standard active learning, we have an extra weak labeler. The labeling oracle is a {\em{gold standard}} -- an expert on the problem domain -- and it provides high quality but expensive labels. The weak labeler is cheap, but may provide incorrect labels on some inputs. An example is learning to classify medical images where either expensive labels may be obtained from a physician (oracle), or cheaper but occasionally incorrect labels may be obtained from a medical resident (weak labeler). Our goal is to learn a classifier in a hypothesis class whose error with respect to the data labeled by the oracle is low, while exploiting the weak labeler to reduce the number of queries made to this oracle. Observe that in our model the weak labeler can be incorrect anywhere, and does not necessarily provide uniformly noisy labels everywhere, as was assumed by some previous works~\cite{CKW05, SCS15}.

A plausible approach in this framework is to learn a {\em{difference classifier}} to predict where the weak labeler differs from the oracle, and then use a standard active learning algorithm which queries the weak labeler when this difference classifier predicts agreement. Our first key observation is that this approach is {\em{statistically inconsistent}}; false negative errors (that predict no difference when $O$ and $W$ differ) lead to biased annotation for the target classification task.  We address this problem by learning instead a {\em{cost-sensitive difference classifier}} that ensures that false negative errors rarely occur. Our second key observation is that as existing active learning algorithms usually query labels in localized regions of space, it is sufficient to train the difference classifier restricted to this region and still maintain consistency. This process leads to significant label savings. Combining these two ideas, we get an algorithm that is provably statistically consistent and that works under the assumption that there is a good difference classifier with low false negative error. 

We analyze the label complexity of our algorithm as measured by the number of label requests to the labeling oracle. In general we cannot expect any consistent algorithm to provide label savings under all circumstances, and indeed our worst case asymptotic label complexity is the same as that of active learning using the oracle alone.  Our analysis characterizes when we can achieve label savings, and we show that this happens for example if the weak labeler agrees with the labeling oracle for some fraction of the examples close to the decision boundary. Moreover, when the target classification task is agnostic, the number of labels required to learn the difference classifier is of a lower order than the number of labels required for active learning; thus in realistic cases, learning the difference classifier adds only a small overhead to the total label requirement, and overall we get label savings over using the oracle alone.

\paragraph{Related Work.} There has been a considerable amount of empirical work on active learning where multiple annotators can provide labels for the unlabeled examples. One line of work assumes a generative model for each annotator's labels. The learning algorithm learns the parameters of the individual labelers, and uses them to decide which labeler to query for each example. \cite{Dy11,Dy12,Ding12} consider separate logistic regression models for each annotator, while \cite{weld14, weld15} assume that each annotator's labels are corrupted with a different amount of random classification noise. A second line of work~\cite{Carbonell, ipierotis13} that includes Pro-Active Learning, assumes that each labeler is an expert over an unknown subset of categories, and uses data to measure the class-wise expertise in order to optimally place label queries.  In general, it is not known under what conditions these algorithms are statistically consistent, particularly when the modeling assumptions do not strictly hold, and under what conditions they provide label savings over regular active learning.

\cite{UBS12}, the first theoretical work to consider this problem, consider a model where the weak labeler is more likely to provide incorrect labels in heterogeneous regions of space where similar examples have different labels. Their formalization is orthogonal to ours -- while theirs is more natural in a non-parametric setting, ours is more natural for fitting classifiers in a hypothesis class. In a NIPS 2014 Workshop paper, \cite{CB14} have also considered learning from strong and weak labelers; unlike ours, their work is in the online selective sampling setting, and applies only to linear classifiers and robust regression. \cite{Dekel12} study learning from multiple teachers in the online selective sampling setting in a model where different labelers have different regions of expertise.

Finally, there is a large body of theoretical work~\cite{BBL09, D05, DHM07, H07, ZC14, BL13, BHLZ10} on learning a binary classifier based on interactive label queries made {\em{to a single labeler}}. In the realizable case,~\cite{Now09, D05} show that a generalization of binary search provides an exponential improvement in label complexity over passive learning. The problem is more challenging, however, in the more realistic agnostic case, where such approaches lead to inconsistency. The two styles of algorithms for agnostic active learning are disagreement-based active learning (DBAL)~\cite{BBL09, DHM07, H07, BHLZ10} and the more recent margin-based or confidence-based active learning~\cite{BL13, ZC14}. Our algorithm builds on recent work in DBAL~\cite{BHLZ10, H10}. 

\section{Preliminaries}

\paragraph{The Model.}  We begin with a general framework for actively learning from weak and strong labelers. In the standard active learning setting, we are given unlabelled data drawn from a distribution $U$ over an input space $\calX$, a label space $\calY = \{ -1, 1\}$, a hypothesis class $\calH$, and a labeling oracle $O$ to which we can make interactive queries.

In our setting, we additionally have access to a weak labeling oracle $W$ which we can query interactively. Querying $W$ is significantly cheaper than querying $O$; however, querying $W$ generates a label $y_W$ drawn from a conditional distribution $\P_W(y_W|x)$ which is not the same as the conditional distribution $\P_O(y_O|x)$ of $O$. 

Let $D$ be the data distribution over labelled examples such that: $\P_D(x, y) = \P_U(x) \P_O(y | x)$. Our goal is to learn a classifier $h$ in the hypothesis class $\calH$ such that with probability $\geq 1 - \delta$ over the sample, we have: $\P_D(h(x) \neq y) \leq \min_{h' \in \calH} \P_D(h'(x) \neq y) + \epsilon$, while making as few (interactive) queries to $O$ as possible. 

Observe that in this model $W$ may disagree with the oracle $O$ {\em{anywhere}} in the input space; this is unlike previous frameworks~\cite{CKW05, SCS15} where labels assigned by the weak labeler are corrupted by random classification noise with a higher variance than the labeling oracle. We believe this feature makes our model more realistic.
 
Second, unlike~\cite{UBS12}, mistakes made by the weak labeler {\em{do not have to be close to the decision boundary}}. This keeps the model general and simple, and allows greater flexibility to weak labelers. Our analysis shows that if $W$ is largely incorrect close to the decision boundary, then our algorithm will automatically make more queries to $O$ in its later stages. 

Finally note that $O$ is allowed to be {\em{non-realizable}} with respect to the target hypothesis class $\calH$.

\paragraph{Background on Active Learning Algorithms.} The standard active learning setting is very similar to ours, the only difference being that we have access to the weak oracle $W$. There has been a long line of work on active learning~\cite{BBL09, CAL94, D05, H07, BL13, DHM07, BHLZ10, ZC14}. Our algorithms are based on a style called {\em{disagreement-based active learning (DBAL).}} The main idea is as follows. Based on the examples seen so far, the algorithm maintains a candidate set $V_t$ of classifiers in $\calH$ that is guaranteed with high probability to contain $h^*$, the classifier in $\calH$ with the lowest error. Given a randomly drawn unlabeled example $x_t$, if all classifiers in $V_t$ agree on its label, then this label is inferred; observe that with high probability, this inferred label is $h^*(x_t)$. Otherwise, $x_t$ is said to be in the {\em{disagreement region}} of $V_t$, and the algorithm queries $O$ for its label. $V_t$ is updated based on $x_t$ and its label, and algorithm continues.

Recent works in DBAL~\cite{DHM07, BHLZ10} have observed that it is possible to determine if an $x_t$ is in the disagreement region of $V_t$ without explicitly maintaining $V_t$. Instead, a labelled dataset $S_t$ is maintained; the labels of the examples in $S_t$ are obtained by either querying the oracle or direct inference. To determine whether an $x_t$ lies in the disagreement region of $V_t$, two constrained ERM procedures are performed; empirical risk is minimized over $S_t$ while constraining the classifier to output the label of $x_t$ as $1$ and $-1$ respectively. If these two classifiers have similar training errors, then $x_t$ lies in the disagreement region of $V_t$; otherwise the algorithm infers a label for $x_t$ that agrees with the label assigned by $h^*$.



\paragraph{More Definitions and Notation.} The error of a classifier $h$ under a labelled data distribution $Q$ is defined as: $\err_Q(h) = \P_{(x, y) \sim Q}(h(x) \neq y)$; we use the notation $\err(h, S)$ to denote its empirical error on a labelled data set $S$. We use the notation $h^*$ to denote the classifier with the lowest error under $D$ and $\nu$ to denote its error $\err_D(h^*)$, where $D$ is the target labelled data distribution.  

Our active learning algorithm implicitly maintains a $(1 - \delta)$-{\em{confidence set}} for $h^*$ throughout the algorithm. Given a set $S$ of labelled examples, a set of classifiers $V(S) \subseteq \calH$ is said to be a $(1 - \delta)$-confidence set for $h^*$ with respect to $S$ if $h^* \in V$ with probability $\geq 1 - \delta$ over $S$. 

The disagreement between two classifiers $h_1$ and $h_2$ under an unlabelled data distribution $U$, denoted by $\rho_U(h_1, h_2)$, is $\P_{x \sim U} (h_1(x) \neq h_2(x))$. Observe that the disagreements under $U$ form a pseudometric over $\calH$. We use $\B_U(h, r)$ to denote a ball of radius $r$ centered around $h$ in this metric. The {\em{disagreement region}} of a set $V$ of classifiers, denoted by $\DIS(V)$, is the set of all examples $x \in \calX$ such that there exist two classifiers $h_1$ and $h_2$ in $V$ for which $h_1(x) \neq h_2(x)$.

\section{Algorithm}

 Our main algorithm is a standard single-annotator DBAL algorithm with a major modification: when the DBAL algorithm makes a label query, we use an extra sub-routine to decide whether this query should be made to the oracle or the weak labeler, and make it accordingly. How do we make this decision? We try to predict if weak labeler differs from the oracle on this example; if so, query the oracle, otherwise, query the weak labeler.

\paragraph{Key Idea 1: Cost Sensitive Difference Classifier.} How do we predict if the weak labeler differs from the oracle? A plausible approach is to learn a {\em{difference classifier}} $h^{df}$ in a hypothesis class $\calH^{df}$ to determine if there is a difference. Our first key observation is when the region where $O$ and $W$ differ cannot be perfectly modeled by $\calH^{df}$, the resulting active learning algorithm is {\em{statistically inconsistent}}. Any false negative errors (that is, incorrectly predicting no difference) made by difference classifier leads to biased annotation for the target classification task, which in turn leads to inconsistency. We address this problem by instead learning a {\em{cost-sensitive difference classifier}} and we assume that a classifier with low false negative error exists in $\calH^{df}$. While training, we constrain the false negative error of the difference classifier to be low, and minimize the number of predicted positives (or disagreements between $W$ and $O$) subject to this constraint. This ensures that the annotated data used by the active learning algorithm has diminishing bias, thus ensuring consistency.

\paragraph{Key Idea 2: Localized Difference Classifier Training.} Unfortunately, even with cost-sensitive training, directly learning a difference classifier accurately is expensive. If $d'$ is the VC-dimension of the difference hypothesis class $\calH^{df}$, to learn a target classifier to excess error $\epsilon$, we need a difference classifier with false negative error $O(\epsilon)$, which, from standard generalization theory, requires $\tilde{O}(d'/\epsilon)$ labels~\cite{BB05, S14}! Our second key observation is that we can save on labels by training the difference classifier in a localized manner -- because the DBAL algorithm that builds the target classifier {\em{only makes label queries}} in the disagreement region of the current confidence set for $h^*$. Therefore we train the difference classifier only on this region and still maintain consistency. Additionally this provides label savings because while training the target classifier to excess error $\epsilon$, we need to train a difference classifier with only $\tilde{O}(d' \phi_k/\epsilon)$ labels where $\phi_k$ is the probability mass of this disagreement region. The localized training process leads to an additional technical challenge: as the confidence set for $h^*$ is updated, its disagreement region changes. We address this through an epoch-based DBAL algorithm, where the confidence set is updated and a fresh difference classifier is trained in each epoch.

\paragraph{Main Algorithm.} Our main algorithm (Algorithm~\ref{alg:main}) combines these two key ideas, and like~\cite{BHLZ10}, implicitly maintains the $(1 - \delta)$-confidence set for $h^*$ by through a labeled dataset $\hat{S}_k$. In epoch $k$, the target excess error is $\epsilon_k \approx \frac{1}{2^k}$, and the goal of Algorithm~\ref{alg:main} is to generate a labeled dataset $\hat{S}_k$ that implicitly represents a $(1 - \delta_k)$-confidence set on $h^*$. Additionally, $\hat{S}_k$ has the property that the empirical risk minimizer over it has excess error $\leq \epsilon_k$. 

A naive way to generate such an $\hat{S}_k$ is by drawing $\tilde{O}(d/\epsilon_k^2)$ labeled examples, where $d$ is the VC dimension of $\calH$. Our goal, however, is to generate $\hat{S}_k$ using a much smaller number of label queries, which is accomplished by Algorithm~\ref{alg:adaptive}. This is done in two ways. First, like standard DBAL, we infer the label of any $x$ that lies {\em{outside}} the disagreement region of the current confidence set for $h^*$. Algorithm~\ref{alg:disagree} identifies whether an $x$ lies in this region. Second, for any $x$ in the disagreement region, we determine whether $O$ and $W$ agree on $x$ using a difference classifier; if there is agreement, we query $W$, else we query $O$. The difference classifier used to determine agreement is retrained in the beginning of each epoch by Algorithm~\ref{alg:traindc}, which ensures that the annotation has low bias. 


The algorithms use a constrained ERM procedure $\learn$. Given a hypothesis class $H$, a labeled dataset $S$ and a set of constraining examples $C$, $\learn_H(C,S)$ returns a classifier in $H$ that minimizes the empirical error on $S$ subject to $h(x_i) = y_i$ for each $(x_i, y_i) \in C$. 

\paragraph{Identifying the Disagreement Region.} Algorithm~\ref{alg:disagree} identifies if an unlabeled example $x$ lies in the disagreement region of the current $(1 - \delta)$-confidence set for $h^*$; recall that this confidence set is implicitly maintained through $\hat{S}_k$. The identification is based on two ERM queries. Let $\hat{h}$ be the empirical risk minimizer on the current labeled dataset $\hat{S}_{k-1}$, and $\hat{h}'$ be the empirical risk minimizer on $\hat{S}_{k-1}$ under the constraint that $\hat{h}'(x) = -\hat{h}(x)$. If the training errors of $\hat{h}$ and $\hat{h}'$ are very different, then, all classifiers with training error close to that of $\hat{h}$ assign the same label to $x$, and $x$ lies outside the current disagreement region.


\paragraph{Training the Difference Classifier.} Algorithm~\ref{alg:traindc} trains a difference classifier on a random set of examples which lies in the disagreement region of the current confidence set for $h^*$. The training process is cost-sensitive, and is similar to~\cite{KKM12, KT14, BB05, S14}. A hard bound is imposed on the false-negative error, which translates to a bound on the annotation bias for the target task. The number of positives (i.e., the number of examples where $W$ and $O$ differ) is minimized subject to this constraint; this amounts to (approximately) minimizing the fraction of queries made to $O$. 

The number of labeled examples used in training is large enough to ensure false negative error $O(\epsilon_k/\phi_k)$ over the disagreement region of the current confidence set; here $\phi_k$ is the probability mass of this disagreement region under $U$. This ensures that the overall annotation bias introduced by this procedure in the target task is at most $O(\epsilon_k)$. As $\phi_k$ is small and typically diminishes with $k$, this requires less labels than training the difference classifier globally which would have required $\tilde{O}(d'/\epsilon_k)$ queries to $O$.

\begin{algorithm}[H]
\caption{Active Learning Algorithm from Weak and Strong Labelers}
\begin{algorithmic}[1]
\State Input: Unlabeled distribution $U$, target excess error $\epsilon$, confidence $\delta$, labeling oracle $O$, weak oracle $W$, hypothesis class $\calH$, hypothesis class for difference classifier $\calH^{df}$. 
\State Output: Classifier $\hat{h}$ in $\calH$.
\State Initialize: initial error $\epsilon_0 = 1$, confidence $\delta_0 = \delta/4$. Total number of epochs $k_0 = \lceil \log \frac{1}{\epsilon} \rceil$.
\State Initial number of examples $n_0 = O(\frac{1}{\epsilon_0^2} (d \ln\frac{1}{\epsilon_0^2} + \ln\frac{1}{\delta_0}))$.
\State Draw a fresh sample and query $O$ for its labels $\hat{S}_0 = \{(x_1, y_1), \ldots, (x_{n_0}, y_{n_0}) \}$. Let $\sigma_0 = \sigma(n_0,\delta_0)$. 
\For{$k = 1,2, \ldots, k_0$}
    \State Set target excess error $\epsilon_k = 2^{-k}$, confidence $\delta_k = \delta/4(k+1)^2$.
   
    \State {\em{\# Train Difference Classifier}}
    \State $\hat{h}^{df}_k$ $\leftarrow$ Call Algorithm~\ref{alg:traindc} with inputs unlabeled distribution $U$, oracles $W$ and $O$, target excess error $\epsilon_k$, confidence $\delta_k/2$, previously labeled dataset $\hat{S}_{k-1}$.
    \State {\em{\# Adaptive Active Learning using Difference Classifier}} 
    \State $\sigma_k,\hat{S}_k$ $\leftarrow$ Call Algorithm~\ref{alg:adaptive} with inputs unlabeled distribution $U$, oracles $W$ and $O$, difference classifier $\hat{h}^{df}_k$, target excess error $\epsilon_k$, confidence $\delta_k/2$, previously labeled dataset $\hat{S}_{k-1}$.
\EndFor
\State \Return $\hat{h} \leftarrow \learn_\calH(\emptyset, \hat{S}_{k_0}).$
\end{algorithmic}
\label{alg:main}
\end{algorithm}

\begin{algorithm}
\caption{Training Algorithm for Difference Classifier}
\begin{algorithmic}[1]
\State Input: Unlabeled distribution $U$, oracles $W$ and $O$, target error $\varepsilon$, hypothesis class $\calH^{df}$, confidence $\delta$, previous labeled dataset $\hat{T}$.
\State Output: Difference classifier $\hat{h}^{df}$.
\State Let $\hat{p}$ be an estimate of $\P_{x \sim U}(\dis(\hat{T},\frac{3\epsilon}{2},x) = 1)$, obtained by calling Algorithm~\ref{alg:adaptivebias} with failure probability $\delta/3$.~\footnotemark

\State Let $U' = \emptyset$, $i=1$, and \begin{equation} m = \frac{64 \cdot 1024 \hat{p}}{\varepsilon}(d'\ln\frac{512 \cdot 1024 \hat{p}}{\varepsilon} + \ln\frac{72}{\delta}) \label{eqn:m1} \end{equation}

\Repeat 
    \State Draw an example $x_i$ from $U$. 
    \If{$\dis(\hat{T}, \frac{3\epsilon}{2}, x_i) = 1$}  {\em~\# $x_i$ is inside the disagreement region}
        \State query both $W$ and $O$ for labels to get $y_{i, W}$ and $y_{i, O}$.
    \EndIf
    \State $U' = U' \cup \{(x_i, y_{i,O}, y_{i,W})\}$
    \State $i = i + 1$
\Until{$|U'| = m$}
\State Learn a classifier $\hat{h}^{df} \in \calH^{df}$ based on the following empirical risk minimizer:
\begin{align}
&\hat{h}^{df} = \argmin_{h^{df} \in \calH^{df}} \sum_{i=1}^m 1(h^{df}(x_i) = +1), \text{ s.t.} & \sum_{i=1}^{m} 1( h^{df}(x_i) = -1 \wedge y_{i,O} \neq y_{i, W}) \leq m \varepsilon/256\hat{p}
\label{eqn:costsensitive}
\end{align}
\State \Return $\hat{h}^{df}$.
\end{algorithmic}
\label{alg:traindc}
\end{algorithm}

\footnotetext{Note that if in Algorithm~\ref{alg:adaptivebias}, the upper confidence bound of $\P_{x \sim U}(\dis(\hat{T},\frac{3\epsilon}{2},x) = 1)$ is lower than $\epsilon/64$, then we can halt Algorithm~\ref{alg:traindc} and return an arbitrary $h^{df}$ in $\calH^{df}$. Using this $h^{df}$ will still guarantee the correctness of Algorithm~\ref{alg:main}.}
\paragraph{Adaptive Active Learning using the Difference Classifier.} Finally, Algorithm~\ref{alg:adaptive} is our main active learning procedure, which generates a labeled dataset $\hat{S}_k$ that is implicitly used to maintain a tighter $(1 - \delta)$-confidence set for $h^*$. Specifically, Algorithm~\ref{alg:adaptive} generates a $\hat{S}_k$ such that the set $V_k$ defined as: 
\[ V_k = \{h: \err(h, \hat{S}_k) - \min_{\hat{h}_k \in \calH} \err(\hat{h}_k, \hat{S}_k) \leq 3\epsilon_k/4 \} \]
has the property that:
\[ \{h: \err_D(h) - \err_D(h^*) \leq \epsilon_k/2 \} \subseteq V_k \subseteq \{h: \err_D(h) - \err_D(h^*) \leq \epsilon_k \} \]

This is achieved by labeling, through inference or query, a large enough sample of unlabeled data drawn from $U$. Labels are obtained from three sources - direct inference (if $x$ lies outside the disagreement region as identified by Algorithm~\ref{alg:disagree}), querying $O$ (if the difference classifier predicts a difference), and querying $W$. How large should the sample be to reach the target excess error? If $\err_D(h^*) = \nu$, then achieving an excess error of $\epsilon$ requires $\tilde{O}(d \nu/\epsilon_k^2)$ samples, where $d$ is the VC dimension of the hypothesis class. As $\nu$ is unknown in advance, we use a doubling procedure in lines 4-14 to iteratively determine the sample size.

\begin{algorithm}
\caption{Adaptive Active Learning using Difference Classifier}
\begin{algorithmic}[1]
\State Input: Unlabeled data distribution $U$, oracles $W$ and $O$, difference classifier $h^{df}$, target excess error $\epsilon$, confidence $\delta$, previous labeled dataset $\hat{T}$.
\State Output: Parameter $\sigma$, labeled dataset $\hat{S}$.
\State Let $\hat{h} = \learn_\calH(\emptyset, \hat{T})$.
\For{$t= 1,2, \ldots, $}
	\State Let $\delta^t = \delta/t(t+1)$. Define: $\sigma(2^t, \delta^t) =  \frac{8}{2^t} (2d\ln \frac{2e2^t}{d} + \ln \frac{24}{\delta^t})$. 
	\State Draw $2^t$ examples from $U$ to form $S^{t,U}$.  
	\For{each $x \in S^{t,U}$}:
	\If{$\dis(\hat{T}, \frac{3\epsilon}{2},x) = 0$} {\em~\# $x$ is inside the agreement region}
		\State Add $(x, \hat{h}(x))$ to $\hat{S}^t$.
	\Else{\em  ~\# $x$ is inside the disagreement region}
		\State If $h^{df}(x) = +1$, query $O$ for the label $y$ of $x$, otherwise query $W$. Add $(x, y)$ to $\hat{S}^t$.
	\EndIf
	\EndFor
	\State Train $\hat{h}^t \leftarrow \learn_{\calH} (\emptyset, \hat{S}^t)$.
	\If{$\sigma(2^t, \delta^t) + \sqrt{\sigma(2^t, \delta^t) \err(\hat{h}^t,\hat{S}^t)} \leq \epsilon/512$}			
	\State $t_0 \leftarrow t$, \textbf{break}
	\EndIf
\EndFor
\State \Return $\sigma \leftarrow \sigma(2^{t_0}, \delta^{t_0})$, $\hat{S} \leftarrow \hat{S}^{t_0}$. 
\end{algorithmic}
\label{alg:adaptive}
\end{algorithm}

\begin{algorithm}
\caption{$\dis(\hat{S}, \tau, x)$: Test if $x$ is in the disagreement region of current confidence set}
\begin{algorithmic}[1]
\State Input: labeled dataset $\hat{S}$, rejection threshold $\tau$, unlabeled example $x$.
\State Output: 1 if $x$ in the disagreement region of current confidence set, 0 otherwise.
\State Train $\hat{h} \leftarrow \learn_\calH(\{\emptyset, \hat{S} \})$.
\State Train $\hat{h}_x' \leftarrow \learn_\calH(\{(x, -\hat{h}(x))\}, \hat{S} \})$.
\If{$\err(\hat{h}_x', \hat{S}) - \err(\hat{h}, \hat{S}) > \tau$} \quad {\em{\# $x$ is in the agreement region}}
	\State \Return 0
\Else \quad {\em{\# $x$ is in the disagreement region}}
	\State \Return 1
\EndIf
\end{algorithmic}
\label{alg:disagree}
\end{algorithm}

\section{Performance Guarantees}
\label{sec:theory}


We now examine the performance of our algorithm, which is measured by the number of label queries made to the oracle $O$. Additionally we require our algorithm to be statistically consistent, which means that the true error of the output classifier should converge to the true error of the best classifier in $\calH$ on the data distribution $D$. 

Since our framework is very general, we cannot expect any statistically consistent algorithm to achieve label savings over using $O$ alone under all circumstances. For example, if labels provided by $W$ are the complete opposite of $O$, no algorithm will achieve both consistency and label savings. We next provide an assumption under which Algorithm~\ref{alg:main} works and yields label savings. 

\paragraph{Assumption.} The following assumption states that difference hypothesis class contains a good cost-sensitive predictor of when $O$ and $W$ differ in the disagreement region of $\B_U(h^*, r)$; a predictor is good if it has low false-negative error and predicts a positive label with low frequency. If there is no such predictor, then we cannot expect an algorithm similar to ours to achieve label savings.

\begin{assumption} \label{ass:boundedfn}
Let $\calD$ be the joint distribution: $\P_{\calD}(x, y_O, y_W) = \P_U(x) \P_W(y_W | x) \P_O(y_O | x)$. For any $r, \eta > 0$, there exists an $h^{df}_{\eta, r} \in \calH^{df}$ with the following properties:
\begin{eqnarray}
& \P_{\calD}(h^{df}_{\eta, r}(x) = -1 , x \in \DIS(\B_U(h^*, r)) , y_O \neq y_W) \leq  \eta \label{eqn:fn} \\
& \P_{\calD} (h^{df}_{\eta, r}(x) = 1 , x \in \DIS(\B_U(h^*, r)))  \leq  \alpha(r, \eta) \label{eqn:pos}
\end{eqnarray}
\end{assumption} 

Note that \eqref{eqn:fn}, which states there is a $h^{df} \in \calH^{df}$ with low false-negative error, is minimally restrictive, and is trivially satisfied if $\calH^{df}$ includes the constant classifier that always predicts $1$. Theorem~\label{thm:consistency} shows that~\eqref{eqn:fn} is sufficient to ensure statistical consistency.


\eqref{eqn:pos} in addition states that the number of positives predicted by the classifier $h^{df}_{\eta, r}$ is upper bounded by $\alpha(r, \eta)$. Note $\alpha(r, \eta) \leq \P_U(\DIS(\B_U(h^*, r)))$ always; performance gain is obtained when $\alpha(r, \eta)$ is lower, which happens when the difference classifier predicts agreement on a significant portion of $\DIS(\B_U(h^*, r))$. 

\paragraph{Consistency.} Provided Assumption~\ref{ass:boundedfn} holds, we next show that Algorithm~\ref{alg:main} is statistically consistent. Establishing consistency is non-trivial for our algorithm as the output classifier is trained on labels from both $O$ and $W$.

\begin{theorem} [Consistency]
\label{thm:consistency}
Let $h^*$ be the classifier that minimizes the error with respect to $D$. If Assumption~\ref{ass:boundedfn} holds, then with probability $\geq 1 - \delta$, the classifier $\hat{h}$ output by Algorithm~\ref{alg:main} satisfies: $\err_D(\hat{h}) \leq \err_D(h^*) + \epsilon$. 
\end{theorem}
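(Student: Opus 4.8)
The plan is to prove, by induction on the epoch index $k$, the sandwich invariant
\[
\{h : \err_D(h) - \err_D(h^*) \le \epsilon_k/2\} \ \subseteq\ V_k \ \subseteq\ \{h : \err_D(h) - \err_D(h^*) \le \epsilon_k\},
\]
where $V_k = \{h : \err(h,\hat S_k) - \min_{h'\in\calH}\err(h',\hat S_k) \le 3\epsilon_k/4\}$ is the confidence set implicitly carried by $\hat S_k$. The base case $k=0$ is immediate: $\hat S_0$ is $n_0$ i.i.d.\ draws from $D$, so a variance-sensitive VC uniform-convergence bound over $\calH$ with the chosen $n_0$ gives the inclusions for $\epsilon_0 = 1$. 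Granting the invariant at level $k_0 = \lceil\log(1/\epsilon)\rceil$, the theorem follows: $\hat h = \learn_\calH(\emptyset,\hat S_{k_0})$ has zero empirical excess error, hence lies in $V_{k_0}$, hence $\err_D(\hat h) - \err_D(h^*) \le \epsilon_{k_0}\le\epsilon$. The failure probabilities — $\delta/4$ for the base sample and $\delta_k = \delta/4(k+1)^2$ for epoch $k$, itself split across Algorithms~\ref{alg:traindc},~\ref{alg:adaptive} and~\ref{alg:adaptivebias} — sum to at most $\delta$ by a union bound.

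For the inductive step, assume the invariant at level $k-1$. The first task is to show that $\dis(\hat S_{k-1}, 3\epsilon_k/2, \cdot)$ is a faithful test for membership in $\DIS(V_{k-1})$: using the normalized deviation bound with the parameter $\sigma_{k-1}$ returned in the previous epoch together with the inductive sandwich, whenever the test returns $0$ all near-empirically-optimal classifiers on $\hat S_{k-1}$ — in particular $h^*$ — agree at $x$, so the inferred label $\hat h(x)$ equals $h^*(x)$, while whenever $x\in\DIS(V_{k-1})$ the test returns $1$. Consequently the only labels in $\hat S_k$ that can differ from $h^*(x)$ are those queried to $W$ inside the disagreement region, i.e.\ on $E_k = \{x\in\DIS(V_{k-1}) : \hat h^{df}_k(x) = -1,\ y_W\neq y_O\}$. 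Writing $\tilde D_k$ for the distribution of the labeled pairs actually produced in epoch $k$, this yields $|\err_{\tilde D_k}(h) - \err_D(h)| \le \P_\calD(E_k) =: \beta_k$ for every $h$, and hence $|(\err_{\tilde D_k}(h) - \err_{\tilde D_k}(h')) - (\err_D(h) - \err_D(h'))| \le 2\beta_k$ for all $h,h'\in\calH$ — the bias is entirely summarized by $\beta_k$.

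The second task is to bound $\beta_k = O(\epsilon_k)$. The inductive sandwich gives $V_{k-1}\subseteq\{h : \err_D(h)\le\nu+\epsilon_{k-1}\}$, and since $\rho_U(h,h^*)\le\err_D(h)+\err_D(h^*)$ this forces $V_{k-1}\subseteq\B_U(h^*, r_k)$ with $r_k = 2\nu+\epsilon_{k-1}$, so $\DIS(V_{k-1})\subseteq\DIS(\B_U(h^*,r_k))$ and $\beta_k$ is at most the false-negative mass of $\hat h^{df}_k$ over $\DIS(\B_U(h^*,r_k))$. Assumption~\ref{ass:boundedfn} supplies a competitor $h^{df}_{\eta,r_k}$ with true false-negative mass $\le\eta$ for a suitable $\eta = \Theta(\epsilon_k)$; by uniform convergence over $\calH^{df}$ with the sample size $m$ of~\eqref{eqn:m1} and the accurate estimate $\hat p$ of $\P_U(\DIS(V_{k-1}))$ from Algorithm~\ref{alg:adaptivebias}, this competitor satisfies the empirical constraint of~\eqref{eqn:costsensitive}, so the cost-sensitive ERM $\hat h^{df}_k$ does at least as well on the objective while meeting the constraint; a second uniform-convergence step turns its empirical false-negative guarantee into the true bound $\beta_k = O(\hat p)\cdot O(\epsilon_k/\hat p) = O(\epsilon_k)$ — here the $\hat p$-normalization in~\eqref{eqn:costsensitive} and in $m$ is exactly what cancels the $\phi_k$ dependence. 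Finally, the doubling loop (lines 4--14 of Algorithm~\ref{alg:adaptive}) draws samples until its stopping rule — a variance-sensitive generalization bound for $\hat S^{t_0}$ — certifies that empirical errors on $\hat S_k$ track $\err_{\tilde D_k}$ to within $O(\epsilon_k)$ uniformly over $\calH$; combining this deviation bound with the $2\beta_k$ bias bound and chasing the numerical constants ($\epsilon_k/2$, $3\epsilon_k/4$, $\epsilon_k$) exactly as in the DBAL analysis of~\cite{BHLZ10} establishes both inclusions of the sandwich for epoch $k$, closing the induction.

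The main obstacle is the third paragraph — controlling $\beta_k$. It demands simultaneously that (i) the current disagreement region $\DIS(V_{k-1})$ be provably captured by a ball $\B_U(h^*,r_k)$ to which Assumption~\ref{ass:boundedfn} applies (relying on the inductive sandwich and the triangle inequality for $\rho_U$); (ii) the constrained program~\eqref{eqn:costsensitive} be feasible and near-optimal — delicate because both the feasibility threshold and the sample size $m$ are scaled by the \emph{estimated} mass $\hat p$, forcing the guarantees of Algorithm~\ref{alg:adaptivebias} to be folded in; and (iii) the empirical false-negative guarantee be transferred to a true guarantee uniformly over $\calH^{df}$ with the correct $\epsilon_k$ and $\phi_k$ dependence. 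Managing all absolute constants so that the accumulated bias plus estimation error stays within the $\epsilon_k/4$ of slack left by the gaps between $\epsilon_k/2$, $3\epsilon_k/4$ and $\epsilon_k$ in the sandwich is the bookkeeping core of the argument.
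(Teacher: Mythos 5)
Your high-level architecture (induction over epochs maintaining a sandwich on the implicit confidence set, with the three ingredients: the $\dis$ test being faithful, the false-negative mass of $\hat h^{df}_k$ being $O(\epsilon_k)$, and a variance-sensitive concentration step) matches the structure of the paper's proof of Theorem~\ref{thm:consistency} (via Theorem~\ref{thm:consistencyconcrete}, Lemma~\ref{lem:inductive} and its constituent lemmas). However, there is a genuine gap in your second paragraph.

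You claim that $|\err_{\tilde D_k}(h) - \err_D(h)| \le \P_\calD(E_k) =: \beta_k$ for every $h\in\calH$. This is false. Under $\tilde D_k$ (which the paper calls $\hat D_k$), the label of any $x\notin R_{k-1}$ is deterministically $\hat h_{k-1}(x) = h^*(x)$, whereas under $D$ the label is drawn from $\P_O(\cdot\mid x)$. Since $h^*$ has error $\nu$ under $D$, these two labelings of the region $\calX\setminus R_{k-1}$ can disagree on a set of mass up to $\nu$, which is not small and is certainly not captured by $\beta_k$. Your step ``the only labels in $\hat S_k$ that can differ from $h^*(x)$ are those on $E_k$'' is true, but $\err_D(h)$ compares $h$ to $y_O$, not to $h^*(x)$, so the conclusion $|\err_{\tilde D_k}(h)-\err_D(h)|\le\beta_k$ does not follow, and the derived two-sided bound on excess-error differences collapses.

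The paper resolves exactly this with Invariant~\ref{inv:afb} (Approximate Favorable Bias), which is \emph{one-sided} and is only asserted for pairs $(h,h')$ with $h'$ near-optimal. The key observation is that a near-optimal $h'$ agrees with $h^*$ on $\calX\setminus R_{k-1}$, so the inference-to-$h^*$ substitution contributes nothing to $h'$'s error there; and for arbitrary $h$, the triangle inequality gives
\[
\P(h(x)\ne y_O,\ x\notin R_{k-1}) - \P(h^*(x)\ne y_O,\ x\notin R_{k-1}) \le \P(h(x)\ne h^*(x),\ x\notin R_{k-1}),
\]
which is the ``favorable'' direction (see Equation~\eqref{eqn:fb}). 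Combined with the $O(\epsilon_k)$ bound from the $W$-queries (your $\beta_k$), this yields $\err(h,S_k)-\err(h',S_k)\le \err(h,\hat S_k)-\err(h',\hat S_k)+\epsilon_k/16$, which is enough for the sandwich but is strictly weaker than your two-sided claim. You need this one-sided formulation; as written, your proof conflates ``differs from $h^*(x)$'' with ``differs from $y_O$'' and cannot be patched without it.
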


\paragraph{Label Complexity.} The label complexity of standard DBAL is measured in terms of the disagreement coefficient. The disagreement coefficient $\theta(r)$ at scale $r$ is defined as: $\theta(r) = \sup_{h \in \calH} \sup_{r' \geq r} \frac{\P_U(\DIS(\B_U(h, r'))}{r'}$; intuitively, this measures the rate of shrinkage of the disagreement region with the radius of the ball $\B_U(h, r)$ for any $h$ in $\calH$. It was shown by~\cite{DHM07} that the label complexity of DBAL for target excess generalization error $\epsilon$ is $\tilde{O}(d\theta (2\nu + \epsilon) (1 + \frac{ \nu^2 }{\epsilon^2}))$ where the $\tilde{O}$ notation hides factors logarithmic in $1/\epsilon$ and $1/\delta$. In contrast, the label complexity of our algorithm can be stated in Theorem~\ref{thm:labelcomplexity}. Here we use the $\tilde{O}$ notation for convenience; we have the same dependence on $\log 1/\epsilon$ and $\log 1/\delta$ as the bounds for DBAL.

\begin{theorem} [Label Complexity] \label{thm:labelcomplexity}
Let $d$ be the VC dimension of $\calH$ and let $d'$ be the VC dimension of $\calH^{df}$. If Assumption~\ref{ass:boundedfn} holds, and if the error of the best classifier in $\calH$ on $D$ is $\nu$, then with probability $\geq 1 - \delta$, the following hold:
\begin{enumerate}
\item The number of label queries made by Algorithm~\ref{alg:main} to the oracle $O$ in epoch $k$ at most:
\begin{eqnarray} \label{eqn:epochklc}
 m_k = \tilde{O}\Big{(}\frac{d (2\nu + \epsilon_{k-1}) (\alpha(2 \nu + \epsilon_{k-1}, \epsilon_{k-1}/1024) + \epsilon_{k-1})}{\epsilon_k^2} + \frac{d' \P(\DIS(B_U(h^*, 2 \nu + \epsilon_{k-1})))}{\epsilon_k} \Big{)}
\end{eqnarray}
\item The total number of label queries made by Algorithm~\ref{alg:main} to the oracle $O$ is at most:
\begin{eqnarray}\label{eqn:lc}
\tilde{O} \Big{(} \sup_{r \geq \epsilon} \frac{\alpha(2 \nu + r, r/1024) + r}{2 \nu + r} \cdot d \left( \frac{\nu^2}{\epsilon^2} + 1 \right) + \theta(2 \nu + \epsilon) d' \left(\frac{\nu}{\epsilon} + 1\right) \Big{)}
\end{eqnarray}
\end{enumerate} 
\end{theorem}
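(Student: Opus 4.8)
\emph{Proof idea.} The plan is to work on a single high-probability event and bound, epoch by epoch, the number of queries to $O$; within an epoch these come from two places, training the difference classifier (Algorithm~\ref{alg:traindc}) and the adaptive sampling loop (Algorithm~\ref{alg:adaptive}), so I would bound each separately and then sum over $k=0,1,\dots,k_0$. Write $\phi_k := \P_U(\DIS(\B_U(h^*, 2\nu+\epsilon_{k-1})))$. First I would fix the event $E$ on which all of the following hold simultaneously: (i) for every epoch $k$ and every round $t$ of Algorithm~\ref{alg:adaptive} the relative VC deviation inequalities for $\calH$ hold on the drawn sample with the stated $\sigma(2^t,\delta^t)$; (ii) the analogous uniform-convergence bound for $\calH^{df}$ holds on the $m$ examples used in \eqref{eqn:costsensitive} in each epoch; (iii) the estimate $\hat p$ computed inside Algorithm~\ref{alg:traindc} is within a constant factor of $\P_{x\sim U}(\dis(\hat S_{k-1},\tfrac{3\epsilon_k}{2},x)=1)$, which is itself sandwiched, up to absolute constants $c_1,c_2$, between $\P_U(\DIS(\B_U(h^*,c_1\epsilon_k)))$ and $\phi_k$; and (iv) the confidence-set invariants hold, i.e.\ $h^*\in V_k$ and $\err(\hat h_k,\hat S_k)=\nu+O(\sigma_k)$ for all $k$. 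A union bound against the $\delta_k,\delta^t$ schedule gives $\P(E)\ge 1-\delta$. All of (i)--(iv) are the same facts already needed for Theorem~\ref{thm:consistency}, so I would cite them rather than reprove them.

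Next, on $E$, I would bound the stopping time of the doubling loop in Algorithm~\ref{alg:adaptive}: in epoch $k$ it halts with $2^{t_0}=\tilde O(d(2\nu+\epsilon_{k-1})/\epsilon_k^2)$. The crucial point is that the mixed-source labels placed into $\hat S^t$ --- inferred in the agreement region, drawn from $W$ where $\hat h^{df}_k$ predicts agreement, and from $O$ otherwise --- induce a labelled distribution that differs from $D$ only by a bias of order $\epsilon_k$; this is exactly where the false-negative guarantee of $\hat h^{df}_k$ (a consequence of \eqref{eqn:fn} in Assumption~\ref{ass:boundedfn} together with feasibility of the comparator in \eqref{eqn:costsensitive}) and correctness of the region test $\dis(\cdot)$ enter. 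Consequently $\err(\hat h^t,\hat S^t)\le \nu+O(\epsilon_k)+O(\sigma(2^t,\delta^t))$ by the standard self-bounding argument, and substituting this into the stopping test $\sigma(2^t,\delta^t)+\sqrt{\sigma(2^t,\delta^t)\,\err(\hat h^t,\hat S^t)}\le\epsilon_k/512$ together with $\sigma(2^t,\delta^t)=\tilde O(d/2^t)$ shows the test is met once $2^t=\tilde\Omega(d\nu/\epsilon_k^2+d/\epsilon_k)$. Since $\epsilon_k\asymp\epsilon_{k-1}$ this is the claimed bound. I expect this to be the main obstacle: the analysis must control the biased empirical error uniformly over all doubling rounds and rule out the loop ever running longer than intended.

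With the stopping time in hand, the $O$-queries inside Algorithm~\ref{alg:adaptive} in epoch $k$ equal the number of $x\in S^{t_0,U}$ with $\dis(\hat S_{k-1},\tfrac{3\epsilon_k}{2},x)=1$ and $\hat h^{df}_k(x)=+1$; on $E$ this is at most $2^{t_0}$ times $\P_\calD(x\in\DIS(\B_U(h^*,c_2(2\nu+\epsilon_{k-1}))),\,\hat h^{df}_k(x)=+1)$ up to a relative-deviation term of order $\sigma(2^{t_0},\delta^{t_0})$. Here I would invoke the ``few positives'' property of the cost-sensitive ERM: the predictor $h^{df}_{\eta_k,r_k}$ of Assumption~\ref{ass:boundedfn} with $r_k=c_2(2\nu+\epsilon_{k-1})$ and $\eta_k\asymp\epsilon_{k-1}/1024$ is feasible for \eqref{eqn:costsensitive} on $E$ (its empirical false-negative count meets the threshold), and because \eqref{eqn:costsensitive} minimizes the empirical number of predicted positives, uniform convergence over $\calH^{df}$ forces $\P_\calD(x\in\DIS(\B_U(h^*,r_k)),\,\hat h^{df}_k(x)=+1)=O(\alpha(2\nu+\epsilon_{k-1},\epsilon_{k-1}/1024)+\epsilon_k)$. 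Combined with $2^{t_0}=\tilde O(d(2\nu+\epsilon_{k-1})/\epsilon_k^2)$, this yields the first term of \eqref{eqn:epochklc}. The $O$-queries inside Algorithm~\ref{alg:traindc} in epoch $k$ are at most the number of drawn examples whose $\dis$ test returns $1$, i.e.\ $O(m\hat p)=\tilde O(d'\hat p^2/\epsilon_k)\le\tilde O(d'\hat p/\epsilon_k)\le\tilde O(d'\phi_k/\epsilon_k)$ on $E$, which is the second term; summing the two proves part~1.

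Finally, part~2 follows by summing \eqref{eqn:epochklc} over $k=1,\dots,k_0=\lceil\log(1/\epsilon)\rceil$ (the initialization cost $n_0=\tilde O(d)$ is of lower order). For the first term, write $\tfrac{(2\nu+\epsilon_{k-1})(\alpha(2\nu+\epsilon_{k-1},\epsilon_{k-1}/1024)+\epsilon_{k-1})}{\epsilon_k^2}=\tfrac{\alpha(2\nu+\epsilon_{k-1},\epsilon_{k-1}/1024)+\epsilon_{k-1}}{2\nu+\epsilon_{k-1}}\cdot\tfrac{(2\nu+\epsilon_{k-1})^2}{\epsilon_k^2}$, pull out $\sup_{r\ge\epsilon}\tfrac{\alpha(2\nu+r,r/1024)+r}{2\nu+r}$, and use $\epsilon_k=2^{-k}$ so that $\sum_k\epsilon_k^{-2}=O(\epsilon^{-2})$ and hence $\sum_k(2\nu+\epsilon_{k-1})^2/\epsilon_k^2=\tilde O(\nu^2/\epsilon^2+1)$. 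For the second term, bound $\phi_k\le(2\nu+\epsilon_{k-1})\,\theta(2\nu+\epsilon_{k-1})\le(2\nu+\epsilon_{k-1})\,\theta(2\nu+\epsilon)$ using the definition of the disagreement coefficient and its monotonicity in the scale, and use $\sum_k(2\nu+\epsilon_{k-1})/\epsilon_k=\tilde O(\nu/\epsilon+1)$. Collecting the two sums gives \eqref{eqn:lc}.
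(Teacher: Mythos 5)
Your proposal is correct and follows essentially the same strategy as the paper: it works on the same union-bound event, reuses the confidence-set/difference-classifier invariants from the consistency analysis, bounds the doubling-loop stopping time via the effective noise rate of the mixed-label distribution, separately accounts for $O$-queries in Algorithm~\ref{alg:traindc} and Algorithm~\ref{alg:adaptive}, and sums per-epoch bounds with the same algebraic manipulations (pulling out the $\sup_r$ and replacing $\phi_k/(2\nu+\epsilon_{k-1})$ by $\theta(2\nu+\epsilon)$). The only cosmetic differences are that the paper counts $O$-queries in Algorithm~\ref{alg:adaptive} by summing $|S_k^{t,U}\cap\cdots|$ over all rounds $t\le t_0(k)$ rather than just the last one (the geometric doubling makes this a constant-factor change), and it treats the $m$ examples in Algorithm~\ref{alg:traindc} as all lying in the disagreement region so the $O$-query count is $m$ itself rather than $\approx m\hat p$; both readings give the same $\tilde O(d'\phi_k/\epsilon_k)$ term.
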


\subsection{Discussion} 

The first terms in~\eqref{eqn:epochklc} and~\eqref{eqn:lc} represent the labels needed to learn the target classifier, and second terms represent the overhead in learning the difference classifier.

In the realistic agnostic case (where $\nu > 0$), as $\epsilon \rightarrow 0$, the second terms are  {\em{lower order}} compared to the label complexity of DBAL. Thus {\em{even if $d'$ is somewhat larger than $d$, fitting the difference classifier does not incur an asymptotically high overhead in the more realistic agnostic case.}} In the realizable case, when $d' \approx d$, the second terms are of the same order as the first; therefore we should use a simpler difference hypothesis class $\calH^{df}$ in this case. We believe that the lower order overhead term comes from the fact that there exists a classifier in $\calH^{df}$ whose false negative error is very low.
 
Comparing Theorem~\ref{thm:labelcomplexity} with the corresponding results for DBAL, we observe that instead of $\theta(2 \nu + \epsilon)$, we have the term $\sup_{r \geq \epsilon} \frac{\alpha(2 \nu + r, r/1024)}{2 \nu + r}$. Since $\sup_{r \geq \epsilon} \frac{\alpha(2 \nu + r, r/1024)}{2 \nu + r} \leq \theta(2 \nu + \epsilon)$, the {\em{worst case}} asymptotic label complexity is the same as that of standard DBAL. This label complexity may be considerably better however if $\sup_{r \geq \epsilon} \frac{\alpha(2 \nu + r, r/1024)}{2 \nu + r}$ is less than the disagreement coefficient. As we expect, this will happen when the region of difference between $W$ and $O$ restricted to the disagreement regions is relatively small, and this region is well-modeled by the difference hypothesis class $\calH^{df}$. 

An interesting case is when the weak labeler differs from $O$ close to the decision boundary and agrees with $O$ away from this boundary. In this case, any consistent algorithm should switch to querying $O$ close to the decision boundary. Indeed in earlier epochs, $\alpha$ is low, and our algorithm obtains a good difference classifier and achieves label savings. In later epochs, $\alpha$ is high, the difference classifiers always predict a difference and the label complexity of the later epochs of our algorithm is the same order as DBAL. In practice, if we suspect that we are in this case, we can switch to plain active learning once $\epsilon_k$ is small enough.


\paragraph{Case Study: Linear Classfication under Uniform Distribution.} We provide a simple example where our algorithm provides a better asymptotic label complexity than DBAL. Let $\calH$ be the class of homogeneous linear separators on the $d$-dimensional unit ball and let $\calH^{df} = \{ h \Delta h' : h, h' \in \calH \}$. Furthermore, let $U$ be the uniform distribution over the unit ball.

Suppose that $O$ is a deterministic labeler such that $\err_D(h^*) = \nu > 0$. Moreover, suppose that $W$ is such that there exists a difference classifier $\bar{h}^{df}$ with false negative error $0$ for which $\P_U(\bar{h}^{df}(x) = 1) \leq g$. Additionally, we assume that $g = o(\sqrt{d} \nu)$; observe that this is not a strict assumption on $\calH^{df}$, as $\nu$ could be as much as a constant. Figure~\ref{fig:example} shows an example in $d=2$ that satisfies these assumptions. In this case, as $\epsilon \rightarrow 0$, Theorem~\ref{thm:labelcomplexity} gives the following label complexity bound.

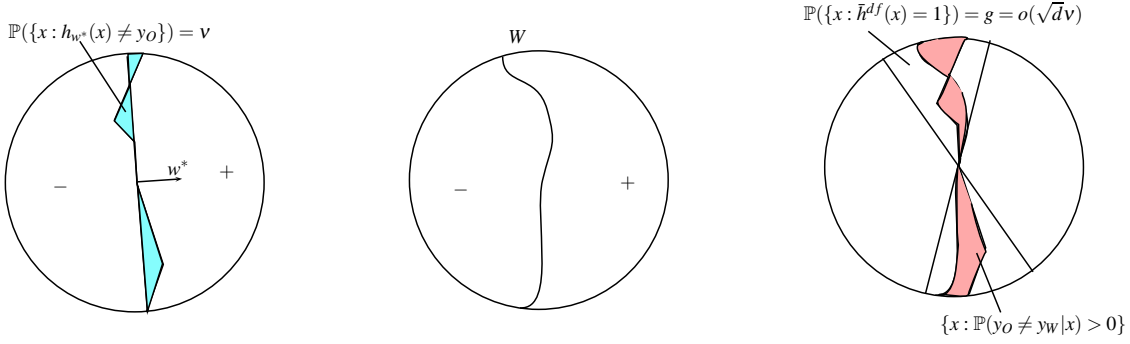
\begin{figure}
\centering
\begin{minipage}{0.32\textwidth}
\scalebox{0.5} 
{
\fontsize{15}{18}\selectfont
\begin{pspicture}(0,-3.8167188)(7.75,3.8367188)
\definecolor{color1312b}{rgb}{0.5215686274509804,0.9921568627450981,0.9921568627450981}
\pscircle[linewidth=0.04,dimen=outer](4.29,-0.33671874){3.46}
\psline[linewidth=0.04](4.49,3.0632813)(3.75,1.3232813)(4.29,0.74328125)(4.35,-0.37671876)(5.05,-2.5167189)(4.61,-3.7567186)
\psline[linewidth=0.04cm](4.09,3.0232813)(4.63,-3.7967188)
\psline[linewidth=0.04cm,arrowsize=0.05291667cm 2.0,arrowlength=1.4,arrowinset=0.4]{->}(4.37,-0.31671876)(5.55,-0.23671874)
\usefont{T1}{ppl}{m}{n}
\rput(6.7345314,-0.04671875){$+$}
\usefont{T1}{ppl}{m}{n}
\rput(2.3145313,-0.44671875){$-$}
\usefont{T1}{ppl}{m}{n}
\rput(5.4445314,0.05328125){$w^*$}
\pspolygon[linewidth=0.04,fillstyle=solid,fillcolor=color1312b](4.11,3.0832813)(4.51,3.1032813)(4.15,2.2632813)
\pspolygon[linewidth=0.04,fillstyle=solid,fillcolor=color1312b](3.75,1.3032813)(4.17,2.2632813)(4.27,0.7632812)
\pspolygon[linewidth=0.04,fillstyle=solid,fillcolor=color1312b](4.37,-0.43671876)(4.63,-3.7567186)(5.03,-2.4967186)
\psline[linewidth=0.04cm](4.05,1.5432812)(2.83,3.4232812)
\usefont{T1}{ppl}{m}{n}
\rput(3.6245313,3.6332812){$\P(\{x: h_{w^*}(x) \neq y_O \}) = \nu$}
\end{pspicture} 
}
\end{minipage}
\hspace{0.31cm}
\begin{minipage}{0.32\textwidth}
\scalebox{0.5} 
{
\fontsize{15}{18}\selectfont
\begin{pspicture}(0,-3.7396533)(6.92,3.7667189)
\pscircle[linewidth=0.04,dimen=outer](3.46,-0.26671875){3.46}
\psbezier[linewidth=0.04](2.5,3.0132813)(2.5,2.2132812)(3.4108603,2.5817487)(3.68,1.6132812)(3.9491398,0.6448137)(3.7955613,0.7051407)(3.56,-0.26671875)(3.3244386,-1.2385782)(3.958598,-3.7196531)(2.96,-3.6667187)
\usefont{T1}{ptm}{m}{n}
\rput(5.824531,-0.35671875){$+$}
\usefont{T1}{ptm}{m}{n}
\rput(1.3845313,-0.5367187){$-$}
\usefont{T1}{ptm}{m}{n}
\rput(2.9045312,3.5632813){$W$}
\end{pspicture} 
}
\end{minipage}
\hspace{-0.8cm}
\begin{minipage}{0.32\textwidth}
\scalebox{0.5} 
{
\fontsize{15}{18}\selectfont
\begin{pspicture}(0,-4.354219)(9.829062,4.354219)
\definecolor{color43b}{rgb}{0.996078431372549,0.6627450980392157,0.6627450980392157}
\pscircle[linewidth=0.04,dimen=outer](5.15,0.06078125){3.46}
\psline[linewidth=0.04](5.43,3.4607813)(4.69,1.7607813)(5.23,1.1807812)(5.29,0.06078125)(5.99,-2.0792189)(5.55,-3.3192186)
\psline[linewidth=0.04cm](3.25,2.9607813)(7.25,-2.7192187)
\psline[linewidth=0.04cm](6.11,3.4407814)(4.39,-3.2992187)
\psline[linewidth=0.04cm](2.87,3.6807814)(3.97,2.7807813)
\usefont{T1}{ppl}{m}{n}
\rput(7.2545314,-4.1292186){$\{x: \P(y_O \neq y_W | x) > 0 \}$}
\usefont{T1}{ppl}{m}{n}
\rput(4.864531,4.150781){$\P(\{x: \bar{h}^{df}(x) = 1\}) = g = o(\sqrt{d} \nu)$}
\psbezier[linewidth=0.04](4.19,3.3407812)(4.19,2.5407813)(5.10086,2.9092488)(5.37,1.9407812)(5.6391397,0.9723137)(5.4855614,1.0326407)(5.25,0.06078125)(5.0144386,-0.9110782)(5.648598,-3.3921533)(4.65,-3.3392189)
\psbezier[linewidth=0.04,fillstyle=solid,fillcolor=color43b](4.17,3.3207812)(4.67,3.4807813)(3.699065,3.4432282)(4.51,2.8007812)(5.3209352,2.1583343)(4.99,2.6007812)(5.01,2.5207813)(5.03,2.4407814)(5.3949165,3.4696276)(5.45,3.4607813)(5.5050836,3.4519348)(5.7221646,3.6057184)(4.73,3.4807813)
\psbezier[linewidth=0.04,fillstyle=solid,fillcolor=color43b](5.05,2.5207813)(4.91,2.2007813)(4.85,2.0007813)(4.73,1.8407812)(4.61,1.6807812)(5.27,1.2007812)(5.27,1.2007812)(5.27,1.2007812)(5.29,0.08078125)(5.31,0.20078126)(5.33,0.32078126)(5.23,-0.07921875)(5.41,0.8407813)(5.59,1.7607813)(5.35,1.9207813)(5.37,2.0607812)
\psbezier[linewidth=0.04,fillstyle=solid,fillcolor=color43b](5.25,0.10078125)(5.21,-0.6592187)(5.22384,-0.39979148)(5.23,-1.3592187)(5.23616,-2.318646)(5.31,-1.5992187)(5.21,-2.5592186)(5.11,-3.5192187)(4.69,-3.3392189)(4.77,-3.3192186)(4.85,-3.2992187)(5.51,-3.4392188)(5.53,-3.3192186)(5.55,-3.1992188)(5.95,-2.2192187)(5.99,-2.1992188)(6.03,-2.1792188)(5.83,-1.9192188)(5.61,-0.89921874)
\psline[linewidth=0.04cm](5.75,-2.2192187)(6.39,-3.7992187)
\end{pspicture} 
}
\end{minipage}
\caption{Linear classification over unit ball with $d=2$. Left: Decision boundary of labeler $O$ and $h^* = h_{w^*}$. The region where $O$ differs from $h^*$ is shaded, and has probability $\nu$. Middle: Decision boundary of weak labeler $W$. Right: $\bar{h}^{df}$, $W$ and $O$. Note that $\{x: \P(y_O \neq y_W|x) > 0 \} \subseteq \{x: \bar{h}^{df}(x) = 1\}$.}

\label{fig:example}
\end{figure}


\begin{corollary} \label{cor:example}
With probability $\geq 1 - \delta$, the number of label queries made to oracle $O$ by Algorithm~\ref{alg:main} is $\tilde{O}\left( d \max(\frac{g}{\nu}, 1) (\frac{\nu^2}{\epsilon^2} + 1) + d^{3/2} \left(1 + \frac{\nu}{\epsilon}\right) \right)$, where the $\tilde{O}$ notation hides factors logarithmic in $1/\epsilon$ and $1/\delta$. 
\end{corollary}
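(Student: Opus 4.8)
The plan is to specialize the general label complexity bound of Theorem~\ref{thm:labelcomplexity} to the case of homogeneous linear separators under the uniform distribution, by computing the two problem-dependent quantities appearing in~\eqref{eqn:lc}: the disagreement coefficient $\theta(2\nu+\epsilon)$ for $\calH$, and the ratio $\sup_{r \geq \epsilon} \frac{\alpha(2\nu + r, r/1024) + r}{2\nu + r}$. For the disagreement coefficient, I would invoke the well-known fact that for homogeneous linear separators under the uniform distribution on the $d$-dimensional unit ball, $\theta(r) = O(\sqrt{d})$ uniformly in $r$ (this is the standard computation of Hanneke / Balcan--Long); hence $\theta(2\nu+\epsilon) d' (\nu/\epsilon + 1) = \tilde O(\sqrt{d} \cdot d \cdot (\nu/\epsilon + 1)) = \tilde O(d^{3/2}(1 + \nu/\epsilon))$, using $d' = O(d)$ since $\calH^{df} = \{h \Delta h' : h, h' \in \calH\}$ has VC dimension $O(d)$. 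That handles the second term.

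For the first term, the task is to bound $\alpha(r, \eta)$ for the relevant range of $r$ and $\eta$. By hypothesis there is a difference classifier $\bar h^{df} \in \calH^{df}$ with \emph{zero} false-negative error and $\P_U(\bar h^{df}(x) = 1) \leq g$; since zero false-negative error trivially satisfies~\eqref{eqn:fn} for every $\eta > 0$, we may take $h^{df}_{\eta, r} = \bar h^{df}$, which gives $\alpha(r, \eta) \leq \P_{\calD}(\bar h^{df}(x) = 1, x \in \DIS(\B_U(h^*, r))) \leq \min(g,\ \P_U(\DIS(\B_U(h^*, r))))$. Now I would split on $r$: using $\P_U(\DIS(\B_U(h^*, r))) \leq \theta(r) \cdot r = O(\sqrt d\, r)$, we get $\frac{\alpha(2\nu+r, r/1024) + r}{2\nu+r} \leq \frac{\min(g, O(\sqrt d (2\nu+r))) + r}{2\nu + r}$. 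When the $O(\sqrt d(2\nu+r))$ branch is smaller this is $O(\sqrt d)$; when the $g$ branch is smaller this is $O(g/(2\nu+r) + 1) = O(g/\nu + 1) = O(\max(g/\nu, 1))$. Taking the supremum over $r \geq \epsilon$ and combining, $\sup_{r\geq\epsilon} \frac{\alpha(2\nu+r, r/1024)+r}{2\nu+r} = O(\max(g/\nu, 1))$, where I use the assumption $g = o(\sqrt d\,\nu)$ to confirm the $g/\nu$ term is indeed the binding one in the regime of interest (otherwise the bound just collapses to the DBAL bound, consistent with $\sup \leq \theta$). Plugging into the first term of~\eqref{eqn:lc} yields $\tilde O(d \max(g/\nu, 1)(\nu^2/\epsilon^2 + 1))$, and adding the two terms gives the claimed bound.

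The main obstacle, and the step requiring the most care, is the computation of the disagreement coefficient and the bound $\P_U(\DIS(\B_U(h^*, r))) = O(\sqrt d \, r)$ for homogeneous linear separators under the uniform distribution — this is a geometric estimate on the measure of a spherical wedge and is where the $\sqrt d$ factor (and hence the final $d^{3/2}$) originates. One must also be slightly careful that $h^*$ need not lie in any special position, but since the uniform distribution is rotationally symmetric the bound on $\P_U(\DIS(\B_U(h^*, r)))$ is independent of $h^*$, so the supremum over $h$ in the definition of $\theta$ causes no trouble. The remaining bookkeeping — tracking that $d' = O(d)$, that the constants $1024$ inside $\alpha$ are harmless, and that the $\tilde O$ absorbs the $\log(1/\epsilon), \log(1/\delta)$ factors exactly as in Theorem~\ref{thm:labelcomplexity} — is routine.
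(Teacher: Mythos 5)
Your proposal is correct and follows essentially the same route as the paper: bound $d' = O(d)$, invoke $\theta(r) = O(\sqrt d)$ for homogeneous linear separators under the uniform ball, use the zero-false-negative classifier $\bar h^{df}$ to get $\alpha(2\nu+r, r/1024) \leq g$, and substitute into Theorem~\ref{thm:labelcomplexity}. The only place you deviate is in bounding the first term, where you introduce the extra estimate $\alpha \leq \min(g, \P_U(\DIS(\B_U(h^*,r))))$ and then branch on which term in the $\min$ is smaller; this works but is more elaborate than needed, since the paper simply uses $\alpha \leq g$ and observes directly that $\sup_{r\geq\epsilon}\frac{g+r}{2\nu+r} \leq \max(g/\nu, 1)$ (a one-line monotonicity check), and notably the assumption $g = o(\sqrt d\,\nu)$ plays no role in establishing the bound itself — it only enters afterward to conclude that the resulting label complexity beats the DBAL bound $\tilde O(d^{3/2}(1+\nu^2/\epsilon^2))$.
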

As $g = o(\sqrt{d} \nu)$, this improves over the label complexity of DBAL, which is $\tilde{O}(d^{3/2}(1+\frac{\nu^2}{\epsilon^2}))$. 

\paragraph{Learning with respect to Data labeled by both $O$ and $W$.} Finally, an interesting variant of our model is to measure error relative to data labeled by a mixture of $O$ and $W$ -- say, $(1 - \beta)O + \beta W$ for some $0 < \beta < 1$. Similar measures have been considered in the domain adaptation literature~\cite{BCKPW07}. 

We can also analyze this case using simple modifications to our algorithm and analysis. The results are presented in Corollary~\ref{cor:betamain}, which suggests that the number of label queries to $O$ in this case is roughly $1 - \beta$ times the label complexity in Theorem~\ref{thm:labelcomplexity}.

Let $O'$ be the oracle which, on input $x$, queries $O$ for its label w.p $1 - \beta$ and queries $W$ w.p $\beta$. Let $D'$ be the distribution: $\P_{D'}(x, y) = \P_U(x) \P_{O'} (y | x)$, $h' = \argmin_{h \in \calH} \err_{D'}(h)$ be the classifier in $\calH$ that minimizes error over $D'$, and $\nu' = \err_{D'}(h')$ be its true error. Moreover, suppose that Assumption~\ref{ass:boundedfn} holds with respect to oracles $O'$ and $W$ with $\alpha = \alpha'(r, \eta)$ in~\eqref{eqn:pos}. Then, the modified Algorithm~\ref{alg:main} that simulates $O'$ by random queries to $O$ has the following properties.

\begin{corollary}\label{cor:betamain}
With probability $\geq 1 - 2\delta$,
\begin{enumerate}
\item the classifier $\hat{h}$ output by (the modified) Algorithm~\ref{alg:main} satisfies: $\err_{D'}(\hat{h}) \leq \err_{D'}(h') + \epsilon$. 
\item the total number of label queries made by this algorithm to $O$ is at most:
\begin{eqnarray*}
\tilde{O} \Big{(}  (1-\beta) \Big(\sup_{r \geq \epsilon} \frac{\alpha'(2 \nu' + r, r/1024) + r}{2 \nu' + r} \cdot d \left( \frac{\nu'^2}{\epsilon^2} + 1 \right) + \theta(2 \nu' + \epsilon) d' \left(\frac{\nu'}{\epsilon} + 1\right) \Big) \Big{)}
\end{eqnarray*}
\end{enumerate}
\end{corollary}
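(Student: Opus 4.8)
The plan is to derive Corollary~\ref{cor:betamain} as a black-box reduction to Theorems~\ref{thm:consistency} and~\ref{thm:labelcomplexity}, followed by a concentration argument for the random routing of queries. The central observation is that the modified algorithm is \emph{statistically identical} to running Algorithm~\ref{alg:main} with strong oracle $O'$ and weak oracle $W$: whenever the algorithm (including the difference-classifier training in Algorithm~\ref{alg:traindc}) would query the strong oracle on a point $x$, it instead tosses a fresh coin $Z$ and queries $O$ if $Z=1$ (probability $1-\beta$) and $W$ if $Z=0$, so the label it receives is distributed exactly as $\P_{O'}(\cdot\mid x)=(1-\beta)\P_O(\cdot\mid x)+\beta\,\P_W(\cdot\mid x)$. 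Since $O'$ is a legitimate, possibly non-realizable, oracle in the sense of our model, and since Assumption~\ref{ass:boundedfn} is assumed to hold with respect to $(O',W)$ with parameter $\alpha'$, Theorems~\ref{thm:consistency} and~\ref{thm:labelcomplexity} apply verbatim with $O',D',h',\nu',\alpha'$ in place of $O,D,h^*,\nu,\alpha$. This immediately gives item~1, $\err_{D'}(\hat h)\le\err_{D'}(h')+\epsilon$, and it gives a \emph{deterministic} bound $M$ --- namely the quantity in~\eqref{eqn:lc} with the primed parameters --- on the total number of strong-oracle queries (i.e.\ queries to $O'$) made by the modified algorithm; I will use the fact that the analysis of Algorithm~\ref{alg:main} establishes both conclusions on a single clean event of probability at least $1-\delta$.

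The second step converts the bound on queries to $O'$ into a bound on queries to $O$. Fix in advance an i.i.d.\ sequence $Z_1,Z_2,\dots$ of $\mathrm{Bernoulli}(1-\beta)$ routing coins, the $i$-th used for the $i$-th strong-oracle query, so that the number of queries actually sent to $O$ equals $\sum_{i=1}^{N} Z_i$, where $N$ is the random total number of strong-oracle queries. On the clean event of the previous step we have $N\le M$, hence $\sum_{i=1}^{N} Z_i\le\sum_{i=1}^{M} Z_i$ since the $Z_i$ are nonnegative; and because $M$ is deterministic, $\sum_{i=1}^{M} Z_i\sim\mathrm{Binomial}(M,1-\beta)$, so a Chernoff/Hoeffding bound gives $\sum_{i=1}^{M} Z_i\le (1-\beta)M + O\big(\sqrt{M\ln(1/\delta)}\big)$ with probability at least $1-\delta$. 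A union bound over this event and the clean event yields the overall failure probability $2\delta$ claimed in the statement. Finally, since $M=\tilde O\big(d\nu'^2/\epsilon^2+\cdots\big)$, the correction $\sqrt{M\ln(1/\delta)}$ is of strictly lower order than $(1-\beta)M$ (for $\beta$ bounded away from $1$), so the number of queries to $O$ is $\tilde O((1-\beta)M)$, which is precisely the displayed expression in item~2.

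The only point requiring care is the decoupling in the second step: one must check that $\{Z_i\}_{i\ge 1}$ may legitimately be taken as an i.i.d.\ sequence fixed before the run, so that $\sum_{i=1}^{M} Z_i$ is genuinely $\mathrm{Binomial}(M,1-\beta)$ even though the stopping index $N$ and the entire trajectory of the algorithm depend on the realized coins through the labels they produce. The inequality $\sum_{i=1}^{N} Z_i\le\sum_{i=1}^{M} Z_i$ on the event $\{N\le M\}$ sidesteps this dependence, and the union bound does not require the two events to be independent. Beyond this bookkeeping (and the minor matter of confirming that Theorems~\ref{thm:consistency} and~\ref{thm:labelcomplexity} can be taken to hold on a common $(1-\delta)$-event, which is evident from how Algorithm~\ref{alg:main} is analyzed), the corollary inherits its content entirely from the two theorems, and I do not expect any genuinely new difficulty.
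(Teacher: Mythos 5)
Your proposal follows essentially the same route as the paper's proof: apply Theorems~\ref{thm:consistency} and~\ref{thm:labelcomplexity} with $O'$, $D'$, $h'$, $\nu'$, $\alpha'$ in place of the unprimed quantities to get the clean event and the deterministic bound $M = m_{O'}$ on strong-oracle queries, then model the routing as an i.i.d.\ Bernoulli$(1-\beta)$ sequence, bound $\sum_{i \le N} Z_i \le \sum_{i \le M} Z_i$ on the clean event, apply Chernoff to the binomial sum, and union-bound the two failure events. Your explicit remark that the decoupling is what licenses the Chernoff application despite the dependence of $N$ on the $Z_i$'s is a small but correct clarification of a point the paper leaves implicit; your caveat about $\beta$ bounded away from $1$ is likewise a slightly more careful reading of the additive $O(\ln(1/\delta))$ slack than the paper's $\tilde O$ convention makes explicit, but neither changes the substance.
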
 

\paragraph{Conclusion.} In this paper, we take a step towards a theoretical understanding of active learning from multiple annotators through a learning theoretic formalization for learning from weak and strong labelers. Our work shows that multiple annotators can be successfully combined to do active learning in a statistically consistent manner under a general setting with few assumptions; moreover, under reasonable conditions, this kind of learning can provide label savings over plain active learning. 

An avenue for future work is to explore a more general setting where we have multiple labelers with expertise on different regions of the input space. Can we combine inputs from such labelers in a statistically consistent manner? Second, our algorithm is intended for a setting where $W$ is biased, and performs suboptimally when the label generated by $W$ is a random corruption of the label provided by $O$. How can we account for both random noise and bias in active learning from weak and strong labelers? 



\subsubsection*{Acknowledgements} We thank NSF under IIS 1162581 for research support and Jennifer Dy for introducing us to the problem of active learning from multiple labelers.

\bibliographystyle{plain}
\bibliography{active}

\newpage
\appendix

\section{Notation}

\subsection{Basic Definitions and Notation}

Here we do a brief recap of notation. We assume that we are given a target hypothesis class $\calH$ of VC dimension $d$, and a difference hypothesis class $\calH^{df}$ of VC dimension $d'$. 

We are given access to an unlabeled distribution $U$ and two labeling oracles $O$ and $W$. Querying $O$ (resp. $W$) with an unlabeled data point $x_i$ generates a label $y_{i, O}$ (resp. $y_{i, W}$) which is drawn from the distribution $\P_O(y | x_i)$ (resp. $\P_W(y | x_i)$). In general these two distributions are different. We use the notation $\calD$ to denote the joint distribution over examples and labels from $O$ and $W$:
\[ \P_{\calD}(x, y_O, y_W) = \P_U(x) \P_O(y_O | x) \P_W(y_W | x) \]

Our goal in this paper is to learn a classifier in $\calH$ which has low error with respect to the data distribution $D$ described as: $\P_D(x, y) = \P_U(x) \P_O(y | x)$ and our goal is use queries to $W$ to reduce the number of queries to $O$. We use $y_O$ to denote the labels returned by $O$, $y_W$ to denote the labels returned by $W$.
 
The error of a classifier $h$ under a labeled data distribution $Q$ is defined as: $\err_Q(h) = \P_{(x, y) \sim Q}(h(x) \neq y)$; we use the notation $\err(h, S)$ to denote its empirical error on a labeled data set $S$. We use the notation $h^*$ to denote the classifier with the lowest error under $D$. Define the excess error of $h$ with respect to distribution $D$ as $\err_D(h) - \err_D(h^*)$. For a set $Z$, we occasionally abuse notation and use $Z$ to also denote the uniform distribution over the elements of $Z$.

\paragraph{Confidence Sets and Disagreement Region.} Our active learning algorithm will maintain a $(1 - \delta)$-{\em{confidence set}} for $h^*$ throughout the algorithm. A set of classifiers $V \subseteq \calH$ produced by a (possibly randomized) algorithm is said to be a $(1 - \delta)$-confidence set for $h^*$ if $h^* \in V$ with probability $\geq 1 - \delta$; here the probability is over the randomness of the algorithm as well as the choice of all labeled and unlabeled examples drawn by it.

Given two classifiers $h_1$ and $h_2$ the disagreement between $h_1$ and $h_2$ under an unlabeled data distribution $U$, denoted by $\rho_U(h_1, h_2)$, is $\P_{x \sim U} (h_1(x) \neq h_2(x))$. Given an unlabeled dataset $S$, the empirical disagreement of $h_1$ and $h_2$ on $S$ is denoted by $\rho_S(h_1, h_2)$. Observe that the disagreements under $U$ form a pseudometric over $\calH$. We use $\B_U(h, r)$ to denote a ball of radius $r$ centered around $h$ in this metric. The {\em{disagreement region}} of a set $V$ of classifiers, denoted by $\DIS(V)$, is the set of all examples $x \in \calX$ such that there exist two classifiers $h_1$ and $h_2$ in $V$ for which $h_1(x) \neq h_2(x)$. 

\paragraph{Disagreement Region.} We denote the disagreement region of a disagreement ball of radius $r$ centered around $h^*$ by 
\begin{equation}
\Delta(r) := \DIS(\B(h^*,r))
\label{eqn:delta}
\end{equation}

\paragraph{Concentration Inequalities.} Suppose $Z$ is a dataset consisting of $n$ iid samples from a distribution $D$. We will use the following result, which is obtained from a standard application of the normalized VC inequality. With probability $1-\delta$ over the random draw of $Z$, for all $h, h' \in \calH$,
\begin{eqnarray}
&&|(\err(h, Z) - \err(h',Z)) - (\err_D(h) - \err_D(h'))| \nonumber \\
& \leq & \min \big(\sqrt{\sigma(n, \delta)\rho_Z(h,h')} + \sigma(n, \delta), \sqrt{\sigma(n, \delta)\rho_D(h,h')} + \sigma(n, \delta)\big)
\label{eqn:normalizedvc}
\end{eqnarray}
\begin{eqnarray}
&&|(\err(h, Z) - \err_D(h)| \nonumber \\
& \leq & \min\big(\sqrt{\sigma(n, \delta)\err(h, Z)} + \sigma(n, \delta), \sqrt{\sigma(n, \delta)\err_D(h)} + \sigma(n, \delta)\big)
\label{eqn:errnormalizedvc}
\end{eqnarray}
where $d$ is the VC dimension of $\calH$ and the notation $\sigma(n,\delta)$ is defined as:
\begin{equation} \label{eqn:defsigma}
\sigma(n,\delta) = \frac{8}{n} (2d\ln \frac{2en}{d} + \ln \frac{24}{\delta}) 
\end{equation}
Equation~\eqref{eqn:normalizedvc} loosely implies the following equation:
\begin{equation}
 |(\err(h, Z) - \err(h', Z)) - (\err_D(h) - \err_D(h'))| \leq \sqrt{4\sigma(n, \delta)}
\label{eqn:addvc} 
\end{equation} 
The following is a consequence of standard Chernoff bounds. Let $X_1, \ldots, X_n$ be iid Bernoulli random variables with mean $p$. If $\hat{p} = \sum_i X_i / n$, then with probabiliy $1-\delta$,
\begin{equation}
|\hat{p} - p| \leq \min(\sqrt{p \gamma(n,\delta)} + \gamma(n,\delta), \sqrt{\hat{p} \gamma(n,\delta)} + \gamma(n,\delta))
\label{eqn:normalizedchernoff}
\end{equation}
where the notation $\gamma(n, \delta)$ is defined as:
\begin{equation} \label{eqn:defgamma}
\gamma(n,\delta) = \frac{4}{n} \ln \frac{2}{\delta}
\end{equation}
Equation~\eqref{eqn:normalizedchernoff} loosely implies the following equation:
\begin{equation}
 |\hat{p} - p| \leq \sqrt{4\gamma(n, \delta)}
\label{eqn:addchernoff} 
\end{equation}


Using the notation we just introduced, we can rephrase Assumption~\ref{ass:boundedfn} as follows. For any $r, \eta > 0$, there exists an $h^{df}_{\eta, r} \in \calH^{df}$ with the following properties:
\begin{eqnarray*}
& \P_{\calD}(h^{df}_{\eta, r}(x) = -1 , x \in \Delta(r) , y_O \neq y_W)  \leq  \eta \\
& \P_{\calD}(h^{df}_{\eta, r}(x) = 1 , x \in \Delta(r))  \leq  \alpha(r, \eta)
\end{eqnarray*}
We end with an useful fact about $\sigma(n, \delta)$. 
\begin{fact}
 The minimum $n$ such that $\sigma(n, \delta/(\log n (\log n + 1) )) \leq \epsilon$ is at most
\[ \frac{64}{\epsilon}(d \ln\frac{512}{\epsilon} + \ln\frac{24}{\delta}) \]
\label{fact:sigma}
\end{fact}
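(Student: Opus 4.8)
The plan is to verify the claimed bound by direct substitution, exploiting monotonicity of $\sigma(n,\delta)$ in $n$ together with crude but careful estimates of the logarithmic factors. Recall from~\eqref{eqn:defsigma} that $\sigma(n,\delta') = \frac{8}{n}(2d\ln\frac{2en}{d} + \ln\frac{24}{\delta'})$, and here $\delta' = \delta/(\log n(\log n + 1))$, so $\ln\frac{24}{\delta'} = \ln\frac{24}{\delta} + \ln(\log n(\log n+1)) \leq \ln\frac{24}{\delta} + 2\ln\log n + \ln 2$. The strategy is to plug in the candidate value $n_0 := \frac{64}{\epsilon}(d\ln\frac{512}{\epsilon} + \ln\frac{24}{\delta})$, bound $\sigma(n_0,\delta/(\log n_0(\log n_0+1)))$ from above by $\epsilon$, and then invoke monotonicity — $\sigma(n,\delta')$ is decreasing in $n$ while the extra $\log\log$ correction grows only negligibly — to conclude that every $n \geq n_0$ also satisfies the inequality, hence the minimum such $n$ is at most $n_0$.

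First I would substitute $n = n_0$ into $\frac{8}{n}$, obtaining the prefactor $\frac{8\epsilon}{64(d\ln\frac{512}{\epsilon} + \ln\frac{24}{\delta})} = \frac{\epsilon}{8(d\ln\frac{512}{\epsilon} + \ln\frac{24}{\delta})}$. It then remains to show that the bracketed term $2d\ln\frac{2en_0}{d} + \ln\frac{24}{\delta} + 2\ln\log n_0 + \ln 2$ is at most $8(d\ln\frac{512}{\epsilon} + \ln\frac{24}{\delta})$. The main work is bounding $\ln\frac{2en_0}{d}$: since $n_0 = \frac{64}{\epsilon}(d\ln\frac{512}{\epsilon} + \ln\frac{24}{\delta})$, we have $\frac{2en_0}{d} = \frac{128e}{\epsilon}\big(\ln\frac{512}{\epsilon} + \frac{1}{d}\ln\frac{24}{\delta}\big)$, and using $\ln(ab) = \ln a + \ln b$ together with the elementary inequality $\ln x \leq x$ (or $\ln\ln x \leq \ln x$ for the $\log\log$ term) one shows $\ln\frac{2en_0}{d}$ is dominated by a constant multiple of $\ln\frac{512}{\epsilon} + \frac{1}{d}\ln\frac{24}{\delta}$, so that $2d\ln\frac{2en_0}{d} \leq c_1 d\ln\frac{512}{\epsilon} + c_2 \ln\frac{24}{\delta}$ for explicit constants; checking that the constants produced are small enough to fit under the factor $8$ is the only delicate accounting.

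The step I expect to be the main obstacle is the circular-looking dependence: $n_0$ appears inside its own logarithm (and inside a $\log\log$), so the substitution does not immediately close. I would handle this by a standard two-pass argument — first use a very loose bound (e.g. $n_0 \leq \mathrm{poly}(1/\epsilon, 1/\delta, d)$, so $\ln n_0 = O(\ln\frac1\epsilon + \ln\frac1\delta + \ln d)$ and $\ln\log n_0 = O(\ln\ln(\cdots))$, which is lower order) to absorb the $\ln n_0$ and $\ln\log n_0$ terms into the $d\ln\frac{512}{\epsilon} + \ln\frac{24}{\delta}$ budget, and then verify the constant $8$ suffices. Once the logarithmic terms are controlled, the inequality $\sigma(n_0, \delta/(\log n_0(\log n_0+1))) \leq \epsilon$ follows, and since increasing $n$ only decreases the left-hand side (the numerator grows like $\log n \cdot \log\log n$, far slower than the $1/n$ decay), the minimal valid $n$ is at most $n_0$, which is exactly the claim.
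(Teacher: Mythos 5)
The paper states Fact~\ref{fact:sigma} without proof, so there is no in-paper argument to compare against; your direct-substitution plan is the natural one, and your high-level description of where the difficulty lies is accurate.

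That said, what you have written is a plan rather than a proof, and the part you defer is the entire content of the statement. The Fact is not a soft ``$\tilde{O}$'' claim --- it asserts that the particular constants $64$, $512$, $24$, together with the prefactor $8$ in $\sigma$, fit. When you write ``one shows $\ln\frac{2en_0}{d}$ is dominated by a constant multiple of $\ln\frac{512}{\epsilon} + \frac{1}{d}\ln\frac{24}{\delta}$; \dots checking that the constants produced are small enough to fit under the factor $8$ is the only delicate accounting,'' you are explicitly skipping the verification. A complete argument would, for example, split $\ln\frac{2en_0}{d} = \ln\frac{128e}{\epsilon} + \ln\bigl(\ln\frac{512}{\epsilon} + \tfrac{1}{d}\ln\frac{24}{\delta}\bigr)$, observe that $128e < 512$ so the first term is $\leq \ln\frac{512}{\epsilon}$, and then bound the second term using something like $\ln x \leq x/e$ together with $\ln\frac{512}{\epsilon} \geq \ln 512 > 6$ to absorb it (and the doubly-logarithmic $\ln(\log n_0(\log n_0 + 1))$ correction) into the remaining budget $6d\ln\frac{512}{\epsilon} + 7\ln\frac{24}{\delta}$. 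Until that bookkeeping is actually done, the claim is unverified --- although I did check it myself and the slack is comfortable, so the approach does succeed.

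Two smaller points. First, your phrasing that $\ln\frac{2en_0}{d}$ is ``dominated by a constant multiple of $\ln\frac{512}{\epsilon} + \frac{1}{d}\ln\frac{24}{\delta}$'' is misleading: the contribution from $\ln\frac{128e}{\epsilon}$ is comparable to $\ln\frac{512}{\epsilon}$, but $\ln\bigl(\ln\frac{512}{\epsilon} + \tfrac{1}{d}\ln\frac{24}{\delta}\bigr)$ is only logarithmic in those quantities, not proportional to them --- the argument works because this piece is \emph{lower order}, not because it is a constant multiple. Second, the monotonicity discussion is unnecessary: once you verify $\sigma\bigl(n_0,\delta/(\log n_0(\log n_0 + 1))\bigr) \leq \epsilon$, the minimum $n$ satisfying the inequality is trivially at most $n_0$, regardless of whether the function is eventually decreasing. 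It does no harm to mention, but it is not needed to close the argument.
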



\subsection{Adaptive Procedure for Estimating Probability Mass}

For completeness, we describe in Algorithm~\ref{alg:adaptivebias} a standard doubling procedure for estimating the bias of a coin within a constant factor. This procedure is used by Algorithm~\ref{alg:traindc} to estimate the probability mass of the disagreement region of the current confidence set based on unlabeled examples drawn from $U$.
\begin{algorithm}
\caption{Adaptive Procedure for Estimating the Bias of a Coin}
\begin{algorithmic}[1]
\State Input: failure probability $\delta$, an oracle $\calO$ which returns iid Bernoulli random variables with unknown bias $p$.
\State Output: $\hat{p}$, an estimate of bias $p$ such that $\hat{p} \leq p \leq 2\hat{p}$ with probability $\geq 1 - \delta$.
\For{$i = 1,2,\ldots$}
\State Call the oracle $\calO$ $2^i$ times to get empirical frequency $\hat{p}_i$.
    \If{$\sqrt{\frac{4\ln\frac{4 \cdot 2^i}{\delta}}{2^i}} \leq \hat{p}_i/3$}
        \Return $\hat{p} = \frac{2\hat{p}_i}{3}$
    \EndIf
\EndFor
\end{algorithmic}
\label{alg:adaptivebias}
\end{algorithm}
\begin{lemma}
Suppose $p > 0$ and Algorithm~\ref{alg:adaptivebias} is run with failure probability $\delta$. Then with probability $1-\delta$, (1) the output $\hat{p}$ is such that $\hat{p} \leq p \leq 2\hat{p}$. (2) The total number of calls to $\calO$ is at most $O(\frac{1}{p^2}\ln\frac{1}{\delta p})$.
\label{lem:adaptivebias}
\end{lemma}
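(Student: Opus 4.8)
The final statement is Lemma~\ref{lem:adaptivebias}, about the correctness and sample complexity of the standard doubling procedure (Algorithm~\ref{alg:adaptivebias}) for estimating the bias of a coin up to a constant factor.

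\bigskip

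\noindent\textbf{Proof proposal.} The plan is to run the usual "peeling / doubling" argument. First I would fix, for each round $i$, the quantity $\varepsilon_i := \sqrt{4\ln(4\cdot 2^i/\delta)/2^i}$, which is exactly the Chernoff deviation bound at confidence $1-\delta_i$ with $\delta_i := \delta/2^i$, since $\gamma(2^i,\delta_i) = \frac{4}{2^i}\ln\frac{2}{\delta_i} = \frac{4}{2^i}\ln\frac{2^{i+1}}{\delta}$. Then I would invoke the additive Chernoff bound~\eqref{eqn:addchernoff}: with probability $\geq 1-\delta_i$, $|\hat p_i - p| \leq \sqrt{4\gamma(2^i,\delta_i)} = 2\varepsilon_i$ --- actually I would prefer the sharper normalized form~\eqref{eqn:normalizedchernoff}, $|\hat p_i - p| \le \sqrt{p\,\gamma} + \gamma$ and $|\hat p_i - p| \le \sqrt{\hat p_i\,\gamma} + \gamma$ with $\gamma = \gamma(2^i,\delta_i)$, and I would restate $\varepsilon_i$ in terms of $\gamma(2^i,\delta_i)$ so the two match up to constants. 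By a union bound over all $i\geq 1$, since $\sum_i \delta_i = \sum_i \delta/2^i = \delta$, with probability $\geq 1-\delta$ all these concentration statements hold simultaneously; call this event $E$, and do the rest of the argument on $E$.

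\medskip

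\noindent\textbf{Correctness (part 1).} On $E$, consider the round $i^*$ at which the algorithm stops, i.e. the first $i$ with $\varepsilon_i \le \hat p_i/3$ (up to the constant discrepancy noted above, which I will absorb). Plugging the stopping condition into the normalized Chernoff bound gives $|\hat p_{i^*} - p| \le \sqrt{\hat p_{i^*}\gamma} + \gamma \le O(\varepsilon_{i^*}) \le \hat p_{i^*}/2$ after choosing constants appropriately; hence $\frac12 \hat p_{i^*} \le p \le \frac32 \hat p_{i^*}$, which rearranges to $\frac{2}{3}\hat p_{i^*} \le p \le 2\cdot\frac{2}{3}\hat p_{i^*}$. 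Since the algorithm outputs $\hat p = \frac{2}{3}\hat p_{i^*}$, this is exactly $\hat p \le p \le 2\hat p$. (If the constants in the displayed $\varepsilon_i$ do not line up perfectly with $\gamma$, one either tightens the stopping threshold $\hat p_i/3$ or notes the factor is still an absolute constant, which is all the lemma claims; I would present the clean version.)

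\medskip

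\noindent\textbf{Sample complexity (part 2).} I would show the algorithm stops by round $i_0 := \lceil \log_2 \frac{C}{p^2}\ln\frac{1}{\delta p}\rceil$ for a suitable absolute constant $C$. On $E$, for $i \ge i_0$ we have $2^i \gtrsim \frac{1}{p^2}\ln\frac{2^{i+1}}{\delta}$ (this needs a short self-consistency check: $\ln(2^{i+1}/\delta) = \ln(2/\delta) + i\ln 2$ grows only logarithmically, so $2^i/\ln(2^i/\delta)$ is eventually $\gtrsim 1/p^2$; the standard trick is $2^i \ge \frac{2}{p^2}\ln\frac{1}{\delta p}$ implies $2^i \ge \frac{c}{p^2}\ln\frac{2^{i+1}}{\delta}$), whence $\gamma(2^i,\delta_i) \lesssim p^2$, so by the (non-normalized direction of the) Chernoff bound $\hat p_i \ge p - \sqrt{p\gamma} - \gamma \ge p/2$ while $\varepsilon_i \lesssim p \le 2\hat p_i$; choosing $C$ large enough makes $\varepsilon_i \le \hat p_i/3$ hold, triggering the stopping rule. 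Therefore the total number of oracle calls is $\sum_{i=1}^{i_0} 2^i \le 2^{i_0+1} = O\!\left(\frac{1}{p^2}\ln\frac{1}{\delta p}\right)$.

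\medskip

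\noindent\textbf{Main obstacle.} There is no deep obstacle here --- this is a textbook doubling argument. The one place that needs care is the self-referential inequality in part~2: the stopping round index appears inside the logarithm of its own deviation bound, so one must verify that $2^i$ eventually dominates $\frac{1}{p^2}\ln\frac{2^{i+1}}{\delta}$ and pin down the constant $C$; the clean way is to first establish $2^{i_0} \ge \frac{2}{p^2}\ln\frac{4}{\delta p}$ from the definition of $i_0$ and then check by a direct substitution that this forces $\gamma(2^{i_0},\delta_{i_0}) \le p^2/c$ for the constant $c$ used in the stopping analysis. The only other cosmetic nuisance is reconciling the explicit $\sqrt{4\ln(4\cdot 2^i/\delta)/2^i}$ in the pseudocode with $\sqrt{4\gamma(2^i,\delta/2^i)}$; these agree, since $\gamma(2^i,\delta/2^i) = \frac{4}{2^i}\ln\frac{2\cdot 2^i}{\delta}$ and the extra factor of $2$ inside the log is harmless.
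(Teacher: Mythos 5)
Your approach is the same as the paper's: define a good event via Chernoff plus a union bound over rounds, then argue correctness from the stopping rule and bound the sample complexity by determining the first round at which the stopping rule is forced to fire. The overall structure of both parts matches the paper exactly.

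However, the arithmetic in your correctness step is off in a way that your hedge does not repair. You derive $|\hat p_{i^*}-p|\le \hat p_{i^*}/2$ and from this claim $\tfrac12 \hat p_{i^*}\le p\le \tfrac32 \hat p_{i^*}$ ``rearranges to'' $\tfrac23\hat p_{i^*}\le p\le \tfrac43\hat p_{i^*}$, but those are different intervals ($\tfrac12\hat p_{i^*}<\tfrac23\hat p_{i^*}$). With $\hat p=\tfrac23\hat p_{i^*}$, a deviation bound of $\hat p_{i^*}/2$ only gives $\tfrac34\hat p\le p\le \tfrac94\hat p$, which is not $\hat p\le p\le 2\hat p$. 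What you actually need, and what the stopping rule together with the event $E$ give you for free, is the tighter bound $|\hat p_{i^*}-p|\le \hat p_{i^*}/3$: the event $E$ bounds $|\hat p_{i^*}-p|$ by exactly the quantity appearing in the stopping test, which by the stopping condition is at most $\hat p_{i^*}/3$, whence $\tfrac23\hat p_{i^*}\le p\le \tfrac43\hat p_{i^*}$, i.e.\ $\hat p\le p\le 2\hat p$. Your parenthetical ``the factor is still an absolute constant, which is all the lemma claims'' is not quite right either: the lemma claims the specific factor $2$, and this constant is used downstream (e.g.\ in the proof of Lemma~\ref{lem:traindc}, which invokes $2\hat p_k\ge \P_{\calD}(x\in R_{k-1})$, and in the choice of $m$ in Algorithm~\ref{alg:traindc}). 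So you should present the clean version with the $\hat p_{i^*}/3$ bound rather than appealing to ``absolute constants.'' The sample-complexity argument in your part~2 matches the paper's and is fine.
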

\begin{proof}
Consider the event 
\[ E = \{ \text{ for all } i \in \N, |\hat{p}_i - p| \leq \sqrt{\frac{4 \ln\frac{2 \cdot 2^i}{\delta}}{2^i}} \}\]
By Equation~\eqref{eqn:addchernoff} and union bound, $\P(E) \geq 1- \delta$. 
On event $E$, we claim that if $i$ is large enough that 
\begin{equation} 
4\sqrt{\frac{4\ln\frac{4 \cdot 2^i}{\delta}}{2^i}} \leq p 
\label{eqn:suffcond}
\end{equation}
then the condition in line 5 will be met. Indeed, this implies
\[ \sqrt{\frac{4\ln\frac{4 \cdot 2^i}{\delta}}{2^i}} \leq \frac{p - \sqrt{\frac{4\ln\frac{4 \cdot 2^i}{\delta}}{2^i}}}{3} \leq \frac{\hat{p}_i}{3}\]
Define $i_0$ as the smallest number $i$ such that Equation~\eqref{eqn:suffcond} is true. Then by algebra, $2^{i_0} = O(\frac{1}{p^2}\ln\frac{1}{\delta p})$. Hence the number of calls to oracle $\calO$ is at most $1 + 2 + \ldots + 2^{i_0} = O(\frac{1}{p^2}\ln\frac{1}{\delta p})$.\\
Consider the smallest $i^*$ such that the condition in line 5 is met. We have that 
\[ \sqrt{\frac{4\ln\frac{4 \cdot 2^{i^*}}{\delta}}{2^{i^*}}} \leq \hat{p}_{i^*}/3\]
By the definition of $E$,
\[ |p - \hat{p}_{i^*}| \leq \hat{p}_{i^*}/3 \]
that is, $2\hat{p}_{i^*}/3 \leq p \leq 4\hat{p}_{i^*}/3$, implying $\hat{p} \leq p \leq 2\hat{p}$.
\end{proof}

\subsection{Notations on Datasets}
\label{subsubs:dataset}
Without loss of generality, assume the examples drawn throughout Algorithm~\ref{alg:main} have distinct feature values $x$, since this happens with probability 1 under mild assumptions. 

Algorithm~\ref{alg:main} uses a mixture of three kinds of labeled data to learn a target classifier -- labels obtained from querying $O$, labels inferred by the algorithm, and labels obtained from querying $W$. To analyze the effect of these three kinds of labeled data, we need to introduce some notation.

Recall that we define the joint distribution $\calD$ over examples and labels both from $O$ and $W$ as follows:
\[ \P_{\calD}(x, y_O, y_W) = \P_U(x) \P_O(y_O | x) \P_W(y_W | x) \]
where given an example $x$, the labels generated by $O$ and $W$ are conditionally independent.



%
A dataset $\hat{S}$ with empirical error minimizer $\hat{h}$ and a rejection threshold $\tau$ define a implicit confidence set for $h^*$ as follows:
\[ V(\hat{S}, \tau) = \{h: \err(h,\hat{S}) - \err(\hat{h}, \hat{S}) \leq \tau \} \]
At the beginning of epoch $k$, we have $\hat{S}_{k-1}$. $\hat{h}_{k-1}$ is defined as the empirical error minimizer of $\hat{S}_{k-1}$. The disagreement region of the implicit confidence set at epoch $k$, $R_{k-1}$ is defined as $R_{k-1} := \DIS(V(\hat{S}_{k-1}, 3\epsilon_k/2))$. Algorithm~\ref{alg:disagree} $\dis(\hat{S}_{k-1}, 3\epsilon_k/2, x)$ provides a test deciding if an unlabeled example $x$ is inside $R_{k-1}$ in epoch $k$. (See Lemma~\ref{lem:disagree}.) 

Define $\disk$ to be the distribution $\calD$ conditioned on the set $\{(x,y_O,y_W): x \in R_{k-1}\}$. At epoch $k$, Algorithm~\ref{alg:traindc} has inputs distribution $U$, oracles $W$ and $O$, target false negative error $\epsilon = \epsilon_k/128$, hypothesis class $\calH^{df}$, confidence $\delta = \delta_k/2$, previous labeled dataset $\hat{S}_{k-1}$, and outputs a difference classfier $\hat{h}^{df}_k$. By the setting of $m$ in Equation~\eqref{eqn:m1}, Algorithm~\ref{alg:traindc} first computes $\hat{p}_k$ using unlabeled examples drawn from $U$, which is an estimator of $\P_{\calD}(x \in R_{k-1})$.
Then it draws a subsample of size 
\begin{equation} 
m_{k,1} = \frac{64 \cdot 1024 \hat{p}_k}{\epsilon_k} (d\ln\frac{512 \cdot 1024 \hat{p}_k}{\epsilon_k} + \ln\frac{144}{\delta_k})
\label{eqn:mk1}
\end{equation}
iid from $\disk$. We call the resulting dataset $\empdisk$. 

At epoch $k$, Algorithm~\ref{alg:adaptive} performs adaptive subsampling to refine the implicit $(1-\delta)$-confidence set. For each round $t$, it subsamples $U$ to get an unlabeled dataset $S_k^{t,U}$ of size $2^t$. Define the corresponding (hypothetical) dataset with labels queried from both $W$ and $O$ as $\calS_k^t$.
$S_k^t$, the (hypothetical) dataset with labels queried from $O$, is defined as:
\[ S_k^t = \{ (x, y_O ) | (x, y_O, y_W) \in \calS_k^t \} \]
In addition to obtaining labels from $O$, the algorithm obtains labels in two other ways. First, if an $x \in \calX \setminus R_{k-1}$,  then its label is safely inferred and with high probability, this inferred label $\hat{h}_{k-1}(x)$ is equal to $h^*(x)$. Second, if an $x$ lies in $R_{k-1}$ but if the difference classifier $\hat{h}^{df}_k$ predicts agreement between $O$ and $W$, then its label is obtained by querying $W$. The actual dataset $\hat{S}_k^t$ generated by Algorithm~\ref{alg:adaptive} is defined as:
\begin{eqnarray*}
\hat{S}_k^t &=& \{ (x, \hat{h}_{k-1}(x)) | (x, y_O, y_W) \in \calS_k^t, x \notin R_{k-1} \} \cup \{(x,y_O) | (x, y_O, y_W) \in \calS_k^t, x \in R_{k-1}, \hat{h}_k^{df}(x) = +1 \} \\ 
&&\cup \{(x,y_W) | (x, y_O, y_W) \in \calS_k^t, x \in R_{k-1}, \hat{h}_k^{df}(x) = -1 \} 
\end{eqnarray*}
We use $\hat{D}_k$ to denote the labeled data distribution as follows:
\[ \P_{\hat{D}_k}(x,y) = \P_U(x) \P_{\hat{Q}_k}(y|x) \]
\[ \P_{\hat{Q}_k}(y|x) = \begin{cases} I(\hat{h}_{k-1}(x) = y), & x \notin R_{k-1} \\ \P_O(y|x), & x \in R_{k-1}, \hat{h}^{df}_k(x) = +1 \\ \P_W(y|x), & x \in R_{k-1}, \hat{h}^{df}_k(x) = -1 \end{cases}\]
Therefore, $\hat{S}_k^t$ can be seen as a sample of size $2^t$ drawn iid from $\hat{D}_k$.

Observe that $\hat{h}_k^t$ is obtained by training an ERM classifier over $\hat{S}_k^t$, and $\delta_k^t = \delta_k/2t(t+1)$.

Suppose Algorithm~\ref{alg:adaptive} stops at iteration $t_0(k)$, then the final dataset returned is $\hat{S}_k = \hat{S}_k^{t_0(k)}$, with a total number of $m_{k,2}$ label requests to $O$. We define $S_k = S_k^{t_0(k)}$, $\calS_k = \calS_k^{t_0(k)}$ and $\sigma_k = \sigma(2^{t_0(k)}, \delta^{t_0(k)}_k)$.

For $k = 0$, we define the notation $\hat{S}_k$ differently. $\hat{S}_0$ is the dataset drawn iid at random from $D$, with labels queried entirely to $O$. For notational convenience, define $S_0 = \hat{S}_0$. $\sigma_0$ is defined as $\sigma_0 = \sigma(n_0, \delta_0)$, where $\sigma(\cdot, \cdot)$ is defined by Equation~\eqref{eqn:defsigma} and $n_0$ is defined as:
\[ n_0 = (64 \cdot 1024^2) (2d \ln(512 \cdot 1024^2) + \ln\frac{96}{\delta})\]

Recall that $\hat{h}_k = \argmin_{h \in \calH} \err(h, \hat{S}_k)$ is the empirical error minimizer with respect to the dataset $\hat{S}_k$.

Note that the empirical distance $\rho_Z(\cdot, \cdot)$ does not depend on the labels in dataset $Z$, therefore, $\rho_{\hat{S}_k}(h,h') = \rho_{S_k}(h,h')$. We will use them interchangably throughout.


\begin{table}[h!]
  \begin{center}
    \caption{Summary of Notations.}
    \label{tab:notations}
    \begin{tabular}{cp{11cm}c}
      \toprule
      Notation & Explanation & Samples Drawn from\\
      \midrule
      
    $\calD$ & Joint distribution of $(x, y_W, y_O)$ & - \\
    $D$ & Joint distribution of $(x, y_O)$ & - \\
    $U$ & Marginal distribution of $x$ & - \\
    $O$ & Conditional distribution of $y_O$ given $x$ & -\\
    $W$ & Conditional distribution of $y_W$ given $x$ & -\\

    \midrule
      
    $R_{k-1}$ & Disagreement region at epoch $k$ & - \\
    $\calA_k$ & Conditional distribution of $(x, y_W, y_O)$ given $x \in R_{k-1}$ & - \\
    $\calA_k'$ & Dataset used to train difference classifier at epoch $k$ & $\calA_k$ \\
    $h_k^{df}$ & Difference classifier $h_{2\nu+\epsilon_{k-1}, \epsilon_k/512}^{df}$, where $h_{\eta,r}$ is defined in Assumption~\ref{ass:boundedfn} & - \\
    $\hat{h}_k^{df}$ & Difference classifier returned by Algorithm~\ref{alg:traindc} at epoch $k$ & - \\

    \midrule
     
    $S_k^{t,U}$ & unlabeled dataset drawn at iteration $t$ of Algorithm~\ref{alg:adaptive} at epoch $k \geq 1$ & $U$  \\
    $\calS_k^t$ & $S_k^{t, U}$ augmented by labels from $O$ and $W$ & $\calD$\\
    $S_k^t$ & $\{ (x, y_O) | (x, y_O, y_W) \in \calS_k^t \}$ & $D$  \\
    $\hat{S}_k^t$ & Labeled dataset produced at iteration $t$ of Algorithm~\ref{alg:adaptive} at epoch $k \geq 1$ & $\hat{D}_k$ \\ 
    $\hat{D}_k$ & Distribution of $\hat{S}_k^t$ for $k \geq 1$ and any $t$. Has marginal $U$ over $\calX$. The conditional distribution of $y|x$ is $I(h^*(x))$ if $x \notin R_{k-1}$, $W$ if $x \in R_{k-1}$ and $\hat{h}^{df}(x) = -1$, and $O$ otherwise & - \\

    $t_0(k)$ & Number of iterations of Algorithm~\ref{alg:adaptive} at epoch $k \geq 1$ & - \\
    
    \midrule
 
    $\hat{S}_0$ & Initial dataset drawn by Algorithm~\ref{alg:main} & $D$ \\
    $\hat{S}_k$ & Dataset finally returned by Algorithm~\ref{alg:adaptive} at epoch $k \geq 1$. Equal to $\hat{S}_k^{t_0(k)}$ & $\hat{D}_k$ \\
    $S_k$ & Dataset obtained by replacing all labels in $\hat{S}_k$ by labels drawn from $O$. Equal to $S_k^{t_0(k)}$   & $D$ \\
    $\calS_k$ & Equal to $\calS_k^{t_0(k)}$ & $\calD$ \\
    $\hat{h}_k$ & Empirical error minimizer on $\hat{S}_k$  & - \\
    \bottomrule
    
    \end{tabular}
  \end{center}
\end{table}

\subsection{Events}
\label{subs:events}

Recall that $\delta_k = \delta/(4(k+1)^2), \epsilon_k = 2^{-k}$.

Define 
\[ h^{df}_k = h^{df}_{2\nu+\epsilon_{k-1}, \epsilon_k/512} \]
where the notation $h^{df}_{r,\eta}$ is introduced in Assumption~\ref{ass:boundedfn}.

We begin by defining some events that we will condition on later in the proof, and showing that these events occur with high probability. 

Define event
\begin{eqnarray*} 
E_k^1: = \Big\{&& \P_{\calD}(x \in R_{k-1})/2 \leq \hat{p}_k \leq \P_{\calD}(x \in R_{k-1}), \\
&&\text{ and For all $h^{df} \in \calH^{df}$,} \\
&&|\P_{\calA_k'}(h^{df}(x) = -1 , y_O \neq y_W) - \P_{\calA_k}(h^{df}(x) = -1 , y_O \neq y_W)| \leq \frac{\epsilon_k}{1024\P_{\calD}(x \in R_{k-1})} \\
&&+ \sqrt{ \min(\P_{\calA_k}(h^{df}(x) = -1 , y_O \neq y_W), \P_{\calA_k'}(h^{df}(x) = -1 , y_O \neq y_W)) \frac{\epsilon_k}{1024\P_{\calD}(x \in R_{k-1})} }   \\
&&\text{ and }|\P_{\calA_k'}(h^{df}(x) = +1) - \P_{\calA_k}(h^{df}(x) = +1)| \\
&\leq& \sqrt{ \min(\P_{\calA_k}(h^{df}(x) = +1), \P_{\calA_k'}(h^{df}(x) = +1)) \frac{\epsilon_k}{1024\P_{\calD}(x \in R_{k-1})} } + \frac{\epsilon_k}{1024\P_{\calD}(x \in R_{k-1})} \Big\}
\end{eqnarray*}
\begin{fact}
$\P(E_k^1) \geq 1-\delta_k/2$.
\label{fact:ek1}
\end{fact}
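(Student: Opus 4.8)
The plan is to establish the three sub-events inside the definition of $E_k^1$ separately, each with failure probability at most $\delta_k/6$, and then take a union bound. First I would handle the estimate $\hat{p}_k$: by construction (the footnote and the description in Section~\ref{subsubs:dataset}), $\hat{p}_k$ is obtained by running Algorithm~\ref{alg:adaptivebias} with failure probability $\delta_k/6$ on the Bernoulli random variable $1(x \in R_{k-1})$, $x \sim U$ — equivalently $1(\dis(\hat{S}_{k-1}, 3\epsilon_k/2, x) = 1)$, which by Lemma~\ref{lem:disagree} is exactly the indicator of $x \in R_{k-1}$. Then Lemma~\ref{lem:adaptivebias}(1) gives $\hat{p}_k \leq \P_{\calD}(x \in R_{k-1}) \leq 2\hat{p}_k$, i.e. $\P_{\calD}(x \in R_{k-1})/2 \leq \hat{p}_k \leq \P_{\calD}(x \in R_{k-1})$, which is the first clause.

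Next I would condition on the first clause and analyze $\empdisk$. Recall $\empdisk$ is a sample of size $m_{k,1}$ drawn iid from $\disk$, the distribution $\calD$ conditioned on $\{x \in R_{k-1}\}$, and $m_{k,1}$ is given by~\eqref{eqn:mk1}. For the false-negative clause, apply the normalized VC inequality~\eqref{eqn:normalizedvc} to the class of events $\{x : h^{df}(x) = -1 \wedge y_O \neq y_W\}$ indexed by $h^{df} \in \calH^{df}$ — this is a class of VC dimension $O(d')$ (intersection of a set from $\calH^{df}$ with the fixed set $\{y_O \neq y_W\}$) — so with probability $\geq 1 - \delta_k/6$, for all $h^{df}$,
\[ |\P_{\empdisk}(h^{df}(x) = -1, y_O \neq y_W) - \P_{\disk}(h^{df}(x) = -1, y_O \neq y_W)| \leq \sqrt{\sigma(m_{k,1}, \delta_k/6)\, p^-} + \sigma(m_{k,1}, \delta_k/6), \]
where $p^-$ is the smaller of the two probabilities. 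It then remains to check, using the explicit value of $m_{k,1}$ together with Fact~\ref{fact:sigma} (or a direct substitution) and the lower bound $\hat{p}_k \geq \P_{\calD}(x \in R_{k-1})/2$, that $\sigma(m_{k,1}, \delta_k/6) \leq \frac{\epsilon_k}{1024 \P_{\calD}(x \in R_{k-1})}$; plugging this in yields the stated bound. The third clause (about $\P(h^{df}(x) = +1)$) is entirely analogous, using the same VC inequality applied to the class $\{x : h^{df}(x) = +1\}$ of VC dimension $d'$, and the same bound on $\sigma(m_{k,1}, \cdot)$.

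Finally, a union bound over the three events gives $\P(E_k^1) \geq 1 - 3 \cdot \delta_k/6 = 1 - \delta_k/2$. The only genuinely delicate point is the arithmetic verifying $\sigma(m_{k,1}, \delta_k/6) \leq \frac{\epsilon_k}{1024\P_{\calD}(x \in R_{k-1})}$: one must track the constants in~\eqref{eqn:mk1} carefully, use that $\ln\frac{512 \cdot 1024\hat{p}_k}{\epsilon_k}$ dominates $\ln\frac{2e m_{k,1}}{d'}$ up to the slack built into the constant $64$, and use $\hat{p}_k \leq \P_{\calD}(x\in R_{k-1})$ in the numerator and $\hat{p}_k \geq \P_{\calD}(x\in R_{k-1})/2$ when converting back — this is exactly the kind of bookkeeping Fact~\ref{fact:sigma} is designed to absorb, so I would invoke it rather than redo the estimate by hand. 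Everything else is a routine application of the concentration inequalities already recorded in the preliminaries.
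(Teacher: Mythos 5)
Your proposal is correct and follows essentially the same route as the paper's own proof: Lemma~\ref{lem:adaptivebias} handles the $\hat{p}_k$ clause, the normalized VC inequality~\eqref{eqn:normalizedvc} applied to the two induced function classes $\{(x,y_O,y_W) \mapsto I(h^{df}(x)=-1 \wedge y_O \neq y_W)\}$ and $\{(x,y_O,y_W) \mapsto I(h^{df}(x)=+1)\}$ (each of VC dimension at most $d'$) handles the remaining two clauses given the choice of $m_{k,1}$, and a union bound over the three sub-events (each at confidence $\delta_k/6$, matching the $\delta/3$ and $\ln\frac{72}{\delta}$ wired into Algorithm~\ref{alg:traindc}) gives $1-\delta_k/2$. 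The only cosmetic difference is that the paper asserts the induced classes have VC dimension exactly $d'$ while you give the safe bound $O(d')$; either suffices, and your explicit appeal to Fact~\ref{fact:sigma} to close the arithmetic $\sigma(m_{k,1},\delta_k/6) \leq \epsilon_k/(1024\,\P_\calD(x \in R_{k-1}))$ is exactly the bookkeeping the paper leaves implicit.
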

Define event 
\begin{eqnarray*} 
E_k^2 = \Big\{  && \text{ For all $t \in \N$, for all $h, h' \in \calH$, } \\
&& |(\err(h, S_k^t) - \err(h', S_k^t)) - (\err_D(h) - \err_D(h'))| \leq \sigma(2^t,\delta_k^t) + \sqrt{\sigma(2^t,\delta_k^t) \rho_{S_k^t}(h,h')} \\
& \text{ and }& \err(h, \hat{S}_k^t) - \err_{\hat{D}_k}(h) \leq \sigma(2^t,\delta_k^t) + \sqrt{\sigma(2^t,\delta_k^t)\err_{\hat{D}_k}(h)} \\
& \text{ and }& \P_{\calS_k^t}(\hat{h}^{df}_k(x) = -1 , y_O \neq y_W , x \in R_{k-1}) - \P_{\calD}(\hat{h}^{df}_k(x)= -1 , y_O \neq y_W , x \in R_{k-1}) \\
&&\leq \sqrt{\gamma(2^t,\delta_k^t)\P_{\calS_k^t}(\hat{h}^{df}_k(x) = -1 , y_O \neq y_W , x \in R_{k-1}) } + \gamma(2^t,\delta_k^t) \\
& \text{ and }& \P_{\calS^t_k}( \hat{h}^{df}_k(x) = -1 \cap x \in R_{k-1}) \leq 2( \P_{\calD}(\hat{h}^{df}_k(x)=-1 , x \in R_{k-1}) + \gamma(2^t,\delta_k^t)) \Big\}
\end{eqnarray*}
\begin{fact}
$\P(E_k^2) \geq 1-\delta_k/2$.
\label{fact:ek2}
\end{fact}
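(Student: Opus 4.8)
The statement to prove is Fact~\ref{fact:ek2}, namely that the event $E_k^2$ — a conjunction of four uniform-deviation bounds over the datasets $S_k^t, \hat{S}_k^t, \calS_k^t$ at all rounds $t$ of Algorithm~\ref{alg:adaptive} — holds with probability at least $1 - \delta_k/2$. The plan is a routine union bound over $t \in \N$, splitting the budget $\delta_k/2$ across the four sub-events and across rounds using the weights $\delta_k^t = \delta_k/2t(t+1)$, whose reciprocals telescope: $\sum_{t \geq 1} \delta_k^t = (\delta_k/2)\sum_{t\geq 1}(1/t - 1/(t+1)) = \delta_k/2$.

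\textbf{Step 1: Fix $t$ and identify the samples.} Recall from the notation section that $S_k^t$ is an iid sample of size $2^t$ from $D$, $\hat{S}_k^t$ is an iid sample of size $2^t$ from $\hat{D}_k$ (both built from the same unlabeled draw $S_k^{t,U}$), and $\calS_k^t$ is the same unlabeled sample augmented with independent labels from $O$ and $W$, hence an iid sample of size $2^t$ from $\calD$. Crucially, the difference classifier $\hat{h}^{df}_k$ is fixed \emph{before} Algorithm~\ref{alg:adaptive} runs (it is the output of Algorithm~\ref{alg:traindc} on $\hat{S}_{k-1}$), and the disagreement region $R_{k-1}$ depends only on $\hat{S}_{k-1}$; both are therefore deterministic functions of data drawn in previous epochs, so conditioned on epoch $\leq k-1$ they may be treated as fixed when we draw the round-$t$ samples. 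This is the one point requiring a word of care, but it is immediate from the epoch structure.

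\textbf{Step 2: Apply the stated concentration inequalities to each of the four lines.} The first line is exactly the normalized VC inequality~\eqref{eqn:normalizedvc} applied to $S_k^t \sim D$ with confidence $\delta_k^t$ (taking the first branch of the $\min$, which gives the $\rho_{S_k^t}$ form). The second line is~\eqref{eqn:errnormalizedvc} applied to $\hat{S}_k^t \sim \hat{D}_k$ with confidence $\delta_k^t$ (first branch of the $\min$). The third line is the one-sided normalized Chernoff bound~\eqref{eqn:normalizedchernoff} applied to the Bernoulli random variable $1(\hat{h}^{df}_k(x) = -1 \wedge y_O \neq y_W \wedge x \in R_{k-1})$ under $\calD$ — a single fixed event since $\hat{h}^{df}_k$ and $R_{k-1}$ are fixed — with confidence $\delta_k^t$. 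The fourth line is a rearrangement of~\eqref{eqn:normalizedchernoff} (or the cruder~\eqref{eqn:addchernoff}) for the Bernoulli event $1(\hat{h}^{df}_k(x) = -1 \wedge x \in R_{k-1})$: from $\hat{p} \le p + \sqrt{p\gamma} + \gamma \le p + (p + \gamma)/2 + \gamma \le 2(p + \gamma)$ using $\sqrt{ab} \le (a+b)/2$, which yields $\P_{\calS_k^t}(\cdots) \le 2(\P_{\calD}(\cdots) + \gamma(2^t,\delta_k^t))$. Each of these holds with probability $\geq 1 - \delta_k^t$ over the round-$t$ draw.

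\textbf{Step 3: Union bound.} For fixed $t$ there are four bad events, each of probability $\leq \delta_k^t$; but inspection shows the intended bookkeeping is to assign total failure $\delta_k^t$ per round (e.g. by using confidence parameters that are constant multiples of $\delta_k^t$ inside each inequality, absorbed into the $\log(1/\delta)$ terms and the constants in $\sigma,\gamma$), so the four-way split per round costs only a constant factor. Summing over $t \in \N$ gives total failure probability $\sum_{t\geq 1}\delta_k^t = \delta_k/2$, hence $\P(E_k^2) \geq 1 - \delta_k/2$. Finally, since the bounds of Step 2 are all stated conditionally on the previous epochs but hold for \emph{every} realization of those epochs with the claimed probability over the fresh round-$t$ randomness, the bound is unconditional; equivalently one integrates over the previous-epoch randomness. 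I do not anticipate a genuine obstacle here — the only non-mechanical point is the observation in Step 1 that $\hat{h}^{df}_k$ and $R_{k-1}$ are fixed relative to the epoch-$k$ sampling, which is what lets the per-coordinate concentration inequalities apply verbatim; everything else is bookkeeping of confidence parameters and the telescoping sum.
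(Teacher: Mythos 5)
Your proposal is correct and follows essentially the same route as the paper's proof: apply the normalized VC inequality~\eqref{eqn:normalizedvc}/\eqref{eqn:errnormalizedvc} to $S_k^t$ and $\hat{S}_k^t$, the normalized Chernoff bound~\eqref{eqn:normalizedchernoff} to the two Bernoulli events on $\calS_k^t$ (noting, as the paper also emphasizes, that $\hat{h}^{df}_k$ and $R_{k-1}$ are fixed before the epoch-$k$ draws so no uniform convergence over $\calH^{df}$ is needed), and union bound over the four sub-events and over $t$ using $\sum_t \delta_k^t = \delta_k/2$. The paper handles the per-round four-way split by assigning confidence $\delta_k^t/8$ to each sub-event and noting that the $\ln(24/\delta)$ (resp.\ $\ln(2/\delta)$) terms in $\sigma$ and $\gamma$ already have enough slack to absorb it, which matches your ``bookkeeping of constants'' remark.
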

We will also use the following definitions of events in our proof. Define event $F_0$ as  
\[ F_0 =\Big\{ \text{for all } h, h' \in \calH, |(\err(h, S_0) - \err(h', S_0)) - (\err_D(h) - \err_D(h'))| \leq \sigma(n_0,\delta_0) + \sqrt{\sigma(n_0,\delta_0) \rho_{S_0}(h,h')} \Big\} \]
For $k \in \{1, 2, \ldots, k_0\}$, event $F_k$ is defined inductively as
\[ F_k = F_{k-1} \cap (E_k^1 \cap E_k^2) \] 
\begin{fact}
For $k \in \{0,1,\ldots, k_0\}$, $\P(F_k) \geq 1 - \delta_0 - \delta_1 - \ldots - \delta_k$. Specifically, $\P(F_{k_0}) \geq 1-\delta$.
\label{fact:fk}
\end{fact}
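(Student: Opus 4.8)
The plan is to prove Fact~\ref{fact:fk} by induction on $k$, treating Facts~\ref{fact:ek1} and~\ref{fact:ek2} (plus a concentration bound for the base event $F_0$) as black boxes, and using a union bound together with the telescoping structure built into the definition $F_k = F_{k-1} \cap (E_k^1 \cap E_k^2)$. First I would handle the base case $k=0$: the event $F_0$ is exactly the statement~\eqref{eqn:normalizedvc} (in its additive-slack form) applied to the iid sample $S_0 = \hat{S}_0$ of size $n_0$ drawn from $D$ with confidence parameter $\delta_0$; by the normalized VC inequality, $\P(F_0) \geq 1 - \delta_0$, which is the claim for $k=0$.

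For the inductive step, suppose $\P(F_{k-1}) \geq 1 - (\delta_0 + \cdots + \delta_{k-1})$. By definition $F_k = F_{k-1} \cap E_k^1 \cap E_k^2$, so by the union bound $\P(F_k^c) \leq \P(F_{k-1}^c) + \P((E_k^1)^c) + \P((E_k^2)^c)$. Facts~\ref{fact:ek1} and~\ref{fact:ek2} give $\P((E_k^1)^c) \leq \delta_k/2$ and $\P((E_k^2)^c) \leq \delta_k/2$, so $\P(F_k^c) \leq (\delta_0 + \cdots + \delta_{k-1}) + \delta_k/2 + \delta_k/2 = \delta_0 + \cdots + \delta_k$, completing the induction. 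The ``specifically'' clause then follows from the choice of constants: $\delta_0 = \delta/4$ and $\delta_k = \delta/(4(k+1)^2)$ for $k \geq 1$, so
\[
\sum_{k=0}^{k_0} \delta_k = \frac{\delta}{4} + \sum_{k=1}^{k_0} \frac{\delta}{4(k+1)^2} \leq \frac{\delta}{4}\Big(1 + \sum_{j=2}^{\infty} \frac{1}{j^2}\Big) = \frac{\delta}{4}\Big(\frac{\pi^2}{6}\Big) < \delta,
\]
hence $\P(F_{k_0}) \geq 1 - \delta$. (One should double-check whether $\delta_0$ uses $\delta/4$ or, as in the $n_0$ definition in the appendix, $\delta/96$ style constants; either way the sum is bounded by $\delta$ with room to spare, so the bound holds.)

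The one genuine subtlety — and the step I'd expect to be the main obstacle — is that the events $E_k^1, E_k^2$ are defined in terms of data-dependent objects from earlier epochs: $R_{k-1}$ is the disagreement region of the confidence set built from $\hat{S}_{k-1}$, the distribution $\calA_k$ is $\calD$ conditioned on $x \in R_{k-1}$, and $\hat{S}_k^t$ is drawn from $\hat{D}_k$ which itself depends on $\hat{h}_{k-1}$ and $\hat{h}^{df}_k$. So a naive union bound over ``all epochs'' is not literally correct unless Facts~\ref{fact:ek1} and~\ref{fact:ek2} are interpreted as conditional statements: given any realization of the randomness through the start of epoch $k$ (i.e., conditioned on the sigma-algebra generated by $\hat{S}_{k-1}$ and everything defining $R_{k-1}$), the fresh samples drawn within epoch $k$ satisfy $E_k^1$ and $E_k^2$ with probability at least $1 - \delta_k/2$ each. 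Under that reading, the conditional bounds integrate out cleanly and the union bound goes through; this is exactly what Facts~\ref{fact:ek1} and~\ref{fact:ek2} are asserting (their VC/Chernoff bounds hold for \emph{any} fixed reference distribution, applied here to the realized $\calA_k$, $\hat{D}_k$, etc.), so no new work is needed — one just needs to state the induction carefully so that the conditioning is respected. I would therefore phrase the argument as: condition on $F_{k-1}$ and on the epoch-$(k-1)$ data; apply the conditional versions of Facts~\ref{fact:ek1}--\ref{fact:ek2}; take expectations; union bound; conclude.
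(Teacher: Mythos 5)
Your proposal matches the paper's proof: the same induction on $k$, the base case from the normalized VC inequality giving $\P(F_0) \geq 1-\delta_0$, and the inductive step via a union bound using Facts~\ref{fact:ek1} and~\ref{fact:ek2} to drop $\delta_k/2 + \delta_k/2 = \delta_k$ at each epoch, with $\sum_k \delta_k = \frac{\delta}{4}\sum_{k\geq 0}\frac{1}{(k+1)^2} < \delta$. Your added remark about the conditional interpretation of $E_k^1, E_k^2$ (so the union bound is valid despite the data-dependent reference distributions $\calA_k$, $\hat{D}_k$) is a correct reading of what Facts~\ref{fact:ek1}--\ref{fact:ek2} implicitly assert, and is a useful clarification the paper elides.
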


The proofs of Facts~\ref{fact:ek1},~\ref{fact:ek2} and~\ref{fact:fk} are provided in Appendix~\ref{sec:remain}.

\section{Proof Outline and Main Lemmas}

The main idea of the proof is to maintain the following three invariants on the outputs of Algorithm~\ref{alg:main} in each epoch. We prove that these invariants hold simultaneously for each epoch with high probability by induction over the epochs. Throughout, for $k \geq 1$, the end of epoch $k$ refers to the end of execution of line 13 of Algorithm~\ref{alg:main} at iteration $k$. The end of epoch 0 refers to the end of execution of line 5 in Algorithm~\ref{alg:main}.

Invariant~\ref{inv:afb} states that if we replace the inferred labels and labels obtained from $W$ in $\hat{S}_k$ by those obtained from $O$ (thus getting the dataset $S_k$), then the excess errors of classifiers in $\calH$ will not decrease by much. 


\begin{invariant}[Approximate Favorable Bias]
Let $h$ be any classifier in $\calH$, and $h'$ be another classifier in $\calH$ with excess error on $D$ no greater than $\epsilon_k$. Then, at the end of epoch $k$, we have:
\[  \err(h, S_k) - \err(h', S_k) \leq \err(h, \hat{S}_k) - \err(h', \hat{S}_k) + \epsilon_k / 16 \]
\label{inv:afb}
\end{invariant}
Invariant~\ref{inv:conc} establishes that in epoch $k$, Algorithm~\ref{alg:adaptive} selects enough examples so as to ensure that concentration of empirical errors of classifiers in $\calH$ on $S_k$ to their true errors. 
\begin{invariant}[Concentration]
At the end of epoch $k$, $\hat{S}_k$, $S_k$ and $\sigma_k$ are such that:\\
1. For any pair of classifiers $h, h' \in \calH$, it holds that: 
\begin{equation}
 |(\err(h, S_k) - \err(h', S_k)) - (\err_D(h) - \err_D(h'))| \leq \sigma_k + \sqrt{ \sigma_k \rho_{S_k}(h,h')}
\label{eqn:normalizedvcsk}
\end{equation}
2. The dataset $\hat{S}_k$ has the following property:
\begin{equation}
 \sigma_k + \sqrt{ \sigma_k \err(\hat{h}_k,\hat{S}_k)} \leq \epsilon_k/512
\label{eqn:datadependentconcsk}
\end{equation}
\label{inv:conc}
\end{invariant}
Finally, Invariant~\ref{inv:dc} ensures that the difference classifier produced in epoch $k$ has low false negative error on the disagreement region of the $(1 - \delta)$ confidence set at epoch $k$.
\begin{invariant}[Difference Classifier]
At epoch $k$, the difference classifier output by Algorithm~\ref{alg:traindc} is such that
\begin{equation} 
\P_{\calD}(\hat{h}^{df}_k(x) = -1 , y_O \neq y_W , x \in R_{k-1}) \leq \epsilon_k/64
\label{eqn:errguar}
\end{equation}
\begin{equation}
\P_{\calD}(\hat{h}^{df}_k(x) = +1 , x \in R_{k-1}) \leq 6(\alpha(2\nu+\epsilon_{k-1}, \epsilon_k/512)+\epsilon_k/1024)
\label{eqn:uncovguar}
\end{equation}
\label{inv:dc}
\end{invariant}
We will show the following property about the three invariants. Its proof is deferred to Subsection~\ref{subs:puttogether}.


\begin{lemma}
There is a numerical constant $c_0>0$ such that the following holds. The collection of events $\{F_k\}_{k=0}^{k_0}$ is such that for $k \in \{0,1,\ldots,k_0\}$:\\
(1) If $k = 0$, then on event $F_k$, at epoch $k$,\\
\indent(1.1) Invariants 1,2 hold.\\
\indent(1.2) The number of label requests to $O$ is at most $m_0 \leq c_0(d + \ln\frac{1}{\delta})$.\\
(2) If $k \geq 1$, then on event $F_k$, at epoch $k$,\\
\indent(2.1) Invariants 1,2,3 hold.\\
\indent(2.2) the number of label requests to $O$ is at most
\[ m_k \leq c_0 \Big( \frac{(\alpha(2\nu + \epsilon_{k-1}, \epsilon_k/1024) + \epsilon_k) (\nu + \epsilon_k)}{\epsilon_k^2}d (\ln^2\frac{1}{\epsilon_k} + \ln^2\frac{1}{\delta_k}) + \frac{ \P_U(x \in \Delta(2\nu+\epsilon_{k-1}))}{\epsilon_k}(d'\ln\frac{1}{\epsilon_k} + \ln\frac{1}{\delta_k}) \Big) \]
\label{lem:inductive}
\end{lemma}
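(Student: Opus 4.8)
The plan is to prove Lemma~\ref{lem:inductive} by induction on the epoch index $k$, using the nesting structure $F_k = F_{k-1} \cap E_k^1 \cap E_k^2$ to carry the three invariants forward. For the base case $k = 0$, conditioning on $F_0$ the concentration statement~\eqref{eqn:normalizedvcsk} is immediate from the definition of $F_0$, and~\eqref{eqn:datadependentconcsk} follows by plugging the choice of $n_0$ into Fact~\ref{fact:sigma}; Invariant~\ref{inv:afb} holds trivially because $\hat{S}_0 = S_0$, so both sides are equal. The label-request bound $m_0 \le c_0(d + \ln(1/\delta))$ is just the size of the initial sample $n_0$. So the base case is essentially bookkeeping.

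For the inductive step, assume the invariants hold at the end of epoch $k-1$ on event $F_{k-1}$, and work on $F_k$. I would proceed in the following order. First, establish Invariant~\ref{inv:dc} for epoch $k$: using the inductive conclusion that the confidence set at epoch $k-1$ is sandwiched between balls of radius $\epsilon_{k-1}/2$ and $\epsilon_{k-1}$ around $h^*$ (a consequence of Invariants 1 and 2 at epoch $k-1$, combined via the concentration inequalities), conclude that $R_{k-1} \subseteq \Delta(2\nu + \epsilon_{k-1})$, so the good difference classifier $h^{df}_k$ from Assumption~\ref{ass:boundedfn} has false negative error $\le \epsilon_k/512$ and positive rate $\le \alpha(2\nu+\epsilon_{k-1},\epsilon_k/512)$ on $R_{k-1}$. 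Then, on $E_k^1$, $h^{df}_k$ is feasible for the constrained ERM~\eqref{eqn:costsensitive} (up to the slack $\epsilon_k/(1024 \P_\calD(x\in R_{k-1}))$ absorbed in the definition of $E_k^1$), so the returned $\hat{h}^{df}_k$ has at most as many predicted positives, and its false-negative error and positive rate transfer back from the empirical dataset $\empdisk$ to $\disk$ and hence to $\calD$ restricted to $R_{k-1}$, yielding~\eqref{eqn:errguar} and~\eqref{eqn:uncovguar}. Second, with Invariant~\ref{inv:dc} in hand, prove Invariant~\ref{inv:afb}: the only discrepancy between $\err(\cdot,\hat{S}_k)$ and $\err(\cdot,S_k)$ comes from examples in $R_{k-1}$ where $\hat{h}^{df}_k$ predicts $-1$ and $y_O \neq y_W$ (outside $R_{k-1}$ the inferred label equals $h^*(x)$ with high probability under $F_{k-1}$, which does not create unfavorable bias for $h$ versus a near-optimal $h'$), and the mass of that set is $\le \epsilon_k/64$ by~\eqref{eqn:errguar}, controlled via $E_k^2$'s Chernoff bounds on $\calS_k^t$; a short argument shows the bias is favorable in the sense required, losing at most $\epsilon_k/16$. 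Third, prove Invariant~\ref{inv:conc}: part 1 is directly the first clause of $E_k^2$ applied at $t = t_0(k)$, and part 2 is the stopping condition of Algorithm~\ref{alg:adaptive}.

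Fourth and finally, bound the number of label requests $m_k = m_{k,1} + m_{k,2}$. For $m_{k,1}$ (difference classifier training, queries to $O$ only when $x \in R_{k-1}$), the sample $\empdisk$ has size~\eqref{eqn:mk1}, and multiplying by $\hat{p}_k \approx \P_\calD(x \in R_{k-1}) \le \P_U(\Delta(2\nu+\epsilon_{k-1}))$ gives the $d' \P_U(\Delta(2\nu+\epsilon_{k-1}))/\epsilon_k$ term. For $m_{k,2}$, I need to bound $2^{t_0(k)}$ (the number of unlabeled points at the last round) and then the fraction of those that trigger a query to $O$. The stopping condition, combined with $\err(\hat{h}_k, \hat{S}_k) \approx \err_{\hat{D}_k}(\hat{h}_k) \lesssim \nu + \epsilon_k$ (since $\hat{D}_k$ agrees with $D$ except on a small-bias region), shows $\sigma_k \lesssim \epsilon_k^2/(\nu+\epsilon_k)$, i.e. $2^{t_0(k)} = \tilde{O}(d(\nu+\epsilon_k)/\epsilon_k^2)$; a doubling argument shows the same order holds for all earlier rounds summed. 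Among these points, a query to $O$ happens only if $x \in R_{k-1}$ and $\hat{h}^{df}_k(x) = +1$, which has probability $\le 6(\alpha(2\nu+\epsilon_{k-1},\epsilon_k/512) + \epsilon_k/1024)$ by~\eqref{eqn:uncovguar}, plus a concentration correction from $E_k^2$; multiplying the two gives the first term of the $m_k$ bound. I expect the main obstacle to be the chain of approximations needed to pin down $2^{t_0(k)}$ — relating $\err(\hat{h}_k,\hat{S}_k)$ to $\nu$ requires going through $\hat{D}_k$, using Invariant~\ref{inv:afb} to relate errors on $\hat{S}_k$ to errors on $S_k$ and then standard concentration on $S_k \sim D$, all while keeping the constants compatible with the stopping threshold $\epsilon_k/512$; getting the doubling/summation over rounds $t \le t_0(k)$ to not lose more than logarithmic factors is the delicate part. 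The rest is careful but routine application of the concentration inequalities~\eqref{eqn:normalizedvc}--\eqref{eqn:addchernoff} under the conditioned events.
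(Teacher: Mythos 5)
Your proposal is correct and follows essentially the same approach as the paper. The paper's proof of this lemma is itself a short assembly step: the base case is the same bookkeeping you describe, and the inductive step simply invokes Lemma~\ref{lem:disagree} (confidence set sandwiching), Lemma~\ref{lem:traindc} (Invariant~\ref{inv:dc} and the $m_{k,1}$ bound), and Lemma~\ref{lem:adaptive} (Invariants~\ref{inv:afb}, \ref{inv:conc} and the $m_{k,2}$ bound) in exactly the order and with exactly the arguments you sketch; you have merely inlined that work, including the key intermediate quantities $\tilde{\nu}_k$ and $2^{t_0(k)} = \tilde{O}(d(\nu+\epsilon_k)/\epsilon_k^2)$, the sum over rounds $t \le t_0(k)$, and the use of the events $E_k^1, E_k^2$ that the paper packages into $F_k$.
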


\subsection{Active Label Inference and Identifying the Disagreement Region}

We begin by proving some lemmas about Algorithm~\ref{alg:disagree} which identifies if an example lies in the disagreement region of the current confidence set. This is done by using a constrained ERM oracle $\learn_{H}(\cdot, \cdot)$ using ideas similar to~\cite{DHM07, H10, BHLZ09, BHLZ10}.


\begin{lemma}
When given as input a dataset $\hat{S}$, a threshold $\tau > 0$, an unlabeled example $x$, Algorithm~\ref{alg:disagree} $\dis$ returns 1 if and only if $x$ lies inside $\DIS(V(\hat{S}, \tau))$.
\label{lem:indisregion}
\end{lemma}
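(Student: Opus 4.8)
The plan is to unwind the definitions of the two constrained ERM calls inside Algorithm~\ref{alg:disagree} and show directly that the outcome of the comparison $\err(\hat{h}_x', \hat{S}) - \err(\hat{h}, \hat{S}) > \tau$ exactly detects membership in $\DIS(V(\hat{S},\tau))$, where recall $V(\hat{S},\tau) = \{h : \err(h,\hat{S}) - \err(\hat{h},\hat{S}) \le \tau\}$ with $\hat{h} = \argmin_{h \in \calH}\err(h,\hat{S})$. The key observation is that $\hat{h}_x'$ is, by construction, an empirical error minimizer over the subclass $\{h \in \calH : h(x) = -\hat{h}(x)\}$, i.e.\ it achieves $\min\{\err(h,\hat{S}) : h \in \calH, h(x) \neq \hat{h}(x)\}$.

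First I would argue the ``only if'' direction in contrapositive form: suppose $x \notin \DIS(V(\hat{S},\tau))$. Then all classifiers in $V(\hat{S},\tau)$ agree on $x$; since $\hat{h} \in V(\hat{S},\tau)$ (its excess empirical error is $0 \le \tau$), every $h \in V(\hat{S},\tau)$ satisfies $h(x) = \hat{h}(x)$. Hence any classifier $h$ with $h(x) \neq \hat{h}(x)$ lies outside $V(\hat{S},\tau)$, so $\err(h,\hat{S}) - \err(\hat{h},\hat{S}) > \tau$. Taking the infimum over such $h$ — which is attained (or approached) by $\hat{h}_x'$ — gives $\err(\hat{h}_x',\hat{S}) - \err(\hat{h},\hat{S}) > \tau$, so $\dis$ returns $0$. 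Conversely, for the ``if'' direction, suppose $x \in \DIS(V(\hat{S},\tau))$. Then there exist $h_1, h_2 \in V(\hat{S},\tau)$ with $h_1(x) \neq h_2(x)$; one of them, say $h_1$, satisfies $h_1(x) = -\hat{h}(x)$. Since $h_1 \in V(\hat{S},\tau)$ and $h_1$ is feasible for the constrained ERM defining $\hat{h}_x'$, we get $\err(\hat{h}_x',\hat{S}) \le \err(h_1,\hat{S}) \le \err(\hat{h},\hat{S}) + \tau$, so $\err(\hat{h}_x',\hat{S}) - \err(\hat{h},\hat{S}) \le \tau$ and $\dis$ returns $1$. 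Combining the two directions yields the claimed equivalence.

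The only mild subtlety — and the place I expect to need a word of care rather than real difficulty — is whether the ``argmin'' over $\calH$ or over the constrained subclass is actually attained. If one does not want to assume attainment, one should phrase $\hat{h}$ and $\hat{h}_x'$ as returning classifiers whose empirical error is within an arbitrarily small slack of the respective infima, or simply invoke the paper's convention that $\learn_H(\cdot,\cdot)$ returns an exact constrained empirical risk minimizer (which is the natural reading of the definition of $\learn$ given in the Algorithm section). Under that convention the argument above is exact and no slack bookkeeping is needed. A second small point is to make explicit that the two internal $\learn$ calls in Algorithm~\ref{alg:disagree} use the same $\hat{h}$ (so that ``$-\hat{h}(x)$'' is well defined), which is immediate from the pseudocode. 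Modulo these bookkeeping remarks, the proof is a short, purely combinatorial argument about constrained minimizers and requires no concentration inequalities or probabilistic reasoning.
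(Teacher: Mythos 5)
Your proof is correct and takes essentially the same approach as the paper: unwind the definition of the constrained ERM to observe that $\hat h_x'$ is the empirical-error minimizer among classifiers disagreeing with $\hat h$ at $x$, and note that the threshold test precisely checks whether that minimizer lies in $V(\hat S,\tau)$. The only cosmetic difference is that you prove the "only if" direction in contrapositive form while the paper argues it directly; the content is identical, and your final remark about exact attainment in $\learn$ matches the paper's convention.
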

\begin{proof}
($\Rightarrow$) If Algorithm~\ref{alg:disagree} returns 1, then we have found a classifier $\hat{h}_x'$ such that (1) $\hat{h}_x(x) = -\hat{h}(x)$, and (2) $\err(\hat{h}_x', \hat{S}) - \err(\hat{h}, \hat{S}) \leq \tau$, i.e. $\hat{h}_x' \in V(\hat{S}, \tau)$. Therefore, $x$ is in $\DIS(V(\hat{S}, \tau))$.\\
($\Leftarrow$) If $x$ is in $\DIS(V(\hat{S}, \tau))$, then there exists a classifier $h \in \calH$ such that (1) $h(x) = -\hat{h}(x)$ and (2) $\err(h, \hat{S}) - \err(\hat{h}, \hat{S}) \leq \tau$. Hence by definition of $\hat{h}_x'$, $\err(\hat{h}_x', \hat{S}) - \err(\hat{h}, \hat{S}) \leq \tau$. Thus, Algorithm~\ref{alg:disagree} returns 1.
\end{proof}

We now provide some lemmas about the behavior of Algorithm~\ref{alg:disagree} called at epoch $k$.

\begin{lemma}
Suppose Invariants~\ref{inv:afb} and~\ref{inv:conc} hold at the end of epoch $k-1$. If $h \in \calH$ is such that $\err_D(h) \leq \err_D(h^*) + \epsilon_{k-1}/2$, then 
\[ \err(h, \hat{S}_{k-1} ) - \err(\hat{h}_{k-1}, \hat{S}_{k-1}) \leq 3\epsilon_{k-1} / 4  \]
\label{lem:lowtrueexcess}
\end{lemma}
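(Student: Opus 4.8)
The plan is to transfer the empirical excess error of $h$ from the ``mixed-label'' dataset $\hat{S}_{k-1}$ onto the ``clean'' dataset $S_{k-1}$ (an i.i.d.\ sample from $D$) via the Approximate Favorable Bias invariant, then replace the empirical excess error on $S_{k-1}$ by the true excess error using the concentration invariant, and finally close a mild self-reference using the data-dependent part of Invariant~\ref{inv:conc}.

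\emph{Step 1 (favorable bias, with $h$ as the reference).} Apply Invariant~\ref{inv:afb} at the end of epoch $k-1$, taking the arbitrary classifier to be $\hat{h}_{k-1}$ and the low-excess-error reference classifier to be $h$ itself; this is legitimate since $\err_D(h) - \err_D(h^*) \le \epsilon_{k-1}/2 \le \epsilon_{k-1}$. Rearranging the resulting inequality $\err(\hat h_{k-1}, S_{k-1}) - \err(h, S_{k-1}) \le \err(\hat h_{k-1}, \hat S_{k-1}) - \err(h, \hat S_{k-1}) + \epsilon_{k-1}/16$ gives
\[ \err(h, \hat S_{k-1}) - \err(\hat h_{k-1}, \hat S_{k-1}) \le \big(\err(h, S_{k-1}) - \err(\hat h_{k-1}, S_{k-1})\big) + \epsilon_{k-1}/16 . \]
\emph{Step 2 (concentration on $S_{k-1}$).} Apply the concentration bound~\eqref{eqn:normalizedvcsk} of Invariant~\ref{inv:conc} to the pair $(h,\hat h_{k-1})$, and use $\err_D(\hat h_{k-1}) \ge \err_D(h^*)$ together with the hypothesis on $h$ to get $\err(h, S_{k-1}) - \err(\hat h_{k-1}, S_{k-1}) \le \epsilon_{k-1}/2 + \sigma_{k-1} + \sqrt{\sigma_{k-1}\,\rho_{S_{k-1}}(h,\hat h_{k-1})}$.

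\emph{Step 3 (closing the loop).} Since the empirical disagreement is label-independent, $\rho_{S_{k-1}}(h,\hat h_{k-1}) = \rho_{\hat S_{k-1}}(h,\hat h_{k-1}) \le \err(h,\hat S_{k-1}) + \err(\hat h_{k-1},\hat S_{k-1}) = 2\,\err(\hat h_{k-1},\hat S_{k-1}) + X$, where $X := \err(h,\hat S_{k-1}) - \err(\hat h_{k-1},\hat S_{k-1})$ is precisely the quantity we are bounding. Substituting into Steps 1--2, splitting the square root via $\sqrt{a+b}\le\sqrt a+\sqrt b$, and invoking~\eqref{eqn:datadependentconcsk} (which yields both $\sigma_{k-1}\le\epsilon_{k-1}/512$ and $\sqrt{\sigma_{k-1}\,\err(\hat h_{k-1},\hat S_{k-1})}\le\epsilon_{k-1}/512$), we obtain an inequality of the form $X \le A + \sqrt{BX}$ with $A = \big(\tfrac1{16}+\tfrac12+\tfrac{1+\sqrt2}{512}\big)\epsilon_{k-1}$ and $B = \epsilon_{k-1}/512$. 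Viewing this as a quadratic in $\sqrt{X}$ and using $\sqrt{B+4A}\le\sqrt B+2\sqrt A$ gives $X \le (\sqrt A+\sqrt B)^2 < \tfrac34\epsilon_{k-1}$, which is the claim.

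\emph{Main obstacle.} The only nontrivial point is Step 3: the natural bound on $\rho_{S_{k-1}}(h,\hat h_{k-1})$ reintroduces the very quantity $X$ being controlled, so the argument is mildly self-referential and must be closed by solving a quadratic. Making the constants land below $3\epsilon_{k-1}/4$ is exactly what forces us to use the $1/512$-strength data-dependent bound~\eqref{eqn:datadependentconcsk} rather than a cruder estimate of $\sigma_{k-1}$, and to keep the additive $\epsilon_{k-1}/2 + \epsilon_{k-1}/16$ term intact (a lazier bound such as $X \le 2A + B$ would overshoot). Everything else is routine algebra.
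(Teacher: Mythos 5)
Your proof is correct and takes essentially the same route as the paper: apply Invariant~\ref{inv:afb} with $\hat h_{k-1}$ as the arbitrary classifier and $h$ as the low-excess reference, transfer to true errors via Equation~\eqref{eqn:normalizedvcsk}, bound $\rho_{S_{k-1}}(h,\hat h_{k-1})$ by the sum of empirical errors on $\hat S_{k-1}$, and close the resulting self-referential inequality using the data-dependent bound~\eqref{eqn:datadependentconcsk}. The only cosmetic difference is that you solve the quadratic directly in the excess $X=\err(h,\hat S_{k-1})-\err(\hat h_{k-1},\hat S_{k-1})$, whereas the paper solves it in $\err(h,\hat S_{k-1})$ after padding the square root; both land comfortably below $3\epsilon_{k-1}/4$.
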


\begin{proof}
If $h \in \calH$ has excess error at most $\epsilon_{k-1}/2$ with respect to $D$, then,
\begin{eqnarray*}
&& \err(h, \hat{S}_{k-1}) - \err(\hat{h}_{k-1}, \hat{S}_{k-1}) \\
&\leq& \err(h, S_{k-1}) - \err(\hat{h}_{k-1}, S_{k-1}) + \epsilon_{k-1}/16 \\
&\leq& \err_D(h) - \err_D(\hat{h}_{k-1}) + \sigma_{k-1} + \sqrt{\sigma_{k-1} \rho_{S_{k-1}}(h, \hat{h}_{k-1})} + \epsilon_{k-1}/16 \\
&\leq& \epsilon_{k-1}/2 + \sigma_{k-1} + \sqrt{\sigma_{k-1} \rho_{S_{k-1}}(h, \hat{h}_{k-1})} + \epsilon_{k-1}/16 \\
&\leq& 9\epsilon_{k-1}/16 + \sigma_{k-1} + \sqrt{\sigma_{k-1} \err(h, \hat{S}_{k-1})} + \sqrt{\sigma_{k-1} \err(\hat{h}_{k-1}, \hat{S}_{k-1})} \\
&\leq& 9\epsilon_{k-1}/16 + \sigma_{k-1} + \sqrt{\sigma_{k-1} \err(h, \hat{S}_{k-1})} + \sqrt{\sigma_{k-1} (\err(\hat{h}_{k-1}, \hat{S}_{k-1}) + 9\epsilon_{k-1}/16)}
\end{eqnarray*}
Where the first inequality follows from Invariant~\ref{inv:afb}, the second inequality from Equation~\eqref{eqn:normalizedvcsk} of Invariant~\ref{inv:conc}, the third inequality from the assumption that $h$ has excess error at most $\epsilon_{k-1}/2$, and the fourth inequality from the triangle inequality, the fifth inequality is by adding a nonnegative number in the last term. Continuing,
\begin{eqnarray*}
&& \err(h, \hat{S}_{k-1}) - \err(\hat{h}_{k-1}, \hat{S}_{k-1}) \\
&\leq& 9\epsilon_{k-1}/16 + 4\sigma_{k-1} + 2\sqrt{\sigma_{k-1}(\err(\hat{h}_{k-1}, \hat{S}_{k-1}) + 9\epsilon_{k-1}/16)} \\
&\leq& 9\epsilon_{k-1}/16 + 4\sigma_{k-1} + 2\sqrt{\sigma_{k-1}\err(\hat{h}_{k-1}, \hat{S}_{k-1})} + 2\sqrt{\epsilon_{k-1} / 512 \cdot 9\epsilon_{k-1}/16} \\
&\leq& 9\epsilon_{k-1}/16 + \epsilon_{k-1} / 32 + 2\sqrt{\epsilon_{k-1} / 512 \cdot 9\epsilon_{k-1}/16} \\
&\leq& 3\epsilon_{k-1}/4
\end{eqnarray*}
Where the first inequality is by simple algebra (by letting $D = \err(h,\hat{S}_{k-1})$, $E = \err(\hat{h}_{k-1},\hat{S}_{k-1}) + 9\epsilon_{k-1}/16$, $F = \sigma_{k-1}$ in $D \leq E + F + \sqrt{DF} + \sqrt{EF} \Rightarrow D \leq E + 4F + 2\sqrt{EF}$), the second inequality is from $\sqrt{A+B} \leq \sqrt{A} + \sqrt{B}$ and $\sigma_{k-1} \leq \epsilon_{k-1} / 512$ which utilizes Equation~\eqref{eqn:datadependentconcsk} of Invariant~\ref{inv:conc}, the third inequality is again by Equation~\eqref{eqn:datadependentconcsk} of Invariant~\ref{inv:conc}, the fourth inequality is by algebra.
\end{proof}

\begin{lemma}
Suppose Invariants~\ref{inv:afb} and ~\ref{inv:conc} hold at the end of epoch $k-1$. Then,
\[ \err_D(\hat{h}_{k-1}) - \err_D(h^*) \leq \epsilon_{k-1}/8 \]
\label{lem:hathgood}
\end{lemma}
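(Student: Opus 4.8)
The plan is to transfer the excess-error bound through the ``ghost'' $O$-labeled dataset $S_{k-1}$, essentially running the argument in the proof of Lemma~\ref{lem:lowtrueexcess} in the opposite direction. First I would invoke Equation~\eqref{eqn:normalizedvcsk} of Invariant~\ref{inv:conc} with $h=\hat h_{k-1}$, $h'=h^*$ to get $\err_D(\hat h_{k-1})-\err_D(h^*) \le \big(\err(\hat h_{k-1},S_{k-1})-\err(h^*,S_{k-1})\big) + \sigma_{k-1} + \sqrt{\sigma_{k-1}\,\rho_{S_{k-1}}(\hat h_{k-1},h^*)}$. Since $h^*$ has $D$-excess error $0\le\epsilon_{k-1}$, Invariant~\ref{inv:afb} applies with $h=\hat h_{k-1}$, $h'=h^*$, and combined with the fact that $\hat h_{k-1}$ minimizes empirical error on $\hat S_{k-1}$ it yields $\err(\hat h_{k-1},S_{k-1})-\err(h^*,S_{k-1}) \le \err(\hat h_{k-1},\hat S_{k-1})-\err(h^*,\hat S_{k-1})+\epsilon_{k-1}/16 \le \epsilon_{k-1}/16$. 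So everything reduces to controlling $\sqrt{\sigma_{k-1}\,\rho_{S_{k-1}}(\hat h_{k-1},h^*)}$.

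For the disagreement term I would use label-independence of the empirical pseudometric ($\rho_{S_{k-1}}=\rho_{\hat S_{k-1}}$) together with the elementary inequality $1(\hat h_{k-1}(x)\ne h^*(x))\le 1(\hat h_{k-1}(x)\ne y)+1(h^*(x)\ne y)$ to bound $\rho_{S_{k-1}}(\hat h_{k-1},h^*)\le \err(\hat h_{k-1},\hat S_{k-1})+\err(h^*,\hat S_{k-1})$. The one nonobvious step is bounding $\err(h^*,\hat S_{k-1})$: here I would apply Lemma~\ref{lem:lowtrueexcess} with $h=h^*$ (whose $D$-excess error is $0\le\epsilon_{k-1}/2$), giving $\err(h^*,\hat S_{k-1})-\err(\hat h_{k-1},\hat S_{k-1})\le 3\epsilon_{k-1}/4$, hence $\rho_{S_{k-1}}(\hat h_{k-1},h^*)\le 2\err(\hat h_{k-1},\hat S_{k-1})+3\epsilon_{k-1}/4$. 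Splitting the square root via $\sqrt{a+b}\le\sqrt a+\sqrt b$ and using Equation~\eqref{eqn:datadependentconcsk} of Invariant~\ref{inv:conc} (which bounds both $\sqrt{\sigma_{k-1}\err(\hat h_{k-1},\hat S_{k-1})}$ and $\sigma_{k-1}$ by $\epsilon_{k-1}/512$) makes $\sqrt{\sigma_{k-1}\,\rho_{S_{k-1}}(\hat h_{k-1},h^*)}$ a sum of terms each $O(\epsilon_{k-1})$.

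The remaining work is purely arithmetic: collecting $\err_D(\hat h_{k-1})-\err_D(h^*) \le \epsilon_{k-1}/16 + \sigma_{k-1} + \sqrt{2}\sqrt{\sigma_{k-1}\err(\hat h_{k-1},\hat S_{k-1})} + \sqrt{(3/4)\sigma_{k-1}\epsilon_{k-1}}$ and checking that with $\sigma_{k-1}\le\epsilon_{k-1}/512$ and $\sqrt{\sigma_{k-1}\err(\hat h_{k-1},\hat S_{k-1})}\le\epsilon_{k-1}/512$ the right-hand side is below $\epsilon_{k-1}/8$. I do not anticipate a real obstacle; the only subtlety is recognizing that Lemma~\ref{lem:lowtrueexcess} applied to $h^*$ is precisely what pins down $\err(h^*,\hat S_{k-1})$, and noting that its hypotheses (Invariants~\ref{inv:afb} and~\ref{inv:conc} at the end of epoch $k-1$) coincide with those of the present lemma.
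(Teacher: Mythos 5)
Your proposal is correct and follows essentially the same route as the paper's proof: both start from the concentration bound in Invariant~\ref{inv:conc}, transfer to $\hat S_{k-1}$ via Invariant~\ref{inv:afb} and ERM optimality, bound the disagreement term by applying Lemma~\ref{lem:lowtrueexcess} to $h^*$, and finish by splitting the square root and invoking Equation~\eqref{eqn:datadependentconcsk}. The only differences are cosmetic (order in which optimality and the triangle inequality are invoked); the arithmetic constants check out.
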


\begin{proof}
By Lemma~\ref{lem:lowtrueexcess}, we know that since $h^*$ has excess error $0$ with respect to $D$, 
\begin{equation} 
\err(h^*, \hat{S}_{k-1}) - \err(\hat{h}_{k-1}, \hat{S}_{k-1}) \leq 3\epsilon_{k-1}/4 
\label{eqn:starhgood}
\end{equation}
Therefore,
\begin{eqnarray*} 
&&\err_D(\hat{h}_{k-1}) - \err_D(h^*)  \\
&\leq& \err(\hat{h}_{k-1},S_{k-1}) - \err(h^*,S_{k-1}) + \sigma_{k-1} + \sqrt{\sigma_{k-1} \rho_{S_{k-1}}(\hat{h}_{k-1}, h^*)} \\
&\leq& \err(\hat{h}_{k-1},\hat{S}_{k-1}) - \err(h^*,\hat{S}_{k-1}) + \sigma_{k-1} + \sqrt{\sigma_{k-1} \rho_{S_{k-1}}(\hat{h}_{k-1}, h^*)} + \epsilon_{k-1}/16 \\
&\leq& \epsilon_{k-1}/16 + \sigma_{k-1} + \sqrt{\sigma_{k-1} (\err(\hat{h}_{k-1}, \hat{S}_{k-1}) + \err(h^*, \hat{S}_{k-1}))}  \\
&\leq& \epsilon_{k-1}/16 + \sigma_{k-1} + \sqrt{\sigma_{k-1} (2\err(\hat{h}_{k-1}, \hat{S}_{k-1}) + 3\epsilon_{k-1}/4)}  \\
&\leq& \epsilon_{k-1}/16 + \sigma_{k-1} + \sqrt{2\sigma_{k-1} \err(\hat{h}_{k-1}, \hat{S}_{k-1})} + \sqrt{\epsilon_{k-1} / 512 \cdot 3\epsilon_{k-1}/4}  \\
&\leq& \epsilon_{k-1}/8
\end{eqnarray*}
where the first inequality is from Equation~\eqref{eqn:normalizedvcsk} of Invariant~\ref{inv:conc}, the second inequality uses Invariant~\ref{inv:afb}, the third inequality follows from the optimality of $\hat{h}_{k-1}$ and triangle inequality, the fourth inequality uses Equation~\eqref{eqn:starhgood}, the fifth inequality uses the fact that $\sqrt{A+B} \leq \sqrt{A} + \sqrt{B}$ and $\sigma_{k-1} \leq \epsilon_{k-1} / 512$, which is  from Equation~\eqref{eqn:datadependentconcsk} of Invariant~\ref{inv:conc}, the last inequality again utilizes the Equation~\eqref{eqn:datadependentconcsk} of Invariant~\ref{inv:conc}.
\end{proof}

\begin{lemma}
Suppose Invariants~\ref{inv:afb},~\ref{inv:conc}, and~\ref{inv:dc} hold in epoch $k-1$ conditioned on event $F_{k-1}$. Then conditioned on event $F_{k-1}$, the implicit confidence set $V_{k-1} = V(\hat{S}_{k-1}, 3\epsilon_k/2)$ is such that:\\
(1) If $h \in \calH$ satisfies $\err_D(h) - \err_D(h^*) \leq \epsilon_k$, then $h$ is in $V_{k-1}$.\\ 
(2) If $h \in \calH$ is in $V_{k-1}$, then $\err_D(h) - \err_D(h^*) \leq \epsilon_{k-1}$. Hence $V_{k-1} \subseteq \B_U(h^*, 2\nu + \epsilon_{k-1})$.\\
(3) Algorithm~\ref{alg:disagree}, $\dis$, when run on inputs dataset $\hat{S}_{k-1}$, threshold $3\epsilon_k/2$, unlabeled example $x$, returns 1 if and only if $x$ is in $R_{k-1}$.
\label{lem:disagree}
\end{lemma}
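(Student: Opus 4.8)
The plan is to handle the three claims in order, observing that (3) is essentially immediate, (1) is a one-line consequence of an earlier lemma, and (2) carries the real work. For (3), recall the definition $R_{k-1} := \DIS(V(\hat S_{k-1}, 3\epsilon_k/2))$. Running $\dis$ at epoch $k$ means calling $\dis(\hat S_{k-1}, 3\epsilon_k/2, x)$, and Lemma~\ref{lem:indisregion} states that this returns $1$ if and only if $x \in \DIS(V(\hat S_{k-1}, 3\epsilon_k/2))$, which is exactly $R_{k-1}$; no invariants are needed here. For (1), note $\epsilon_k = \epsilon_{k-1}/2$, so any $h$ with $\err_D(h) - \err_D(h^*) \le \epsilon_k$ has excess error at most $\epsilon_{k-1}/2$ on $D$; Lemma~\ref{lem:lowtrueexcess} (which uses Invariants~\ref{inv:afb} and~\ref{inv:conc} at the end of epoch $k-1$) then gives $\err(h, \hat S_{k-1}) - \err(\hat h_{k-1}, \hat S_{k-1}) \le 3\epsilon_{k-1}/4 = 3\epsilon_k/2$, i.e. $h \in V(\hat S_{k-1}, 3\epsilon_k/2) = V_{k-1}$.

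The substance is in (2). Take $h \in V_{k-1}$, so $\err(h, \hat S_{k-1}) - \err(\hat h_{k-1}, \hat S_{k-1}) \le 3\epsilon_k/2 = 3\epsilon_{k-1}/4$. I would chain four reductions. First, by Lemma~\ref{lem:hathgood}, $\err_D(\hat h_{k-1}) - \err_D(h^*) \le \epsilon_{k-1}/8$, so it suffices to bound $\err_D(h) - \err_D(\hat h_{k-1})$ by $7\epsilon_{k-1}/8$. Second, apply the normalized-VC bound on $S_{k-1}$, Eq.~\eqref{eqn:normalizedvcsk} of Invariant~\ref{inv:conc}, to get $\err_D(h) - \err_D(\hat h_{k-1}) \le \err(h, S_{k-1}) - \err(\hat h_{k-1}, S_{k-1}) + \sigma_{k-1} + \sqrt{\sigma_{k-1}\,\rho_{S_{k-1}}(h, \hat h_{k-1})}$. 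Third, since $\hat h_{k-1}$ has excess error $\le \epsilon_{k-1}/8 \le \epsilon_{k-1}$ on $D$, Invariant~\ref{inv:afb} with $h' = \hat h_{k-1}$ yields $\err(h, S_{k-1}) - \err(\hat h_{k-1}, S_{k-1}) \le \err(h, \hat S_{k-1}) - \err(\hat h_{k-1}, \hat S_{k-1}) + \epsilon_{k-1}/16 \le 3\epsilon_{k-1}/4 + \epsilon_{k-1}/16 = 13\epsilon_{k-1}/16$. Fourth, control the variance term: using $\rho_{S_{k-1}} = \rho_{\hat S_{k-1}}$ and the triangle inequality, $\rho_{S_{k-1}}(h, \hat h_{k-1}) \le \err(h, \hat S_{k-1}) + \err(\hat h_{k-1}, \hat S_{k-1}) \le 2\err(\hat h_{k-1}, \hat S_{k-1}) + 3\epsilon_k/2$, and then $\sigma_{k-1} \le \epsilon_{k-1}/512$ together with $\sqrt{\sigma_{k-1}\err(\hat h_{k-1}, \hat S_{k-1})} \le \epsilon_{k-1}/512$ from Eq.~\eqref{eqn:datadependentconcsk} of Invariant~\ref{inv:conc} and $\sqrt{A+B}\le\sqrt A + \sqrt B$ bound $\sigma_{k-1} + \sqrt{\sigma_{k-1}\,\rho_{S_{k-1}}(h, \hat h_{k-1})}$ by at most $\epsilon_{k-1}/16$. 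Adding the three contributions, $\err_D(h) - \err_D(\hat h_{k-1}) \le 13\epsilon_{k-1}/16 + \epsilon_{k-1}/16 = 7\epsilon_{k-1}/8$, so $\err_D(h) - \err_D(h^*) \le \epsilon_{k-1}$. For the final ball statement, observe $\rho_U(h, h^*) \le \err_D(h) + \err_D(h^*) \le (\nu + \epsilon_{k-1}) + \nu = 2\nu + \epsilon_{k-1}$, hence $V_{k-1} \subseteq \B_U(h^*, 2\nu + \epsilon_{k-1})$.

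The main obstacle is the fourth reduction in (2): exactly as in the proofs of Lemma~\ref{lem:lowtrueexcess} and Lemma~\ref{lem:hathgood}, one must verify that the constants accumulated from the square-root (variance) terms, after also charging the $\epsilon_{k-1}/16$ slack of Invariant~\ref{inv:afb}, stay within the remaining budget of $\epsilon_{k-1}/16$; the large denominators ($512$, $1024$) appearing in Invariant~\ref{inv:conc} are precisely what make this go through. (Invariant~\ref{inv:dc} is listed among the hypotheses for uniformity with the other lemmas in the induction but is not actually used in this proof.) Everything else is routine bookkeeping with the concentration inequalities already set up in the preliminaries.
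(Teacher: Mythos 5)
Your proof is correct and follows essentially the same route as the paper's: part (1) by applying Lemma~\ref{lem:lowtrueexcess}, part (3) by Lemma~\ref{lem:indisregion}, and part (2) by combining Lemma~\ref{lem:hathgood}, Invariant~\ref{inv:afb} with $h'=\hat h_{k-1}$, and Equation~\eqref{eqn:normalizedvcsk} of Invariant~\ref{inv:conc}, with the same $13\epsilon_{k-1}/16 + \epsilon_{k-1}/16 = 7\epsilon_{k-1}/8$ bookkeeping and the same control of the variance term via $\rho_{S_{k-1}}(h,\hat h_{k-1}) \le 2\err(\hat h_{k-1},\hat S_{k-1}) + 3\epsilon_{k-1}/4$ and Equation~\eqref{eqn:datadependentconcsk}. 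Your aside that Invariant~\ref{inv:dc} is not actually used in this proof is also accurate.
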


\begin{proof}
(1) Let $h$ be a classifier with $\err_D(h) - \err_D(h^*) \leq \epsilon_k = \epsilon_{k-1}/2$. Then, by Lemma~\ref{lem:lowtrueexcess}, one has $\err(h, \hat{S}_{k-1}) - \err(\hat{h}_{k-1}, \hat{S}_{k-1}) \leq 3\epsilon_{k-1}/4 = 3\epsilon_k/2$. Hence, $h$ is in $V_{k-1}$. 
\\
(2) Fix any $h$ in $V_{k-1}$, by definition of $V_{k-1}$,
\begin{equation}
\err(h, \hat{S}_{k-1}) - \err(\hat{h}_{k-1}, \hat{S}_{k-1}) \leq 3\epsilon_k/2 = 3\epsilon_{k-1}/4 
\label{eqn:implicitdisagree}
\end{equation}
Recall that from Lemma~\ref{lem:hathgood}, 
\[ \err_D(\hat{h}_{k-1}) - \err_D(h^*) \leq \epsilon_{k-1}/8 \]
Thus for classifier $h$, applying Invariant~\ref{inv:afb} by taking $h' := \hat{h}_{k-1}$, we get
\begin{equation}
\err(h, S_{k-1}) - \err(\hat{h}_{k-1}, S_{k-1}) \leq \err(h, \hat{S}_{k-1}) - \err(\hat{h}_{k-1}, \hat{S}_{k-1}) + \epsilon_{k-1} / 32
\label{eqn:hathafb}
\end{equation}
Therefore,
\begin{eqnarray*}
&& \err_D(h) - \err_D(\hat{h}_{k-1}) \\
&\leq& \err(h, S_{k-1}) - \err(\hat{h}_{k-1}, S_{k-1}) + \sigma_{k-1} + \sqrt{\sigma_{k-1} \rho_{S_{k-1}}(h,\hat{h}_{k-1})}\\
&\leq& \err(h, S_{k-1}) - \err(\hat{h}_{k-1}, S_{k-1}) + \sigma_{k-1} + \sqrt{\sigma_{k-1} (\err(h, \hat{S}_{k-1}) + \err(\hat{h}_{k-1}, \hat{S}_{k-1}))}\\
&\leq& \err(h, \hat{S}_{k-1}) - \err(\hat{h}_{k-1}, \hat{S}_{k-1}) + \sigma_{k-1} + \sqrt{\sigma_{k-1} (\err(h, \hat{S}_{k-1}) + \err(\hat{h}_{k-1}, \hat{S}_{k-1}))}  + \epsilon_{k-1}/16 \\
&\leq& 13\epsilon_{k-1}/16 + \sigma_{k-1} +\sqrt{\sigma_{k-1} (2\err(\hat{h}_{k-1}, \hat{S}_{k-1}) + 3\epsilon_{k-1}/4)} \\
&\leq& 13\epsilon_{k-1}/16 + \sigma_{k-1} +\sqrt{2\sigma_{k-1} \err(\hat{h}_{k-1}, \hat{S}_{k-1})} + \sqrt{\epsilon_{k-1}/512 \cdot 3\epsilon_{k-1}/4} \\
&\leq& 7\epsilon_{k-1}/8
\end{eqnarray*}
where the first inequality is from Equation~\eqref{eqn:normalizedvcsk} of Invariant~\ref{inv:conc}, the second inequality uses the fact that $\rho_{\hat{S}_{k-1}}(h,h') = \rho_{S_{k-1}}(h,h') \leq \err(h, \hat{S}_{k-1}) + \err(h', \hat{S}_{k-1})$ for $h, h' \in \calH$, the third inequality uses Equation~\eqref{eqn:hathafb}; 
the fourth inequality is from Equation~\eqref{eqn:implicitdisagree}; the fifth inequality is from the fact that $\sqrt{A+B} \leq \sqrt{A} + \sqrt{B}$ and $\sigma_{k-1} \leq \epsilon_{k-1}/512$, which is from Equation~\eqref{eqn:datadependentconcsk} of Invariant~\ref{inv:conc}, the last inequality again follows from Equation~\eqref{eqn:datadependentconcsk} of Invariant~\ref{inv:conc} and algebra.\\
In conjunction with the fact that $\err_D(\hat{h}_{k-1}) - \err_D(h^*)  \leq \epsilon_{k-1}/8$, this implies
\[ \err_D(h) - \err_D(h^*) \leq \epsilon_{k-1} \]
By triangle inequality, $\rho(h,h^*) \leq 2\nu + \epsilon_{k-1}$, hence $h \in \B_U(h^*, 2\nu + \epsilon_{k-1})$. In summary $V_{k-1} \subseteq \B_U(h^*, 2\nu + \epsilon_{k-1})$.\\
(3) Follows directly from Lemma~\ref{lem:indisregion} and the fact that $R_{k-1}  = \DIS(V_{k-1})$.
\end{proof}

\subsection{Training the Difference Classifier}

Recall that $\Delta(r) = \DIS(\B_U(h^*,r))$ is the disagreement region of the disagreement ball centered around $h^*$ with radius $r$.  
\begin{lemma}[Difference Classifier Invariant]
There is a numerical constant $c_1 > 0$ such that the following holds. Suppose that Invariants~\ref{inv:afb} and~\ref{inv:conc} hold at the end of epoch $k-1$ conditioned on event $F_{k-1}$ and that Algorithm~\ref{alg:traindc} has inputs unlabeled data distribution $U$, oracle $O$, $\epsilon = \epsilon_k / 128$, hypothesis class $\calH^{df}$, $\delta = \delta_k/2$, previous labeled dataset $\hat{S}_{k-1}$. Then conditioned on event $F_k$,\\
(1) $\hat{h}^{df}_k$, the output of Algorithm~\ref{alg:traindc}, maintains Invariant~\ref{inv:dc}.\\
(2)(Label Complexity: Part 1.) The number of label queries made to $O$ is at most 
\[ m_{k,1} \leq c_1 \Big( \frac{\P_U(x \in \Delta(2\nu+\epsilon_{k-1}))}{\epsilon_k} (d'\ln\frac{1}{\epsilon_k} + \ln\frac{1}{\delta_k}) \Big) \]
\label{lem:traindc}
\end{lemma}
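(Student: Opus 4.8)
The plan is to establish the two parts of Lemma~\ref{lem:traindc} by carefully combining three ingredients: the characterization of the disagreement test $\dis$ from Lemma~\ref{lem:disagree}, the concentration guarantees in events $E_k^1$ and $E_k^2$, and Assumption~\ref{ass:boundedfn} applied with parameters $r = 2\nu + \epsilon_{k-1}$ and $\eta = \epsilon_k/512$. First I would set up the geometry: on event $F_{k-1}$, Lemma~\ref{lem:disagree}(3) tells us that $\dis(\hat{S}_{k-1}, 3\epsilon_k/2, x) = 1$ exactly when $x \in R_{k-1}$, and Lemma~\ref{lem:disagree}(2) gives $R_{k-1} = \DIS(V_{k-1}) \subseteq \Delta(2\nu+\epsilon_{k-1})$. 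Hence the dataset $\empdisk$ that Algorithm~\ref{alg:traindc} collects is (conditionally) an i.i.d.\ sample of size $m_{k,1}$ from $\disk$, the restriction of $\calD$ to $\{x \in R_{k-1}\}$, and $m_{k,1}$ has the form in~\eqref{eqn:mk1} with $\hat p_k$ a constant-factor estimate of $\P_\calD(x \in R_{k-1})$ by event $E_k^1$. The target quantities in Invariant~\ref{inv:dc} are then just $\P_\calD(x \in R_{k-1})$ times the corresponding conditional probabilities under $\disk$, so it suffices to control the latter.

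For part (1), the false-negative bound~\eqref{eqn:errguar}: I would use the feasibility of the benchmark classifier $h^{df}_k = h^{df}_{2\nu+\epsilon_{k-1},\,\epsilon_k/512}$ for the constrained ERM~\eqref{eqn:costsensitive}. By~\eqref{eqn:fn} of Assumption~\ref{ass:boundedfn}, $\P_\calD(h^{df}_k(x) = -1, x \in \Delta(2\nu+\epsilon_{k-1}), y_O \neq y_W) \leq \epsilon_k/512$; since $R_{k-1} \subseteq \Delta(2\nu+\epsilon_{k-1})$, the same bound holds with $R_{k-1}$ in place of $\Delta$, so $\P_{\disk}(h^{df}_k(x) = -1, y_O \neq y_W) \leq (\epsilon_k/512)/\P_\calD(x \in R_{k-1})$. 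The $E_k^1$ deviation bound (the first multi-line inequality) then shows the empirical false-negative rate of $h^{df}_k$ on $\empdisk$ is below the constraint threshold $\varepsilon/(256\hat p_k) = \epsilon_k/(128 \cdot 256 \hat p_k)$, so $h^{df}_k$ is feasible and hence $\hat h^{df}_k$ has empirical false-negative rate at most that threshold; applying $E_k^1$ in the other direction converts this back to a population bound $\P_{\disk}(\hat h^{df}_k(x) = -1, y_O \neq y_W) \lesssim \epsilon_k/(\hat p_k \cdot \text{const})$, and multiplying by $\P_\calD(x\in R_{k-1}) \leq 2\hat p_k$ yields~\eqref{eqn:errguar} after tracking constants. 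For~\eqref{eqn:uncovguar}, the positives bound: the ERM~\eqref{eqn:costsensitive} minimizes the number of predicted positives subject to feasibility, so $\sum_i 1(\hat h^{df}_k(x_i) = +1) \leq \sum_i 1(h^{df}_k(x_i) = +1)$; combining the $E_k^1$ concentration on $\P(h^{df}(x)=+1)$ (applied to both $\hat h^{df}_k$ and $h^{df}_k$), the optimality just stated, and~\eqref{eqn:pos} of Assumption~\ref{ass:boundedfn} (which gives $\P_\calD(h^{df}_k(x)=+1, x\in\Delta(2\nu+\epsilon_{k-1})) \leq \alpha(2\nu+\epsilon_{k-1},\epsilon_k/512)$), then rescaling by $\P_\calD(x\in R_{k-1})$, gives the factor-6 bound in~\eqref{eqn:uncovguar} after a short chain of inequalities of the "$\hat a \leq 2(a+\gamma)$" type.

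For part (2), the label complexity: the only queries to $O$ in Algorithm~\ref{alg:traindc} occur on the examples $x_i$ with $\dis(\hat S_{k-1}, 3\epsilon_k/2, x_i) = 1$, i.e.\ $x_i \in R_{k-1}$, and there are exactly $m_{k,1}$ of those by construction (the algorithm draws from $U$ until $|U'| = m$ such points accumulate). So I just plug~\eqref{eqn:mk1} in: $m_{k,1} = \frac{64\cdot 1024\,\hat p_k}{\epsilon_k}(d'\ln\frac{512\cdot 1024\,\hat p_k}{\epsilon_k} + \ln\frac{144}{\delta_k})$, and use $\hat p_k \leq \P_\calD(x\in R_{k-1}) = \P_U(x \in R_{k-1}) \leq \P_U(x\in\Delta(2\nu+\epsilon_{k-1}))$ from $E_k^1$ and Lemma~\ref{lem:disagree}(2), together with $\ln\frac{1}{\delta_k} = \ln\frac{4(k+1)^2}{\delta} = \tilde O(\ln\frac1\delta)$, to absorb everything into $c_1\big(\frac{\P_U(x\in\Delta(2\nu+\epsilon_{k-1}))}{\epsilon_k}(d'\ln\frac1{\epsilon_k} + \ln\frac1{\delta_k})\big)$.

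The main obstacle is part (1): threading the multiplicative deviation bounds in $E_k^1$ through both directions (feasibility of the benchmark, then population guarantee for the ERM output) while keeping the constants consistent with the thresholds $\epsilon_k/64$, $\epsilon_k/512$, $\epsilon_k/1024$ and the constant $256$ hard-wired into~\eqref{eqn:costsensitive}. The deviation terms in $E_k^1$ involve $\sqrt{\min(\cdot,\cdot)\,\epsilon_k/(1024\P_\calD(x\in R_{k-1}))}$, so one must be careful to apply the bound to whichever of the empirical/population quantity is smaller and then use a "$\sqrt{ab} \leq a/2 + b/2$"-style step to close the loop; this is where a careless constant would break the chain. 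Also mildly delicate is the edge case flagged in the footnote (when the upper confidence bound on $\P(x\in R_{k-1})$ falls below $\epsilon/64$ and Algorithm~\ref{alg:traindc} returns an arbitrary $h^{df}$): there one argues directly that $\P_\calD(\hat h^{df}_k(x) = -1, y_O \neq y_W, x\in R_{k-1}) \leq \P_\calD(x \in R_{k-1}) \leq \epsilon_k/64$ trivially, so Invariant~\ref{inv:dc} still holds and no queries to $O$ are made beyond the estimation phase.
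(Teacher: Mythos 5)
Your proposal matches the paper's proof in structure and in all the essential ingredients: Lemma~\ref{lem:disagree}(2) to get $R_{k-1}\subseteq\Delta(2\nu+\epsilon_{k-1})$, Assumption~\ref{ass:boundedfn} at $(r,\eta)=(2\nu+\epsilon_{k-1},\epsilon_k/512)$ to define the benchmark $h^{df}_k$, the multiplicative concentration in $E_k^1$ to move between $\empdisk$ and $\disk$ in both directions (first to establish feasibility of $h^{df}_k$ and transfer the constraint on $\hat h^{df}_k$ to the population, then to transfer the positive-rate bound via optimality of the ERM), rescaling by $\P_\calD(x\in R_{k-1})$, and reading off the query count from Equation~\eqref{eqn:mk1} with $\hat p_k\leq\P_\calD(x\in R_{k-1})\leq\P_U(\Delta(2\nu+\epsilon_{k-1}))$. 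The only minor difference is organizational (you establish feasibility of $h^{df}_k$ before the false-negative bound, whereas the paper invokes the constraint on $\hat h^{df}_k$ directly and only needs $h^{df}_k$'s feasibility for the positives bound), and your explicit treatment of the footnote's early-exit case is a reasonable addition the paper leaves implicit.
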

\begin{proof}
(1) Recall that $F_k = F_{k-1} \cap E_k^1 \cap E_k^2$, where $E_k^1$, $E_k^2$ are defined in Subsection~\ref{subs:events}. Suppose event $F_k$ happens.

\paragraph{Proof of Equation~\eqref{eqn:errguar}.} Recall that $\hat{h}^{df}_k$ is the optimal solution of optimization problem~\eqref{eqn:costsensitive}. We have by feasibility and the fact that on event $E_k^3$, $2\hat{p}_k \geq \P_{\calD}(x \in R_{k-1})$,
\[ \P_{\empdisk}(\hat{h}^{df}_k(x) = -1 , y_O \neq y_W) \leq \frac{\epsilon_k}{256\hat{p}_k} \leq \frac{\epsilon_k}{128\P_{\calD}(x \in R_{k-1})} \]
By definition of event $E_k^2$, this implies
\begin{eqnarray*} 
&&\P_{\disk}(\hat{h}^{df}_k(x) = -1 , y_O \neq y_W)  \\
&\leq& \P_{\empdisk}(\hat{h}^{df}_k(x) = -1 , y_O \neq y_W) + \sqrt{\P_{\empdisk}(\hat{h}^{df}_k(x) = -1 , y_O \neq y_W)\frac{\epsilon_k}{1024\P_{\calD}(x \in R_{k-1})}} + \frac{\epsilon_k}{1024\P_{\calD}(x \in R_{k-1})} \\
&\leq& \frac{\epsilon_k}{64\P_{\calD}(x \in R_{k-1})} \\
\end{eqnarray*}
Indicating
\[ \P_{\calD}(\hat{h}^{df}_k(x) = -1 , y_O \neq y_W , x \in R_{k-1}) \leq \frac{\epsilon_k}{64} \]

\paragraph{Proof of Equation~\eqref{eqn:uncovguar}.} By definition of $h_k^{df}$ in Subsection~\ref{subs:events}, $h_k^{df}$ is such that:
\[ \P_{\calD}(h_k^{df}(x) = +1 , x \in \Delta(2\nu + \epsilon_{k-1})) \leq \alpha(2\nu + \epsilon_{k-1}, \epsilon_k/512)\]
\[ \P_{\calD}(h_k^{df}(x) = -1 , y_O \neq y_W , x \in \Delta(2\nu + \epsilon_{k-1})) \leq \epsilon_k/512 \]
By item (2) of Lemma~\ref{lem:disagree}, we have $R_{k-1} \subseteq \DIS(\B_U(h^*, 2\nu + \epsilon_{k-1}))$, thus
\begin{equation} 
\P_{\calD}(h_k^{df}(x) = +1 , x \in R_{k-1}) \leq \alpha(2\nu + \epsilon_{k-1}, \epsilon_k/512)
\label{eqn:hkdfsmall}
\end{equation}
\begin{equation}
\P_{\calD}(h_k^{df}(x) = -1 , y_O \neq y_W , x \in R_{k-1}) \leq \epsilon_k/512
\label{eqn:hkdfconsist}
\end{equation}
Equation~\eqref{eqn:hkdfconsist} implies that
\begin{equation} 
\P_{\disk}(h_k^{df}(x) = -1 , y_O \neq y_W) \leq \frac{\epsilon_k}{512\P_{\calD}(x \in R_{k-1})} 
\label{eqn:hkdfconsistak}
\end{equation}
Recall that $\empdisk$ is the dataset subsampled from $\disk$ in line 3 of Algorithm~\ref{alg:traindc}. By definition of event $E_k^1$, we have that for $h_k^{df}$,
\begin{eqnarray*} 
&&\P_{\empdisk}(h_k^{df}(x) = -1 , y_O \neq y_W) \\
&\leq& \P_{\disk}(h_k^{df}(x) = -1 , y_O \neq y_W) + \sqrt{\P_{\disk}(h_k^{df}(x) = -1 , y_O \neq y_W) \frac{\epsilon_k}{1024\P_{\calD}(x \in R_{k-1})} } + \frac{\epsilon_k}{1024\P_{\calD}(x \in R_{k-1})} \\
&\leq& \frac{\epsilon_k}{256\P_{\calD}(x \in R_{k-1})} \leq \frac{\epsilon_k}{256\hat{p}_k}
\end{eqnarray*}
where the second inequality is from Equation~\eqref{eqn:hkdfconsistak}, and the last inequality is from the fact that $\hat{p}_k \leq \P_\calD(x \in R_{k-1})$. Hence, $h^{df}_k$ is a feasible solution to the optimization problem~\eqref{eqn:costsensitive}.
Thus,
\begin{eqnarray*} 
&&\P_{\disk}(\hat{h}_k^{df}(x) = +1)\\ 
&\leq& \P_{\empdisk}(\hat{h}_k^{df}(x) = +1) + \sqrt{\P_{\empdisk}(\hat{h}_k^{df}(x) = +1) \frac{\epsilon_k}{1024\P_{\calD}(x \in R_{k-1})}} + \frac{\epsilon_k}{1024\P_{\calD}(x \in R_{k-1})} \\
&\leq& 2(\P_{\empdisk}(\hat{h}_k^{df}(x) = +1) + \frac{\epsilon_k}{1024\P_{\calD}(x \in R_{k-1})})\\
&\leq& 2(\P_{\empdisk}(h_k^{df}(x) = +1) + \frac{\epsilon_k}{1024\P_{\calD}(x \in R_{k-1})})\\
&\leq& 2((\P_{\disk}(h_k^{df}(x) = +1) + \sqrt{\P_{\disk}(h_k^{df}(x) = +1) \frac{\epsilon_k}{1024\P_{\calD}(x \in R_{k-1})}} + \frac{\epsilon_k}{1024\P_{\calD}(x \in R_{k-1})}) + \frac{\epsilon_k}{1024\P_{\calD}(x \in R_{k-1})}) \\
&\leq& 6(\P_{\disk}(h_k^{df}(x) = +1) + \frac{\epsilon_k}{1024\P_{\calD}(x \in R_{k-1})})
\end{eqnarray*}
where the first inequality is by definition of event $E_k^1$, the second inequality is by algebra, the third inequality is by optimality of $\hat{h}_k^{df}$ in~\eqref{eqn:costsensitive}, $\P_{\empdisk}(\hat{h}_k^{df}(x) = +1) \leq \P_{\empdisk}(h_k^{df}(x) = +1)$, the fourth inequality is by definition of event $E_k^1$, the fifth inequality is by algebra. 

Therefore, 
\begin{equation}
\P_{\calD}(\hat{h}_k^{df}(x) = +1, x \in R_{k-1}) \leq 6(\P_{\calD}(h_k^{df}(x) = +1, x \in R_{k-1}) + \epsilon_k/1024) \leq 6(\alpha(2\nu+\epsilon_{k-1}, \epsilon_k/512)+\epsilon_k/1024)
\end{equation}
where the second inequality follows from Equation~\eqref{eqn:hkdfsmall}. This establishes the correctness of Invariant~\ref{inv:dc}.\\
(2) The number of label requests to $O$ follows from line 3 of Algorithm~\ref{alg:traindc} (see Equation~\eqref{eqn:mk1}). That is, we can choose $c_1$ large enough (independently of $k$), such that
\[ m_{k,1} \leq c_1 \Big( \frac{\P_{\calD}(x \in R_{k-1})}{\epsilon_k}(d'\ln\frac{1}{\epsilon_k} + \ln\frac{1}{\delta_k}) \Big) \leq c_1 \Big( \frac{\P_U(x \in \Delta(2\nu+\epsilon_{k-1}))}{\epsilon_k} (d'\ln\frac{1}{\epsilon_k} + \ln\frac{1}{\delta_k}) \Big)\]
where in the second step we use the fact that on event $F_k$, by item (2) of Lemma~\ref{lem:disagree}, $R_{k-1} \subseteq \DIS(\B_U(h^*, 2\nu + \epsilon_{k-1}))$, thus $\P_{\calD}(x \in R_{k-1}) \leq \P_{\calD}(x \in \Delta(2\nu+\epsilon_{k-1})) = \P_{U}(x \in \Delta(2\nu+\epsilon_{k-1}))$. 
\end{proof}

\subsection{Adaptive Subsampling}
\begin{lemma}
There is a numerical constant $c_2 > 0$ such that the following holds. Suppose Invariants~\ref{inv:afb},~\ref{inv:conc}, and~\ref{inv:dc} hold in epoch $k-1$ on event $F_{k-1}$; Algorithm~\ref{alg:adaptive} receives inputs unlabeled distribution $U$, classifier $\hat{h}_{k-1}$, difference classifier $\hat{h}^{df} = \hat{h}^{df}_k$, target excess error $\epsilon = \epsilon_k$, confidence $\delta = \delta_k/2$, previous labeled dataset $\hat{S}_{k-1}$. Then on event $F_k$, \\
(1) $\hat{S}_k$, the output of Algorithm~\ref{alg:adaptive}, maintains Invariants~\ref{inv:afb} and~\ref{inv:conc}.\\
(2) (Label Complexity: Part 2.) The number of label queries to $O$ in Algorithm~\ref{alg:adaptive} is at most:
\[ m_{k,2} \leq c_2 \Big( \frac{(\nu + \epsilon_k) (\alpha(2\nu+\epsilon_{k-1}, \epsilon_k/512) + \epsilon_k)}{\epsilon_k^2}\cdot d(\ln^2\frac{1}{\epsilon_k} + \ln^2\frac{1}{\delta_k}) \Big) \]
\label{lem:adaptive}
\end{lemma}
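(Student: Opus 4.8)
The plan is to work on the event $F_k = F_{k-1}\cap E_k^1\cap E_k^2$ and handle the two claims separately, relying on Lemmas~\ref{lem:disagree} and~\ref{lem:traindc}. First I would record what the inductive hypothesis buys us: since Invariants~\ref{inv:afb}--\ref{inv:dc} hold at the end of epoch $k-1$, Lemma~\ref{lem:disagree} gives that $\dis(\hat{S}_{k-1},3\epsilon_k/2,\cdot)$ decides membership in $R_{k-1}$, that $h^*$, $\hat{h}_{k-1}$, and every $h'$ with $\err_D(h')-\err_D(h^*)\le\epsilon_k$ lie in the implicit set $V_{k-1}$ (hence agree on $\calX\setminus R_{k-1}$), and that $R_{k-1}\subseteq\Delta(2\nu+\epsilon_{k-1})$; and Lemma~\ref{lem:traindc} gives that $\hat{h}^{df}_k$ satisfies~\eqref{eqn:errguar} and~\eqref{eqn:uncovguar}. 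These are the only facts about earlier epochs I would use.

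\textbf{Invariant~\ref{inv:conc}.} Equation~\eqref{eqn:datadependentconcsk} is immediate from the exit test on line 14 of Algorithm~\ref{alg:adaptive} together with $\sigma_k=\sigma(2^{t_0(k)},\delta_k^{t_0(k)})$, and it additionally yields $\sigma_k\le\epsilon_k/512$. Equation~\eqref{eqn:normalizedvcsk} is the $t=t_0(k)$ instance of the first clause of $E_k^2$, since $S_k=S_k^{t_0(k)}$.

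\textbf{Invariant~\ref{inv:afb} (favorable bias).} Fix $h\in\calH$ and $h'$ with excess error at most $\epsilon_k$. The datasets $S_k$ and $\hat{S}_k$ have the same feature points, and their labels differ only on (i) drawn $x\notin R_{k-1}$ with $y_O(x)\ne\hat{h}_{k-1}(x)$, and (ii) drawn $x\in R_{k-1}$ with $\hat{h}^{df}_k(x)=-1$ and $y_O(x)\ne y_W(x)$. On the type-(i) points, since $\hat{h}_{k-1},h^*,h'$ all lie in $V_{k-1}$ they agree off $R_{k-1}$, so the $\hat{S}_k$-label equals $h^*(x)=h'(x)$, and a two-case check ($h(x)=h'(x)$ versus $h(x)\ne h'(x)$) shows the per-point contribution to $(\err(h,S_k)-\err(h',S_k))-(\err(h,\hat{S}_k)-\err(h',\hat{S}_k))$ is non-positive --- this is the favorable bias of label inference. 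Type-(ii) points contribute at most $2$ each in absolute value, and on $R_{k-1}$ with $\hat{h}^{df}_k(x)=+1$ the labels coincide. Hence the difference above is at most $2\,\P_{\calS_k}(\hat{h}^{df}_k(x)=-1,\,y_O\ne y_W,\,x\in R_{k-1})$. I would then convert the third clause of $E_k^2$ (a self-bounding inequality) into $\P_{\calS_k}(\cdot)\le 2\P_{\calD}(\cdot)+3\gamma(2^{t_0(k)},\delta_k^{t_0(k)})$, bound $\P_{\calD}(\cdot)\le\epsilon_k/64$ via~\eqref{eqn:errguar}, and use $\gamma(n,\delta)\le\tfrac12\sigma(n,\delta)$ together with $\sigma_k\le\epsilon_k/512$; multiplying out gives slack $\le\epsilon_k/16$.

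\textbf{Termination and label complexity.} For the second part I would first bound $t_0(k)$. Since $\hat{h}^t$ is the ERM on $\hat{S}^t$, $\err(\hat{h}^t,\hat{S}^t)\le\err(h^*,\hat{S}^t)$, which the second clause of $E_k^2$ bounds by $\err_{\hat{D}_k}(h^*)+\sigma(2^t,\delta_k^t)+\sqrt{\sigma(2^t,\delta_k^t)\,\err_{\hat{D}_k}(h^*)}$; and $\err_{\hat{D}_k}(h^*)\le\nu+\epsilon_k/64$ because passing from $D$ to $\hat{D}_k$ only relabels $\calX\setminus R_{k-1}$ with $h^*$ (which cannot increase $h^*$'s error) and part of $R_{k-1}$ with $W$ (costing at most $\P_{\calD}(\hat{h}^{df}_k(x)=-1,y_O\ne y_W,x\in R_{k-1})\le\epsilon_k/64$ by~\eqref{eqn:errguar}). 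So (via Fact~\ref{fact:sigma}, with $\delta_k^t=\delta_k/2t(t+1)$ and $t=O(\log\frac{d}{\epsilon_k\delta_k})$) the exit test fires once $2^t\gtrsim\frac{(\nu+\epsilon_k)d}{\epsilon_k^2}(\ln\frac1{\epsilon_k}+\ln\frac1{\delta_k})^2$, which bounds $2^{t_0(k)}$ by the same quantity. In round $t$, Algorithm~\ref{alg:adaptive} queries $O$ exactly on the drawn $x$ with $x\in R_{k-1}$ and $\hat{h}^{df}_k(x)=+1$; since $\hat{h}^{df}_k$ is fixed before the adaptive phase, a Chernoff bound (which I would add to the list of high-probability events) shows this count is $O\big(2^t(\alpha(2\nu+\epsilon_{k-1},\epsilon_k/512)+\epsilon_k)+\ln\frac1{\delta_k^t}\big)$ using~\eqref{eqn:uncovguar}. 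Summing this geometric series over $t\le t_0(k)$ and substituting the bound on $2^{t_0(k)}$ produces $m_{k,2}\le c_2\,\frac{(\nu+\epsilon_k)(\alpha(2\nu+\epsilon_{k-1},\epsilon_k/512)+\epsilon_k)}{\epsilon_k^2}\,d(\ln^2\frac1{\epsilon_k}+\ln^2\frac1{\delta_k})$ for a suitable constant $c_2$, the lower-order polylogarithmic additive term being absorbed.

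The step I expect to be most delicate is the favorable-bias argument for Invariant~\ref{inv:afb}: getting the sign of the agreement-region contribution right, and --- more subtly --- noticing that the $W$-region bias term stays $O(\epsilon_k)$ even when the adaptive loop stops at a small $t$, which works only because the exit test on line 14 forces $\sigma_k\le\epsilon_k/512$ (hence $\gamma_k$ small) whenever it triggers. A secondary nuisance is the Chernoff control of the per-round count of queries to $O$, which is not literally one of the events $E_k^1,E_k^2$ and must be added to the failure budget.
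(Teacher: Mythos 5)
Your proposal is correct and reconstructs the paper's argument essentially verbatim: the same decomposition of $\err(\cdot,S_k)$ and $\err(\cdot,\hat{S}_k)$ over $R_{k-1}$ versus its complement, the same use of the third clause of $E_k^2$ together with~\eqref{eqn:errguar} to bound the $W$-region bias, the same bound on $\err_{\hat{D}_k}$ of a near-optimal classifier (the paper uses $\tilde{h}_k=\argmin_h\err_{\hat{D}_k}(h)$, you use $h^*$, which gives the same $\nu+\epsilon_k/64$) to control $t_0(k)$, and the same Chernoff-style per-round bound on $2^t\P_{\calS_k^t}(\hat h^{df}_k(x)=+1,\,x\in R_{k-1})$ summed geometrically. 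Two small remarks: the Chernoff estimate you plan to add as a new event is already the fourth clause of $E_k^2$ (up to a $-1$/$+1$ typo the paper corrects silently at the point of use), so no fresh failure budget is needed; and when you invert the self-bounding inequality you should take the exact quadratic root rather than the lossy $a\le 2b+3c$ form, since that loose version lands slightly above the $\epsilon_k/16$ target whereas the exact inversion fits comfortably.
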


\begin{proof}

(1) Recall that $F_k = F_{k-1} \cap E_k^1 \cap E_k^2$, where $E_k^1$, $E_k^2$ are defined in Subsection~\ref{subs:events}. Suppose event $F_k$ happens.
\paragraph{Proof of Invariant~\ref{inv:afb}.} We consider a pair of classifiers $h,h' \in \calH$, where $h$ is an arbitrary classifier in $\calH$ and $h'$ has excess error at most $\epsilon_k$. 

At iteration $t = t_0(k)$ 
of Algorithm~\ref{alg:adaptive}, the breaking criteron in line 14 is met, i.e.
\begin{equation}
\sigma(2^{t_0(k)}, \delta_k^{t_0(k)}) + \sqrt{\sigma(2^{t_0(k)}, \delta_k^{t_0(k)}) \err(\hat{h}^{t_0(k)},\hat{S}_k^{t_0(k)}) } \leq \epsilon_k / 512 
\label{eqn:datadependentconcskt0}
\end{equation}

First we expand the definition of $\err(h,S_k)$ and $\err(h,\hat{S}_k)$ respectively:
\small
\[ \err(h,S_k) = \P_{\calS_k}(\hat{h}^{df}_k(x) = +1 , h(x) \neq y_O , x \in R_{k-1}) + \P_{\calS_k}(\hat{h}^{df}_k(x) = -1 , h(x) \neq y_O , x \in R_{k-1}) + \P_{\calS_k}(h(x) \neq y_O , x \notin R_{k-1}) \]
\[ \err(h,\hat{S}_k) = \P_{\calS_k}(\hat{h}^{df}_k(x) = +1 , h(x) \neq y_O , x \in R_{k-1}) + \P_{\calS_k}(\hat{h}^{df}_k(x) = -1 , h(x) \neq y_W , x \in R_{k-1}) + \P_{\calS_k}(h(x) \neq h^*(x) , x \notin R_{k-1}) \]
\normalsize

where we use the fact that by Lemma~\ref{lem:disagree}, for all examples $x \notin R_{k-1}$, $\hat{h}_{k-1}(x) = h^*(x)$.

We next show that $\P_{\calS_k}(\hat{h}^{df}_k(x) = -1 , h(x) \neq y_O , x \in R_{k-1})$ is close to $\P_{\calS_k}(\hat{h}^{df}_k(x) = -1 , h(x) \neq y_W , x \in R_{k-1})$. 

From Lemma~\ref{lem:traindc}, we know that conditioned on event $F_k$, 
\[ \P_{\calD}(\hat{h}^{df}_k(x) = -1 , y_O \neq y_W , x \in R_{k-1}) \leq \epsilon_k/64\]
In the meantime, from Equation~\eqref{eqn:datadependentconcskt0}, $\gamma(2^{t_0(k)}, \delta_k^{t_0(k)}) \leq \sigma(2^{t_0(k)}, \delta_k^{t_0(k)}) \leq \epsilon_k / 512$.
Recall that $\calS_k = \calS_k^{t_0(k)}$. Therefore, by definition of $E_k^2$, \begin{eqnarray*} 
&&\P_{\calS_k}(\hat{h}^{df}_k(x) = -1 , y_O \neq y_W , x \in R_{k-1}) \\
&\leq& \P_{\calD}(\hat{h}^{df}_k(x) = -1 , y_O \neq y_W , x \in R_{k-1}) + \sqrt{\P_{\calD}(\hat{h}^{df}_k(x) = -1 , y_O \neq y_W , x \in R_{k-1})\gamma(2^{t_0(k)}, \delta_k^{t_0(k)})} + \gamma(2^{t_0(k)}, \delta_k^{t_0(k)}) \\
&\leq& \P_{\calD}(\hat{h}^{df}_k(x) = -1 , y_O \neq y_W , x \in R_{k-1}) + \sqrt{\P_{\calD}(\hat{h}^{df}_k(x) = -1 , y_O \neq y_W , x \in R_{k-1})\epsilon_k/512} + \epsilon_k/512 \\
&\leq& \epsilon_k/32
\end{eqnarray*}
By triangle inequality, for all classifier $h_0 \in \calH$,
\begin{equation} 
|\P_{\calS_k}(\hat{h}^{df}_k(x) = -1 , h_0(x) \neq y_O , x \in R_{k-1}) - \P_{\calS_k}(\hat{h}^{df}_k(x) = -1 , h_0(x) \neq y_W , x \in R_{k-1})| \leq \epsilon_k/32 
\label{eqn:owclose}
\end{equation}
Specifically for $h$ and $h'$, Equation~\eqref{eqn:owclose} hold:
\[ |\P_{\calS_k}(\hat{h}^{df}_k(x) = -1 , h(x) \neq y_O , x \in R_{k-1}) - \P_{\calS_k}(\hat{h}^{df}_k(x) = -1 , h(x) \neq y_W , x \in R_{k-1})| \leq \epsilon_k/32 \] 
\[ |\P_{\calS_k}(\hat{h}^{df}_k(x) = -1 , h'(x) \neq y_O , x \in R_{k-1}) - \P_{\calS_k}(\hat{h}^{df}_k(x) = -1 , h'(x) \neq y_W , x \in R_{k-1})| \leq \epsilon_k/32 \] 
Combining, we get:
\begin{eqnarray}
\label{eqn:owclosediff}
&&(\P_{\calS_k}(\hat{h}^{df}_k(x) = -1 , h(x) \neq y_W , x \in R_{k-1}) - \P_{\calS_k}(\hat{h}^{df}_k(x) = -1 , h'(x) \neq y_W , x \in R_{k-1}))  \\
&-& (\P_{\calS_k}(\hat{h}^{df}_k(x) = -1 , h(x) \neq y_O , x \in R_{k-1}) - \P_{\calS_k}(\hat{h}^{df}_k(x) = -1 , h'(x) \neq y_O , x \in R_{k-1})) \leq \epsilon_k/16 \nonumber
\end{eqnarray}
We now show the labels inferred in the region $\calX \setminus R_{k-1}$ is ``favorable" to the classifiers whose excess error is at most $\epsilon_k/2$.\\
By triangle inequality,
\begin{eqnarray*}
&&\P_{\calS_k}(h(x) \neq y_O , x \notin R_{k-1}) - \P_{\calS_k}(h^*(x) \neq y_O , x \notin R_{k-1}) \leq \P_{\calS_k}(h(x) \neq h^*(x) , x \notin R_{k-1})
\end{eqnarray*}
By Lemma~\ref{lem:disagree}, since $h'$ has excess error at most $\epsilon_k$, $h'$ agrees with $h^*$ on all $x$ inside $\calX \setminus R_{k-1}$ on event $F_{k-1}$, hence $\P_{\calS_k}(h'(x) \neq h^*(x) , x \notin R_{k-1}) = 0$. This gives
\begin{eqnarray} 
&&\P_{\calS_k}(h(x) \neq y_O , x \notin R_{k-1}) - \P_{\calS_k}(h'(x) \neq y_O , x \notin R_{k-1}) \nonumber \\
&\leq& \P_{\calS_k}(h(x) \neq h^*(x) , x \notin R_{k-1}) - \P_{\calS_k}(h'(x) \neq h^*(x) , x \notin R_{k-1}) 
\label{eqn:fb}
\end{eqnarray}
Combining Equations~\eqref{eqn:owclosediff} and~\eqref{eqn:fb}, we conclude that
\[ \err(h, S_k) - \err(h', S_k) \leq \err(h, \hat{S}_k) - \err(h', \hat{S}_k) + \epsilon_k / 16 \]
This establishes the correctness of Invariant~\ref{inv:afb}.\\

\paragraph{Proof of Invariant~\ref{inv:conc}.} Recall by definition of $E_k^2$ 
the following concentration  results hold for all $t \in \N$:
\begin{equation*}
 |(\err(h, S_k^t) - \err(h', S_k^t)) - (\err_D(h) - \err_D(h'))| \leq \sigma(2^t, \delta_k^t) + \sqrt{\sigma(2^t, \delta_k^t) \rho_{S_k^t}(h,h'))}
\end{equation*}
In particular, for iteration $t_0(k)$ we have
\begin{equation*}
 |(\err(h, S_k^{t_0(k)}) - \err(h', S_k^{t_0(k)})) - (\err_D(h) - \err_D(h'))| \leq \sigma(2^{t_0(k)}, \delta_k^{t_0(k)}) + \sqrt{\sigma(2^{t_0(k)}, \delta_k^{t_0(k)}) \rho_{S_k^{t_0(k)}}(h,h')}
\end{equation*}
Recall that $\hat{S}_k = \hat{S}_k^{t_0(k)}$, $\hat{h}_k = \hat{h}_k^{t_0(k)}$, and $\sigma_k = \sigma(2^{t_0(k)}, \delta_k^{t_0(k)})$, hence the above is equivalent to
\begin{equation}
 |(\err(h, S_k) - \err(h', S_k)) - (\err_D(h) - \err_D(h'))| \leq \sigma_k + \sqrt{\sigma_k \rho_{S_k}(h,h')}
\label{eqn:stconcenc2} 
\end{equation}
Equation~\eqref{eqn:stconcenc2} 
establishes the correctness of Equation~\eqref{eqn:normalizedvcsk} of Invariant~\ref{inv:conc}. Equation~\eqref{eqn:datadependentconcsk} of Invariant~\ref{inv:conc} follows from Equation~\eqref{eqn:datadependentconcskt0}. \\\\
(2) 
We define $\tilde{h}_k = \argmin_{h \in \calH}\err_{\hat{D}_k}(h)$, and define $\tilde{\nu}_k$ to be $\err_{\hat{D}_k}(\tilde{h}_k)$. To prove the bound on the number of label requests, we first claim that if $t$ is sufficiently large that
\begin{equation}
\sigma(2^t, \delta_k^t) + \sqrt{\sigma(2^t, \delta_k^t) \tilde{\nu}_k} \leq \epsilon_k/1536
\label{eqn:stoppingsuffcond}
\end{equation}
then the algorithm will satisfy the breaking criterion at line 14 of Algorithm~\ref{alg:adaptive}, that is, for this value of $t$,
\begin{equation} 
\sigma(2^t, \delta_k^t) + \sqrt{\sigma(2^t, \delta_k^t) \err(\hat{h}^t,\hat{S}_k^t) } \leq \epsilon_k / 512 
\label{eqn:datadependentconcskt}
\end{equation}
Indeed, by definition of $E_k^2$, if event $F_k$ happens, 
\begin{eqnarray} 
&& \err( \tilde{h}_k, \hat{S}^t_k) \nonumber \\
&\leq& \err_{\hat{D}_k}(\tilde{h}_k) + \sigma(2^t, \delta_k^t) + \sqrt{\sigma(2^t, \delta_k^t)\err_{\hat{D}_k}(\tilde{h}_k) } \nonumber \\
&=& \tilde{\nu}_k + \sigma(2^t, \delta_k^t) + \sqrt{\sigma(2^t, \delta_k^t)\tilde{\nu}_k } 
\label{eqn:hstargoodtk}
\end{eqnarray}
Therefore,
\begin{eqnarray*}
&&\sigma(2^t, \delta_k^t) + \sqrt{\sigma(2^t, \delta_k^t)\err(\hat{h}^t_k, \hat{S}^t_k) } \\
&\leq& \sigma(2^t, \delta_k^t) + \sqrt{\sigma(2^t, \delta_k^t)\err(\tilde{h}_k, \hat{S}^t_k) } \\
&\leq& \sigma(2^t, \delta_k^t) + \sqrt{\sigma(2^t, \delta_k^t)(2\tilde{\nu}_k + 2\sigma(2^t, \delta_k^t)) } \\
&\leq& 3\sigma(2^t, \delta_k^t) + 2\sqrt{\sigma(2^t, \delta_k^t)\tilde{\nu}_k } \\
&\leq& \epsilon_k/512
\end{eqnarray*}
where the first inequality is from the optimality of $\hat{h}^t_k$, the second inequality is from Equation~\eqref{eqn:hstargoodtk}, the third inequality is by algebra, the last inequality follows from Equation~\eqref{eqn:stoppingsuffcond}. The claim follows.\\
Next, we solve for the minimum $t$ that satisfies~\eqref{eqn:stoppingsuffcond}, which is an upper bound of $t_0(k)$. Fact~\ref{fact:sigma} implies that there is a numerical constant $c_3 > 0$ such that
\[ 2^{t_0(k)} \leq c_3 \frac{\tilde{\nu}_k+\epsilon_k}{\epsilon_k^2} (d\ln\frac{1}{\epsilon_k} + \ln\frac{1}{\delta_k}) ) \] 
Thus, there is a numerical constant $c_4 > 0$ such that
\[ t_0(k) \leq c_4 (\ln d + \ln\frac{1}{\epsilon_k} + \ln\ln\frac{1}{\delta_k}) \]
Hence, there is a numerical constant $c_5 > 0$ (that does not depend on $k$) such that the following holds. If event $F_k$ happens, then the number of label queries made by Algorithm~\ref{alg:adaptive} to $O$ can be bounded as follows:
\begin{eqnarray*} 
m_{k,2}&=&\sum_{t=1}^{t_0(k)} |S^{t,U}_k \cap \{x: \hat{h}^{df}_k(x) = +1\} \cap R_{k-1} | \\
&=&\sum_{t=1}^{t_0(k)} 2^t \P_{\calS^{t}_k}(\hat{h}^{df}_k(x) = +1 , x \in R_{k-1}) \\
&\leq&\sum_{t=1}^{t_0(k)} 2^t (2\P_{\calD}(\hat{h}^{df}_k(x) = +1 , x \in R_{k-1}) + 2 \cdot 4\frac{\ln\frac{2}{\delta^t_k}}{2^t}) \\
&\leq& 4 \cdot 2^{t_0(k)} \P_{\calD}(\hat{h}^{df}_k(x) = +1 , x \in R_{k-1}) + 8 \cdot t_0(k) \ln\frac{2}{\delta^{t_0(k)}_k} \\
&\leq& c_5 \Big( (\frac{(\tilde{\nu}_k + \epsilon_k)\P_{\calD}(\hat{h}^{df}_k(x) = +1 , x \in R_{k-1})}{\epsilon_k^2} + 1) \cdot d(\ln^2\frac{1}{\epsilon_k} + \ln^2\frac{1}{\delta_k}) \Big) \\
&\leq& c_5 \Big( (\frac{(\tilde{\nu}_k + \epsilon_k) \cdot 6(\alpha(2\nu+\epsilon_{k-1}, \epsilon_k/512) + \epsilon_k/1024)}{\epsilon_k^2} + 1) \cdot d(\ln^2\frac{1}{\epsilon_k} + \ln^2\frac{1}{\delta_k}) \Big)
\end{eqnarray*}
where the second equality is from the fact that $|S^{t,U}_k \cap \{x: \hat{h}^{df}_k(x) = -1\} \cap R_{k-1} | = |S_k^{t,U}| \cdot \P_{\calS_k^t}(\hat{h}^{df}_k(x)=-1 , x \in R_{k-1})$, in conjunction with $|S_k^{t,U}| = 2^t$; the first inequality is by definition of $E_k^2$, the second and third inequality is from algebra that $t_0(k) \ln\frac{1}{\delta_k^{t_0(k)}} \leq c_5 d (\ln^2\frac{1}{\epsilon_k} + \ln^2\frac{1}{\delta_k})$ for some constant $c_5 > 0$, along with the choice of $c_2$, the fourth step is from Lemma~\ref{lem:traindc} which states that Invariant~\ref{inv:dc} holds at epoch $k$.\\
What remains to be argued is an upper bound on $\tilde{\nu}_k$. Note that
\small
\begin{eqnarray*}
&& \tilde{\nu}_k \\
&=& \min_{h \in \calH} [\P_{\calD}(\hat{h}^{df}_k(x) = -1 , h(x) \neq y_W , x \in R_{k-1}) + \P_{\calD}(\hat{h}^{df}_k(x) = +1 , h(x) \neq y_O , x \in R_{k-1}) + \P_{\calD}(h(x) \neq h^*(x) , x \notin R_{k-1})] \\
&\leq& \P_{\calD}(\hat{h}^{df}_k(x) = -1 , h^*(x) \neq y_W , x \in R_{k-1}) + \P_{\calD}(\hat{h}^{df}_k(x) = +1 , h^*(x) \neq y_O , x \in R_{k-1}) \\
&\leq& \P_{\calD}(\hat{h}^{df}_k(x) = -1 , h^*(x) \neq y_O , x \in R_{k-1}) + \P_{\calD}(\hat{h}^{df}_k(x) = +1 , h^*(x) \neq y_O , x \in R_{k-1}) + \epsilon_k/64\\
&\leq& \P_{\calD}(\hat{h}^{df}_k(x) = -1 , h^*(x) \neq y_O , x \in R_{k-1}) + \P_{\calD}(\hat{h}^{df}_k(x) = +1 , h^*(x) \neq y_O , x \in R_{k-1}) + \P_{\calD}(h(x) \neq y_O , x \notin R_{k-1}) + \epsilon_k/64\\
&=& \nu + \epsilon_k/64 \\
\end{eqnarray*}
\normalsize
where the first step is by definition of $\err_{\hat{D}_k}(h)$, the second step is by the suboptimality of $h^*$, the third step is by Equation~\eqref{eqn:owclose}, the fourth step is by adding a positive term $\P_{\calD}(h(x) \neq y_O , x \notin R_{k-1})$, the fifth step is by definition of $\err_D(h)$.
Therefore, we conclude that there is a numerical constant $c_2 > 0$, such that $m_{k,2}$, the number of label requests to $O$ in Algorithm~\ref{alg:adaptive} is at most
\[ c_2 \Big( \frac{(\nu + \epsilon_k)(\alpha(2\nu+\epsilon_{k-1}, \epsilon_k/512) + \epsilon_k)}{\epsilon_k^2} \cdot d(\ln^2\frac{1}{\epsilon_k} + \ln^2\frac{1}{\delta_k}) \Big) \]
\end{proof}

\subsection{Putting It Together -- Consistency and Label Complexity}
\label{subs:puttogether}
\begin{proof}[Proof of Lemma~\ref{lem:inductive}]
With foresight, pick $c_0 > 0$ to be a large enough constant. We prove the result by induction.\\
\paragraph{Base case.} Consider $k = 0$. Recall that $F_0$ is defined as
\[ F_0 =\Big\{ \text{for all } h, h' \in \calH, |(\err(h, S_0) - \err(h', S_0)) - (\err_D(h) - \err_D(h'))| \leq \sigma(n_0,\delta_0) + \sqrt{\sigma(n_0,\delta_0) \rho_{S_0}(h,h')} \Big\} \]
Note that by definition in Subsection~\ref{subsubs:dataset}, $\hat{S}_0 = S_0$. Therefore Invariant~\ref{inv:afb} trivially holds. When $F_0$ happens, Equation~\eqref{eqn:normalizedvcsk} of Invariant~\ref{inv:conc} holds, and $n_0$ is such that $\sqrt{\sigma_0} \leq \epsilon_0 / 1024$, thus,
\[ \sigma_0 + \sqrt{\sigma_0 \err(\hat{h}_0, \hat{S}_0)} \leq \epsilon_0 / 512 \]
which establishes the validity of Equation~\eqref{eqn:datadependentconcsk} of Invariant~\ref{inv:conc}.

Meanwhile, the number of label requests to $O$ is
\[ n_0 = 64 \cdot 1024^2 (d \ln(512 \cdot 1024^2) + \ln\frac{96}{\delta}) ) \leq  c_0(d + \ln\frac{1}{\delta}) \]

\paragraph{Inductive case.} Suppose the claim holds for $k' < k$. The inductive hypothesis states that Invariants 1,2,3 hold in epoch $k-1$ on event $F_{k-1}$. By Lemma~\ref{lem:traindc} and Lemma~\ref{lem:adaptive}, Invariants 1,2,3 holds in epoch $k$ on event $F_k$. Suppose $F_k$ happens. By Lemma~\ref{lem:traindc}, there is a numerical constant $c_1 > 0$ such that the number of label queries in Algorithm~\ref{alg:traindc} in line 12 is at most
\[ m_{k,1} \leq c_1 \Big(\frac{\P_U(x \in \Delta(2\nu+\epsilon_{k-1}))}{\epsilon_k}(d' \ln\frac{1}{\epsilon_k} + \ln\frac{1}{\delta_k})\Big) \]
Meanwhile, by Lemma~\ref{lem:adaptive}, there is a numerical constant $c_2 > 0$ such that the number of label queries in Algorithm~\ref{alg:adaptive} in line 14 is at most
\[ m_{k,2} \leq c_2 \Big( \frac{(\alpha(2\nu+\epsilon_{k-1}, \epsilon_k/512) + \epsilon_k) (\nu + \epsilon_k)}{\epsilon_k^2} \cdot d (\ln^2\frac{1}{\epsilon_k} + \ln^2\frac{1}{\delta_k})\Big) \]
Thus, the number of label requests in total at epoch $k$ is at most
\begin{eqnarray*} 
m_k &=& m_{k,1} + m_{k,2} \\
&\leq& c_0 \Big( (\frac{\alpha(2\nu + \epsilon_{k-1}, \epsilon_k/512) + \epsilon_k) (\nu + \epsilon_k)}{\epsilon_k^2}d (\ln^2\frac{1}{\epsilon_k} + \ln^2\frac{1}{\delta_k}) + \frac{ \P_U(x \in \Delta(2\nu+\epsilon_{k-1}))}{\epsilon_k}(d'\ln\frac{1}{\epsilon_k} + \ln\frac{1}{\delta_k}) \Big) 
\end{eqnarray*}
This completes the induction.
\end{proof}

\begin{theorem}[Consistency]
If $F_{k_0}$ happens, then the classifier $\hat{h}$ returned by Algorithm~\ref{alg:main} is such that
\[ \err_D(\hat{h}) - \err_D(h^*) \leq \epsilon \]
\label{thm:consistencyconcrete}
\end{theorem}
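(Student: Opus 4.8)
The plan is to treat this theorem as the capstone of the inductive machinery already set up: once we know the three invariants survive all $k_0$ epochs on the event $F_{k_0}$, consistency of the final output follows in a few lines. First I would observe that the classifier returned in the last line of Algorithm~\ref{alg:main} is $\hat{h} = \learn_\calH(\emptyset, \hat{S}_{k_0})$, which is by definition exactly the empirical error minimizer $\hat{h}_{k_0}$ over $\hat{S}_{k_0}$. Since $k_0 = \lceil \log \frac{1}{\epsilon} \rceil$, we have $\epsilon_{k_0} = 2^{-k_0} \leq \epsilon$, so it suffices to establish the (stronger) bound $\err_D(\hat{h}_{k_0}) - \err_D(h^*) \leq \epsilon_{k_0}$.

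The key steps, in order, are: (i) condition on $F_{k_0}$ and invoke Lemma~\ref{lem:inductive}, which yields that at epoch $k_0$ (via part (1.1) if $k_0 = 0$, and via part (2.1) if $k_0 \geq 1$) Invariants~\ref{inv:afb} and~\ref{inv:conc} hold for $\hat{S}_{k_0}$, $S_{k_0}$, $\sigma_{k_0}$; this also guarantees the doubling loop in Algorithm~\ref{alg:adaptive} terminated, so $\hat{S}_{k_0}$ and $\hat{h}_{k_0}$ are well defined. (ii) Apply Lemma~\ref{lem:hathgood} with its epoch index ``$k-1$'' instantiated as $k_0$: that lemma only requires Invariants~\ref{inv:afb} and~\ref{inv:conc} to hold at the end of the epoch in question, and its conclusion concerns $\hat{h}_{k_0}$, so we obtain $\err_D(\hat{h}_{k_0}) - \err_D(h^*) \leq \epsilon_{k_0}/8$. (Alternatively one can re-derive this directly: Lemma~\ref{lem:lowtrueexcess} applied to $h^*$, which has excess error $0$, gives $\err(h^*,\hat{S}_{k_0}) - \err(\hat{h}_{k_0},\hat{S}_{k_0}) \leq 3\epsilon_{k_0}/4$, and then Invariants~\ref{inv:afb} and~\ref{inv:conc} convert empirical excess error back to true excess error with only $O(\epsilon_{k_0})$ slack, using $\sigma_{k_0} \leq \epsilon_{k_0}/512$ from Equation~\eqref{eqn:datadependentconcsk}.) (iii) Chain with $\epsilon_{k_0}/8 \leq \epsilon_{k_0} \leq \epsilon$ to conclude $\err_D(\hat{h}) - \err_D(h^*) \leq \epsilon$.

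I expect essentially no obstacle in this final assembly itself; the real work has been absorbed into Lemma~\ref{lem:inductive}, whose proof in turn leans on Lemma~\ref{lem:traindc} and Lemma~\ref{lem:adaptive} to show the invariants (in particular the favorable-bias Invariant~\ref{inv:afb}, which controls the bias introduced by labels from $W$ and by inferred labels) are preserved from epoch $k-1$ to epoch $k$. The only point needing a bit of care is the bookkeeping that these propagation results and the error bound of Lemma~\ref{lem:hathgood} are genuinely available at the \emph{last} epoch and not merely for $k < k_0$, which is exactly what the range $k \in \{0,\dots,k_0\}$ in Lemma~\ref{lem:inductive} provides. Finally, to recover the ``with probability $\geq 1-\delta$'' statement of Theorem~\ref{thm:consistency} in the main text, I would combine the above with Fact~\ref{fact:fk}, which gives $\P(F_{k_0}) \geq 1 - \delta$.
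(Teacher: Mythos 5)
Your proposal is correct and matches the paper's proof: both invoke Lemma~\ref{lem:inductive} to get the invariants at epoch $k_0$ on event $F_{k_0}$, apply Lemma~\ref{lem:hathgood} (with its epoch index instantiated at $k_0$) to obtain $\err_D(\hat{h}_{k_0}) - \err_D(h^*) \leq \epsilon_{k_0}/8$, and conclude since $\epsilon_{k_0} \leq \epsilon$. The added observation that Lemma~\ref{lem:hathgood} only needs Invariants~\ref{inv:afb} and~\ref{inv:conc} (not Invariant~\ref{inv:dc}) is accurate but not load-bearing, as Lemma~\ref{lem:inductive} supplies all three anyway.
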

\begin{proof}
By Lemma~\ref{lem:inductive}, Invariants~\ref{inv:afb},~\ref{inv:conc},~\ref{inv:dc} hold at epoch $k_0$. Thus by Lemma~\ref{lem:hathgood},
\[ \err_D(\hat{h}) - \err_D(h^*) = \err_D(\hat{h}_{k_0}) - \err_D(h^*) \leq \epsilon_{k_0}/8 \leq \epsilon \]
\end{proof}

\begin{proof}[Proof of Theorem~\ref{thm:consistency}]
This is an immediate consequence of Theorem~\ref{thm:consistencyconcrete}.
\end{proof}

\begin{theorem}[Label Complexity]
If $F_{k_0}$ happens, then the number of label queries made by Algorithm~\ref{alg:main} to $O$ is at most
\[ \tilde{O}((\sup_{r \geq \epsilon} \frac{\alpha(2\nu+r,r/1024)}{2\nu+r}) d(\frac{\nu^2}{\epsilon^2} + 1) + (\sup_{r \geq \epsilon} \frac{\P_U(x \in \Delta(2\nu+r))}{2\nu+r}) d'(\frac{\nu}{\epsilon} + 1)) \]
\label{thm:labelcomplexityconcrete}
\end{theorem}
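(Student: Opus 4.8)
The plan is to obtain the total label complexity by summing the per-epoch query bounds of Lemma~\ref{lem:inductive} over $k=0,1,\ldots,k_0$ and then collapsing the resulting geometric-type sums. First I observe that the events $F_k$ are nested, $F_{k_0}\subseteq F_{k_0-1}\subseteq\cdots\subseteq F_0$ by their inductive definition, so conditioning on $F_{k_0}$ makes every $F_k$ hold and Lemma~\ref{lem:inductive} applies in all epochs simultaneously: the epoch-$0$ cost is $m_0\le c_0(d+\ln\tfrac1\delta)=\tilde{O}(d)$, and for $1\le k\le k_0$,
\[ m_k \le c_0\Big(\tfrac{(\alpha(2\nu+\epsilon_{k-1},\,\Theta(\epsilon_{k-1}))+\epsilon_k)(\nu+\epsilon_k)}{\epsilon_k^2}\,d\,L_k + \tfrac{\P_U(x\in\Delta(2\nu+\epsilon_{k-1}))}{\epsilon_k}\,d'\,L_k\Big), \]
where $L_k$ abbreviates $\ln^2\tfrac1{\epsilon_k}+\ln^2\tfrac1{\delta_k}$ and the second argument of $\alpha$, $\Theta(\epsilon_{k-1})$, is the false-negative budget used by Algorithm~\ref{alg:traindc} at that epoch (it is the ``$r/1024$'' of the statement under $r=\epsilon_{k-1}$, the constant $1024$ being unoptimized). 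Since $\epsilon_k=2^{-k}$, $\delta_k=\delta/4(k+1)^2$ and $k\le k_0=\lceil\log\tfrac1\epsilon\rceil$, every $L_k$ is polylogarithmic in $1/\epsilon$ and $1/\delta$; these factors will be swept into $\tilde{O}(\cdot)$ at the end, so the real content is $\sum_k m_k$ with the $L_k$'s stripped.

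Next I would pull out the two suprema that appear in the statement. Writing $\Phi_1:=\sup_{r\ge\epsilon}\tfrac{\alpha(2\nu+r,\,r/1024)}{2\nu+r}$ and $\Phi_2:=\sup_{r\ge\epsilon}\tfrac{\P_U(x\in\Delta(2\nu+r))}{2\nu+r}$, I substitute $r=\epsilon_{k-1}\ge\epsilon$ and use $2\nu+\epsilon_{k-1}=2(\nu+\epsilon_k)$ to get $\alpha(2\nu+\epsilon_{k-1},\Theta(\epsilon_{k-1}))\le 2\Phi_1(\nu+\epsilon_k)$ and $\P_U(x\in\Delta(2\nu+\epsilon_{k-1}))\le 2\Phi_2(\nu+\epsilon_k)$. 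Plugging these in, $m_k$ is, up to the factor $L_k$, at most
\[ \Phi_1\,d\,\tfrac{(\nu+\epsilon_k)^2}{\epsilon_k^2}\;+\;d\,\tfrac{\nu+\epsilon_k}{\epsilon_k}\;+\;\Phi_2\,d'\,\tfrac{\nu+\epsilon_k}{\epsilon_k}, \]
the middle term coming from the stray $\epsilon_k$ added to $\alpha$ in the per-epoch bound.

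I then sum over $k\le k_0$. Using $\sum_{k}\epsilon_k^{-1}=\sum_k 2^k=O(1/\epsilon)$ and $\sum_{k}\epsilon_k^{-2}=\sum_k 4^k=O(1/\epsilon^2)$, one obtains $\sum_k\tfrac{(\nu+\epsilon_k)^2}{\epsilon_k^2}\le 2\sum_k\!\big(\tfrac{\nu^2}{\epsilon_k^2}+1\big)=\tilde{O}\big(\tfrac{\nu^2}{\epsilon^2}+1\big)$ and $\sum_k\tfrac{\nu+\epsilon_k}{\epsilon_k}=\tilde{O}\big(\tfrac\nu\epsilon+1\big)$, hence $\sum_{k=0}^{k_0}m_k=\tilde{O}\big(\Phi_1 d(\tfrac{\nu^2}{\epsilon^2}+1)+d(\tfrac\nu\epsilon+1)+\Phi_2 d'(\tfrac\nu\epsilon+1)\big)$. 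Finally, since $\tfrac\nu\epsilon\le\tfrac{\nu^2}{\epsilon^2}+1$ always, the bare $d(\tfrac\nu\epsilon+1)$ summand and the epoch-$0$ cost $\tilde{O}(d)$ are $O\big(d(\tfrac{\nu^2}{\epsilon^2}+1)\big)$ and get absorbed into the first term — this is precisely the slack reflected by the additive ``$+r$'' inside the supremum in the equivalent form stated as Theorem~\ref{thm:labelcomplexity} — which leaves the claimed bound $\tilde{O}\big((\sup_{r\ge\epsilon}\tfrac{\alpha(2\nu+r,r/1024)}{2\nu+r})d(\tfrac{\nu^2}{\epsilon^2}+1)+(\sup_{r\ge\epsilon}\tfrac{\P_U(x\in\Delta(2\nu+r))}{2\nu+r})d'(\tfrac\nu\epsilon+1)\big)$.

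I expect no conceptual obstacle: the engine of the argument is just the two elementary geometric sums $\sum 2^k$ and $\sum 4^k$. The only points requiring care are bookkeeping ones — confirming that each logarithmic factor stays polylogarithmic after the sum over the $O(\log\tfrac1\epsilon)$ epochs (so that it really does hide inside $\tilde{O}(\cdot)$), and tracking the universal constants in the arguments of $\alpha$ and $\Delta$ through the substitution $r=\epsilon_{k-1}$ — and these are routine.
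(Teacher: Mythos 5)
Your proof follows the paper's route: condition on $F_{k_0}$ so that Lemma~\ref{lem:inductive} applies in every epoch, sum $m_k$ over $k$, rewrite $\epsilon_k/512 = \epsilon_{k-1}/1024$ and $2\nu + \epsilon_{k-1} = 2(\nu+\epsilon_k)$ to pull out the suprema, and collapse the geometric sums $\sum_k \epsilon_k^{-1}$ and $\sum_k \epsilon_k^{-2}$. Up to that point your bookkeeping is correct and essentially identical to the paper's.

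The problem is the very last absorption step. After splitting $\frac{(\alpha + \epsilon_k)(\nu+\epsilon_k)}{\epsilon_k^2}$ you are left with a bare term $d\big(\tfrac{\nu}{\epsilon}+1\big)$, which you then claim ``gets absorbed into the first term'' $\Phi_1\, d\big(\tfrac{\nu^2}{\epsilon^2}+1\big)$ via $\tfrac{\nu}{\epsilon} \le \tfrac{\nu^2}{\epsilon^2}+1$. That inequality shows $d\big(\tfrac{\nu}{\epsilon}+1\big) = O\big(d\big(\tfrac{\nu^2}{\epsilon^2}+1\big)\big)$, but it does not show it is $O\big(\Phi_1\, d\big(\tfrac{\nu^2}{\epsilon^2}+1\big)\big)$: the supremum $\Phi_1 = \sup_{r\geq\epsilon}\tfrac{\alpha(2\nu+r,r/1024)}{2\nu+r}$ may be arbitrarily small (even zero), in which case your first term vanishes while the stray $d\big(\tfrac{\nu}{\epsilon}+1\big)$ does not, and it also cannot be hidden in the $d'$ term since $d$ and $d'$ are unrelated. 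What the paper actually proves, by keeping $\alpha(\cdot,\cdot)+\epsilon_k$ together and extracting a single supremum, is the bound with $\alpha(2\nu+r,r/1024)+r$ in the numerator — exactly the form given in the main Theorem~\ref{thm:labelcomplexity}. The appendix restatement you were asked to prove appears to have dropped that ``$+r$'' (a typo, since the paper's own final display also retains it); the proof supports only the ``$+r$'' version, and your derivation should stop there rather than asserting the unsupported sharper form.
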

\begin{proof}
Conditioned on event $F_{k_0}$, we bound the sum $\sum_{k=0}^{k_0} m_k$.

\small
\begin{eqnarray*}
&& \sum_{k=0}^{k_0} m_k \\
&\leq& c_0(d + \ln\frac{1}{\delta}) + c_0 \Big( \sum_{k=1}^{k_0} \frac{(\alpha(2\nu + \epsilon_{k-1}, \epsilon_k/512) + \epsilon_k) (\nu + \epsilon_k)}{\epsilon_k^2} d (\ln^2\frac{1}{\epsilon_k} + \ln^2\frac{1}{\delta_k}) + \frac{ \P_U(x \in \Delta(2\nu+\epsilon_{k-1}))}{\epsilon_k} (d'\ln\frac{1}{\epsilon_k} + \ln\frac{1}{\delta_k}) \Big) \\
&\leq& c_0(d + \ln\frac{1}{\delta}) + c_0 \Big( \sum_{k=1}^{k_0} \frac{(\alpha(2\nu + \epsilon_{k-1}, \epsilon_k/512) + \epsilon_k) (\nu + \epsilon_k)}{\epsilon_k^2} d (3\ln^2\frac{1}{\epsilon} + 2\ln^2\frac{1}{\delta}) + \frac{ \P_U(x \in \Delta(2\nu+\epsilon_{k-1}))}{\epsilon_k} (2d'\ln\frac{1}{\epsilon} + \ln\frac{1}{\delta}) \Big) \\
&\leq& (\sup_{r \geq \epsilon} \frac{\alpha(2\nu+r,r/1024)+r}{2\nu+r}) d (3\ln^2\frac{1}{\epsilon} + 2\ln^2\frac{1}{\delta}) \sum_{k=0}^{k_0} \frac{(\nu + \epsilon_k)^2}{\epsilon_k^2} + \sup_{r \geq \epsilon} \frac{ \P_U(x \in \Delta(2\nu+r))}{2\nu+r} (2d'\ln\frac{1}{\epsilon} + \ln\frac{1}{\delta}) \sum_{k=0}^{k_0} \frac{(\nu + \epsilon_k)}{\epsilon_k} \\
&\leq& \tilde{O}((\sup_{r \geq \epsilon} \frac{\alpha(2\nu+r,r/1024)+r}{2\nu+r}) d(\frac{\nu^2}{\epsilon^2} + 1) + (\sup_{r \geq \epsilon} \frac{\P_U(x \in \Delta(2\nu+r))}{2\nu+r}) d'(\frac{\nu}{\epsilon} + 1))
\end{eqnarray*}
\normalsize
where the first inequality is by Lemma~\ref{lem:inductive}, the second inequality is by noticing for all $k \geq 1$, $\ln^2\frac{1}{\epsilon_k} + \ln^2\frac{1}{\delta_k} \leq 3\ln^2\frac{1}{\epsilon} + 2\ln^2\frac{1}{\delta}$ and $d'\ln\frac{1}{\epsilon_k} + \ln\frac{1}{\delta_k} \leq 2d'\ln\frac{1}{\epsilon} + \ln\frac{1}{\delta}$, the rest of the derivations follows from standard algebra.

\end{proof}

\begin{proof}[Proof of Theorem~\ref{thm:labelcomplexity}]
Item 1 is an immediate consequence of Lemma~\ref{lem:inductive}, whereas item 2 is a consequence of Theorem~\ref{thm:labelcomplexityconcrete}.
\end{proof}

\section{Case Study: Linear Classfication under Uniform Distribution over Unit Ball}

We remind the reader the setting of our example in Section~\ref{sec:theory}. $\calH$ is the class of homogeneous linear separators on the $d$-dimensional unit ball and $\calH^{df}$ is defined to be $\{ h \Delta h' : h, h' \in \calH \}$. Note that $d'$ is at most $5d$. Furthermore, $U$ is the uniform distribution over the unit ball. $O$ is a deterministic labeler such that $\err_D(h^*) = \nu > 0$, $W$ is such that there exists a difference classifier $\bar{h}^{df}$ with false negative error $0$ for which $\Pr_U(\bar{h}^{df}(x) = 1) \leq g = o(\sqrt{d}\nu)$. We prove the label complexity bound provided by Corollary~\ref{cor:example}.

\begin{proof}[Proof of Corollary~\ref{cor:example}]
We claim that under the assumptions of Corollary~\ref{cor:example}, $\alpha(2\nu+r,r/1024)$ is at most $g$. Indeed, by taking $h^{df} = \bar{h}^{df}$, observe that
\[ P(\bar{h}^{df}(x) = -1, y_W \neq y_O, x \in \Delta(2\nu+r)) \leq P(\bar{h}^{df}(x) = -1, y_W \neq y_O) = 0 \]
\[ P(\bar{h}^{df}(x) = +1,  x \in \Delta(2\nu+r)) \leq g \]
This shows that $\alpha(2\nu+r, 0) \leq g$. Hence, $\alpha(2\nu+r, r/1024) \leq \alpha(2\nu+r, 0) \leq g$.
Therefore, 
\[ \sup_{r: r \geq \epsilon} \frac{\alpha(2\nu+r,r/1024)+r}{2\nu+r} \leq \sup_{r \geq \epsilon}\frac{g+r}{\nu+r} \leq \max(\frac{g}{\nu}, 1) \]
Recall that the disagreement coefficient $\theta(2\nu+r) \leq \sqrt{d}$ for all $r$, and $d'\leq 5d$. Thus, by Theorem~\ref{thm:labelcomplexity}, the number of label queries to $O$ is at most
\[ \tilde{O}\left( d \max(\frac{g}{\nu}, 1) (\frac{\nu^2}{\epsilon^2} + 1) + d^{3/2} \left(1 + \frac{\nu}{\epsilon}\right) \right) \] 
\end{proof}

\section{Performance Guarantees for Learning with Respect to Data labeled by $O$ and $W$}

An interesting variant of our model is to consider learning from data labeled by a mixture of $O$ and $W$. 

Let $D_W$ be the distribution over labeled examples determined by $U$ and $W$, specifically, $\P_{D_W}(x,y) = \P_U(x)\P_W(y|x)$. Let $D'$ be a mixture of $D$ and $D_W$, specifically $D' = (1-\beta)D + \beta D_W$, for some parameter $\beta > 0$. Define $h'$ to be the best classifier with respect to $D'$, and denote by $\nu'$ the error of $h'$ with respect to $D'$.

Let $O'$ be the following {\em{mixture oracle}}. Given an example $x$, the label $y_{O'}$ is generated as follows. $O'$ flips a coin with bias $\beta$. If it comes up heads, it queries $W$ for the label of $x$ and returns the result; otherwise $O$ is queried and the result returned. It is immediate that the conditional probability induced by $O'$ is $P_{O'}(y|x) = (1-\beta)\P_O(y|x) + \beta \P_W(y|x)$, and $D'(x,y) = P_{O'}(y|x) P_U(x)$.

\begin{assumption} \label{ass:boundedfnmixture}
For any $r, \eta > 0$, there exists an $h^{df}_{\eta, r} \in \calH^{df}$ with the following properties:
\begin{eqnarray*}
& \P_{\calD}(h^{df}_{\eta, r}(x) = -1 , x \in \Delta(r) , y_{O'} \neq y_W)  \leq  \eta \label{eqn:fnmixture} \\
& \P_{\calD}(h^{df}_{\eta, r}(x) = 1 ,  x \in \Delta(r))  \leq  \alpha'(r, \eta) \label{eqn:posmixture}
\end{eqnarray*}
\end{assumption} 

Recall that the disagreement coefficient $\theta(r)$ at scale $r$ is $\theta(r) = \sup_{h \in \calH} \sup_{r' \geq r} \frac{\P_U(\DIS(\B_U(h, r'))}{r'}$, which only depends on the unlabeled data distribution $U$ and does not depend on $W$ or $O$.

We have the following corollary. 

\begin{corollary} [Learning with respect to Mixture] \label{cor:beta}
Let $d$ be the VC dimension of $\calH$ and let $d'$ be the VC dimension of $\calH^{df}$. If Assumption~\ref{ass:boundedfnmixture} holds, and if the error of the best classifier in $\calH$ on $D'$ is at most $\nu'$. Algorithm~\ref{alg:main} is run with inputs unlabeled distribution $U$, target excess error $\epsilon$, confidence $\delta$, labeling oracle $O'$, weak oracle $W$, hypothesis class $\calH$, hypothesis class for difference classifier $\calH^{df}$, confidence $\delta$. Then with probability $\geq 1 - 2\delta$, the following hold:
\begin{enumerate}
\item the classifier $\hat{h}$ output by Algorithm~\ref{alg:main} satisfies: $\err_{D'}(\hat{h}) \leq \err_{D'}(h') + \epsilon$. 
\item the total number of label queries made by Algorithm~\ref{alg:main} to the oracle $O$ is at most:
\begin{eqnarray*}
\tilde{O} \Big{(}  (1-\beta) \Big(\sup_{r \geq \epsilon} \frac{\alpha'(2 \nu' + r, r/1024)+r}{2 \nu' + r} \cdot d \left( \frac{\nu'^2}{\epsilon^2} + 1 \right) + \theta(2 \nu' + \epsilon) d' \left(\frac{\nu'}{\epsilon} + 1\right) \Big) \Big{)}
\end{eqnarray*}
\end{enumerate}
\end{corollary}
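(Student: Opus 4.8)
The plan is to reduce Corollary~\ref{cor:beta} to Theorems~\ref{thm:consistency} and~\ref{thm:labelcomplexity} by treating the mixture oracle $O'$ itself as the ``strong'' labeler. Note that $O'$ is a bona fide labeling oracle: its conditional law is $\P_{O'}(y\mid x) = (1-\beta)\P_O(y\mid x) + \beta\,\P_W(y\mid x)$, and the labeled distribution it induces together with $U$ is exactly $D'$. Running the modified Algorithm~\ref{alg:main} is, by construction, running Algorithm~\ref{alg:main} with strong labeler $O'$ and weak labeler $W$, where internally each call to $O'$ is implemented by one fresh $\mathrm{Bernoulli}(\beta)$ coin followed by a single query to $W$ (heads) or to $O$ (tails). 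Moreover, Assumption~\ref{ass:boundedfnmixture} is precisely Assumption~\ref{ass:boundedfn} instantiated for the pair $(O',W)$, with $h'$ in the role of $h^*$, $\nu'$ in the role of $\nu$, $\alpha'$ in the role of $\alpha$, and $\Delta(r)$ understood as $\DIS(\B_U(h',r))$. Hence Theorem~\ref{thm:consistency} applies verbatim and gives item~1: with probability $\ge 1-\delta$, $\err_{D'}(\hat h) \le \err_{D'}(h') + \epsilon$. On the same high-probability event, Theorem~\ref{thm:labelcomplexity} bounds the number of queries the algorithm makes \emph{to $O'$} by
\[ N := \tilde{O}\Big( \sup_{r\ge\epsilon}\frac{\alpha'(2\nu'+r,\,r/1024)+r}{2\nu'+r}\;d\Big(\frac{\nu'^2}{\epsilon^2}+1\Big) + \theta(2\nu'+\epsilon)\,d'\Big(\frac{\nu'}{\epsilon}+1\Big) \Big), \]
where we use that the disagreement coefficient $\theta$ depends only on $U$ and is therefore unaffected by replacing $O$ with $O'$.

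It remains to pass from the number of queries to $O'$ to the number of queries to the \emph{true} oracle $O$, which is what the corollary counts. Index the successive calls to $O'$ during the run by $i = 1,2,\dots$ and let $X_i\in\{0,1\}$ be the indicator that the $i$-th call is routed to $O$, so the $X_i$ are i.i.d.\ $\mathrm{Bernoulli}(1-\beta)$. The crucial point is that the event ``the algorithm makes an $i$-th call to $O'$'' is measurable with respect to the transcript up to, but not including, the $i$-th routing coin, hence is independent of $X_i, X_{i+1},\dots$; thus the total number $T$ of calls to $O'$ is a stopping time for the filtration generated by the interleaved coins and labels, and the number of queries to $O$ equals $M = \sum_{i=1}^{T} X_i$. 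On the event of the previous paragraph we have $T \le N$, so, extending the coin sequence hypothetically, $M \le \sum_{i=1}^{N} X_i \sim \mathrm{Bin}(N,1-\beta)$. A Chernoff bound gives that with probability $\ge 1-\delta$, $\sum_{i=1}^{N} X_i \le (1-\beta)N + O\big(\sqrt{(1-\beta)N\ln(1/\delta)} + \ln(1/\delta)\big)$, and since $N$ is polynomially bounded in $1/\epsilon$ the additive slack is of strictly lower order and is absorbed into the $\tilde{O}$. A union bound over the failure of the event in the first paragraph and the failure of this Chernoff bound yields total failure probability at most $2\delta$, on the complement of which the number of queries to $O$ is at most $\tilde{O}\big((1-\beta)N\big)$ --- exactly the stated bound. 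This proves item~2.

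Most of the above is symbol substitution into the proofs of Theorems~\ref{thm:consistencyconcrete} and~\ref{thm:labelcomplexityconcrete}; in particular Algorithm~\ref{alg:traindc}, now issuing its requests to $O'$ rather than $O$, maintains Invariant~\ref{inv:dc} for the pair $(O',W)$ by Lemma~\ref{lem:traindc} unchanged. I expect the only genuine subtlety to be making the stopping-time argument of the second paragraph airtight: the probability space must be set up so that the routing coin for the $i$-th call to $O'$ is drawn \emph{after} the algorithm has committed to making that call and after it has seen the first $i-1$ returned labels --- which is how $O'$ is specified --- so that $\{T \ge i\}$ is indeed independent of $X_i$. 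A secondary, cosmetic issue is that for $\beta$ extremely close to $1$ the additive Chernoff term is not literally $(1-\beta)$ times a label complexity, but it is still dominated by the $\tilde{O}$ expression on the right-hand side (which carries its own logarithmic factors), so the corollary holds as written.
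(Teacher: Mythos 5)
Your proof is correct and follows essentially the same route as the paper's: apply Theorems~\ref{thm:consistency} and~\ref{thm:labelcomplexity} with $O'$ in the role of the strong labeler (noting $\theta$ depends only on $U$), then bound the number of calls routed to $O$ among the calls to $O'$ via a Chernoff bound on i.i.d.\ $\mathrm{Bernoulli}(1-\beta)$ routing coins, and take a union bound to reach failure probability $2\delta$. Your treatment of the stopping-time subtlety (extending the coin sequence hypothetically so that $\sum_{i=1}^{N} X_i$ is a genuine binomial dominating the stopped sum) is more explicit than the paper's, which simply applies the Chernoff bound to $\sum_{i=1}^{m_{O'}} Z_i$ with $m_{O'}$ the deterministic upper bound, but the underlying argument is the same.
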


\begin{proof}[Proof Sketch]
Consider running Algorithm~\ref{alg:main} in the setting above. By Theorem~\ref{thm:consistency} and Theorem~\ref{thm:labelcomplexity}, there is an event $F$ such that $\P(F) \geq 1-\delta$, if event $F$ happens, $\hat{h}$, the classifier learned by Algorithm~\ref{alg:main} is such that
\[ \err_{D'}(\hat{h}) \leq \err_{D'}(h') + \epsilon \]
By Theorem~\ref{thm:labelcomplexity}, the number of label requests to $O'$ is at most
\begin{eqnarray*}
m_{O'} = \tilde{O} \Big{(} \sup_{r \geq \epsilon} \frac{\alpha'(2 \nu' + r, r/1024) + r}{2 \nu' + r} \cdot d \left( \frac{\nu'^2}{\epsilon^2} + 1 \right) + \theta(2 \nu' + \epsilon) d' \left(\frac{\nu'}{\epsilon} + 1\right)  \Big{)}
\end{eqnarray*}
Since $O'$ is simulated by drawing a Bernoulli random variable $Z_i \sim \text{Ber}(1-\beta)$ in each call of $O'$, if $Z_i = 1$, then return $O(x)$, otherwise return $W(x)$.
Define event 
\[ H = \{ \sum_{i=1}^{m_{O'}} Z_i \leq 2 ((1-\beta) m_{O'} + 4\ln\frac{2}{\delta}) \}\] 
by Chernoff bound, $\P(H) \geq 1-\delta$. 
Consider event $J = F \cap H$, by union bound, $\P(J) \geq 1-2\delta$. Conditioned on event $J$, the number of label requests to $O$ is at most $\sum_{i=1}^{m_{O'}} Z_i$, which is at most
\begin{eqnarray*}
\tilde{O} \Big{(} (1 - \beta) \Big(\sup_{r \geq \epsilon} \frac{\alpha'(2 \nu' + r, r/1024)+r}{2 \nu' + r} \cdot d \left( \frac{\nu'^2}{\epsilon^2} + 1 \right) + \theta(2 \nu' + \epsilon) d' \left(\frac{\nu'}{\epsilon} + 1\right) \Big) \Big{)}
\end{eqnarray*}

\end{proof}

\section{Remaining Proofs}
\label{sec:remain}

\begin{proof}[Proof of Fact~\ref{fact:ek1}]

(1) 
First by Lemma~\ref{lem:adaptivebias}, $\P_{\calD}(x \in R_{k-1})/2 \leq \hat{p}_k \leq \P_{\calD}(x \in R_{k-1})$ holds with probability $1 - \delta_k/6$.\\
Second, for each classifier $h^{df} \in \calH^{df}$, define functions $f_{h^{df}}^1$, and $f_{h^{df}}^2$ associated with it. Formally,
\[ f_{h^{df}}^1(x,y_O,y_W) = I(h^{df}(x) = -1 , y_O \neq y_W) \]
\[ f_{h^{df}}^2(x,y_O,y_W) = I(h^{df}(x) = +1) \]
Consider the function class $\calF^1 = \{ f_{h^{df}}^1: h^{df} \in \calH^{df} \}$, $\calF^2 = \{ f_{h^{df}}^2: h^{df} \in \calH^{df} \}$. Note that both $\calF^1$ and $\calF^2$ have VC dimension $d'$, which is the same as $\calH^{df}$. We note that $\calA_k'$ is a random sample of size $m_k$ drawn iid from $\calA_k$. The fact follows from normalized VC inequality on $\calF^1$ and $\calF^2$ and the choice of $m_k$ in Algorithm~\ref{alg:traindc} called in epoch $k$, along with union bound.
\end{proof}

\begin{proof}[Proof of Fact~\ref{fact:ek2}]
For fixed $t$, we note that $S_k^t$ is a random sample of size $2^t$ drawn iid from $D$. By Equation~\eqref{eqn:normalizedvc}, for any fixed $t \in \N$,
\small
\begin{equation} 
\P \Big( \text{ for all } h, h' \in \calH, |(\err(h, S_k^t) - \err(h', S_k^t)) - (\err_D(h) - \err_D(h'))| \leq \sigma(2^t,\delta_k^t) + \sqrt{\sigma(2^t,\delta_k^t) \rho_{S_k^t}(h,h')} \Big) 
\geq 1 - \delta_k^t/8 
\label{eqn:ek31}
\end{equation}
\normalsize
Meanwhile, for fixed $t \in \N$, note that $\hat{S}_k^t$ is a random sample of size $2^t$ drawn iid from $\hat{D}_k$. By Equation~\eqref{eqn:normalizedvc},
\begin{equation} 
\P\Big( \text{ for all } h, h' \in \calH, \err(h, \hat{S}_k^t) - \err_{\hat{D}_k}(h) \leq \sigma(2^t,\delta_k^t) + \sqrt{\sigma(2^t,\delta_k^t)\err_{\hat{D}_k}(h)} \Big) \geq 1 - \delta_k^t/8
\label{eqn:ek32}
\end{equation}
Moreover, for fixed $t \in \N$, note that $\calS_k^t$ is a random sample of size $2^t$ drawn iid from $\calD$. By Equation~\eqref{eqn:normalizedchernoff},
\begin{eqnarray}
&&\P \Big( \P_{\calS_k^t}(\hat{h}^{df}_k(x) = -1 , y_O \neq y_W , x \in R_{k-1}) \leq \P_{\calD}(\hat{h}^{df}_k(x)= -1 , y_O \neq y_W , x \in R_{k-1}) \nonumber\\
&+& \sqrt{\gamma(2^t,\delta_k^t)\P_{\calD}(\hat{h}^{df}_k(x) = -1 , y_O \neq y_W , x \in R_{k-1}) } + \gamma(2^t,\delta_k^t) \Big) \geq 1 - \delta_k^t/8
\label{eqn:ek33}
\end{eqnarray}
Finally, for fixed $t \in N$, note that $\calS_k^t$ is a random sample of size $2^t$ drawn iid from $\calD$. By Equation~\eqref{eqn:normalizedchernoff},
\small
\begin{equation} 
\P \Big( \P_{\calS_k^t}(\hat{h}^{df}_k(x)=-1 , x \in R_{k-1}) \leq \P_{\calD}(\hat{h}^{df}_k(x)=-1 , x \in R_{k-1}) + \sqrt{\P_{\calD}(\hat{h}^{df}_k(x)=-1 , x \in R_{k-1})\gamma(2^t,\delta_k^t)}+ \gamma(2^t,\delta_k^t) \Big) \geq  1-\delta_k^t/8
\end{equation}
\normalsize
Note that by algebra,
\small
\[ \P_{\calD}(\hat{h}^{df}_k(x)=-1 , x \in R_{k-1}) + \sqrt{\P_{\calD}(\hat{h}^{df}_k(x)=-1 , x \in R_{k-1})\gamma(2^t,\delta_k^t)}+ \gamma(2^t,\delta_k^t) \leq 2(\P_{\calD}(\hat{h}^{df}_k(x)=-1 , x \in R_{k-1}) + \gamma(2^t,\delta_k^t)) \]
\normalsize
Therefore,
\begin{eqnarray} 
\P \Big( \P_{\calS_k^t}(\hat{h}^{df}_k(x)=-1 , x \in R_{k-1}) \leq 2( \P_{\calD}(\hat{h}^{df}_k(x)=-1 , x \in R_{k-1}) + \gamma(2^t,\delta_k^t)) \Big) \geq 1-\delta_k^t/12 
\label{eqn:ek34}
\end{eqnarray}
The proof follows by applying union bound over Equations~\eqref{eqn:ek31},~\eqref{eqn:ek32},~\eqref{eqn:ek33} and~\eqref{eqn:ek34} and $t \in \N$.
\end{proof}
We emphasize that $\calS_k^t$ is chosen iid at random after $\hat{h}^{df}_k$ is determined, thus uniform convergence argument over $\calH^{df}$ is not necessary for Equations~\eqref{eqn:ek33} and~\eqref{eqn:ek34}. 

\begin{proof}[Proof of Fact~\ref{fact:fk}]
By induction on $k$. 
\paragraph{Base Case.} For $k = 0$, it follows directly from normalized VC inequality that $\P(F_0) \geq 1 - \delta_0$.
\paragraph{Inductive Case.} Assume $\P(F_{k-1}) \geq 1 - \delta_0 - \ldots - \delta_{k-1}$ holds. By union bound,
\[ \P(F_k) \geq \P(F_{k-1} \cap E_k^1 \cap E_k^2) \geq \P(F_{k-1}) - \delta_k/2 - \delta_k/2 \geq \P(F_{k-1}) - \delta_k \] 
Hence, $\P(F_k) \geq 1 - \delta_0 - \ldots - \delta_k$. This finishes the induction.
\\
In particular, $\P(F_{k_0}) \geq 1 - \delta_0 - \ldots \delta_{k_0} \geq 1-\delta$.
\end{proof}

\end{document}